\newcommand{\sign}{\mathrm{sign}}
\newcommand{\iid}{\mathrm{i.i.d.}}
\newcommand{\beq}{\begin{equation}}
\newcommand{\eeq}{\end{equation}}
\newcommand{\polylog}{\mathrm{polylog}}
\newcommand{\Unif}{\mathrm{Unif}}
\newcommand{\sto}{\mathsf{stop}}
\newcommand{\iidsim}{\overset{\scriptsize\iid}{\sim}}
\newcommand{\norm}[1]{\left\|{#1}\right\|}
\newcommand{\R}{\mathbb{R}}
\newcommand{\veps}{\varepsilon}
\renewcommand{\P}{\mathbb{P}}
\renewcommand{\S}{\mathbb{S}}
\newcommand{\NN}{\mathbb{N}}
\newcommand{\bT}{\mathrm{\bf T}}
\newcommand{\bg}{\mathrm{\bf g}}
\newcommand{\bS}{\mathrm{\bf S}}
\newcommand{\xx}{\text{\boldmath $x$}}
\newcommand{\HH}{\text{\boldmath $H$}}
\newcommand{\XX}{\text{\boldmath $X$}}
\newcommand{\ww}{\text{\boldmath $w$}}
\newcommand{\WW}{\text{\boldmath $W$}}
\newcommand{\VV}{\text{\boldmath $V$}}
\newcommand{\vv}{\text{\boldmath $v$}}
\newcommand{\uu}{\text{\boldmath $u$}}
\newcommand{\ee}{\text{\boldmath $e$}}
\newcommand{\hh}{\text{\boldmath $h$}}
\newcommand{\balpha}{\text{\boldmath $\alpha$}}
\newcommand{\bbeta}{\text{\boldmath $\beta$}}
\newcommand{\bone}{\mathrm{\bf 1}}
\newcommand{\bzero}{\mathrm{\bf 0}}
\newcommand{\bsigma}{\text{\boldmath $\sigma$}}
\newcommand{\cF}{\mathcal{F}}
\def\normal{{\sf N}}
\def\id{{\boldsymbol I}}
\newcommand{\eps}{\varepsilon}
\newtheorem{thm}{Theorem}[section]
\newtheorem{lem}{Lemma}[section]
\newtheorem{prop}{Proposition}[section]
\newtheorem{rem}{Remark}[section]
\newcommand{\modif}[1]{{\textcolor{black}{#1}}}
\begin{document}

\title{Sharp Analysis of Power Iteration for Tensor PCA}

\author{\name Yuchen Wu \email wuyc14@wharton.upenn.edu \\
       \addr Department of Statistics and Data Science\\
       University of Pennsylvania\\
       Philadelphia, PA 19104-6303, USA
       \AND
       \name Kangjie Zhou \email kangjie@stanford.edu \\
       \addr Department of Statistics\\
       Stanford University\\
       Stanford, CA 94305-2004, USA}

\editor{Daniel Hsu}

\maketitle

\begin{abstract}%   <- trailing '%' for backward compatibility of .sty file
We investigate the power iteration algorithm for the tensor PCA model introduced in \cite{richard2014statistical}. Previous work studying the properties of tensor power iteration is either limited to a constant number of iterations, or requires a non-trivial data-independent initialization. In this paper, we move beyond these limitations and analyze the dynamics of randomly initialized tensor power iteration up to polynomially many steps. Our contributions are threefold: First, we establish sharp bounds on the number of iterations required for power method to converge to the planted signal, for a broad range of the signal-to-noise ratios. Second, our analysis reveals that the actual algorithmic threshold for power iteration is smaller than the one conjectured \modif{in the literature} by a $\polylog(n)$ factor, where $n$ is the ambient dimension. Finally, we propose a simple and effective stopping criterion for power iteration, which provably outputs a solution that is highly correlated with the true signal. Extensive numerical experiments verify our theoretical results.
\end{abstract}

\begin{keywords}
Spiked model, tensor PCA, power iteration, approximate message passing, non-convex optimization 
\end{keywords}

\section{Introduction}

Tensors are multi-dimensional arrays that have found wide applications across various domains, including neuroscience \citep{wozniak2007neurocognitive,zhou2013tensor}, recommendation systems \citep{rendle2010pairwise,shah2019iterative}, image processing \citep{liu2012tensor,sidiropoulos2017tensor}, community detection \citep{nickel2011three,jing2021community}, and genomics \citep{hore2016tensor,wang2019three}. 
%The multiway nature of tensor offers an effective representation of data with complicated structures, and has spurred the development of numerous research projects \cite{richard2014statistical,liu2020tensor,han2022optimal,zhou2023heteroskedastic}. 
In these applications, oftentimes the tensor exhibits a \emph{low-rank} structure, meaning that the data admits the form of a low-dimensional signal corrupted by random noise. Efficient recovery of this intrinsic low-rank signal not only facilitates various important machine learning tasks, e.g., clustering \citep{zhou2019tensor,luo2022tensor,zhou2023heteroskedastic}, but also spurs the development of important methodologies in the field of scientific computing \citep{khoromskij2011tensor,grasedyck2013literature}. 

In this paper, we study the problem of recovering a low-rank tensor from noisy observations of its entries. Such a problem is also known as Tensor Principal Component Analysis (Tensor PCA) \modif{in the literature}. 
To set the stage, we consider the single-spike model proposed by Richard and Montanari \citep{richard2014statistical}. Under this model, we observe a $k$-th order rank-one tensor corrupted by random noise:
\begin{equation}\label{eq:tensor_PCA}
	\bT = \lambda_n \vv^{\otimes k} + \WW.
\end{equation}
Here, $\lambda_n > 0$ is the signal-to-noise ratio that depends on the ambient dimension $n$, $\vv \in \S^{n-1}$ is a planted signal that lies on the $n$-dimensional unit sphere, and $\WW \in (\R^n)^{\otimes k}$ stands for the random noise that has i.i.d. standard Gaussian entries and is independent of the signal. 
We denote by $k \in \NN_+$ the order of the tensor. The special case $k = 2$ has been well studied by statisticians, where model \eqref{eq:tensor_PCA} reduces to the spiked \modif{Wigner} model \citep{johnstone2001distribution}. In particular, detection and estimation problems under the spiked matrix model have been extensively investigated under various contexts \citep{johnstone2001distribution,baik2005phase,benaych2012singular,lelarge2019fundamental,montanari2022fundamental}, and numerous computationally efficient algorithms have been proposed to recover the signal \citep{journee2010generalized,ma2013sparse,deshp2016sparse,montanari2021estimation}. The main focus of the present paper will be tensors of order $k \ge 3$.

Comparing to its matrix counterpart, tensor problems with order $k \geq 3$ are, in many scenarios, much more challenging. For instance, the spectral decomposition of a matrix or matrix PCA can be performed efficiently using polynomial-time algorithms, while tensor decomposition and tensor PCA are known to be NP-hard \citep{hillar2013most}. Despite the NP-hardness of tensor PCA, researchers have designed scalable algorithms that are capable of recovering the planted signal, if one additionally assumes that the data are random and follow natural distributional assumptions. This random data perspective not only simplifies analysis in many scenarios, but also offers valuable insights on the computational complexity and estimation accuracy of the proposed methodologies from an average-case point of view. 
Exemplary algorithms that come with average-case theoretical guarantees include iterative algorithms \citep{ma2013sparse,anandkumar2017homotopy,zhang2018tensor,ben2020algorithmic,han2022optimal,huang2022power}, sum-of-squares (SOS) algorithms \citep{hopkins2015tensor,hopkins2016fast,potechin2017exact,kim2017community}, and spectral algorithms \citep{montanari2018spectral,xia2019polynomial,cai2019nonconvex}. 

Another striking feature of the tensor PCA model \eqref{eq:tensor_PCA} is that it exhibits the so-called \emph{computational-to-statistical gap}, meaning that there exists regime of signal-to-noise ratios within which it is information theoretically possible to recover the planted signal, while no polynomial-time algorithms are known \citep{berthet2013optimal,arous2019landscape,gamarnik2021overlap,dudeja2021statistical}.  
A sequence of works have 
established that the information-theoretic threshold of tensor PCA \eqref{eq:tensor_PCA} is of order $\Theta(\sqrt{n})$ \citep{richard2014statistical,lesieur2017statistical,chen2019phase}.   
On the other hand, the algorithmic threshold---the minimal signal-to-noise ratio above which recovery is efficiently achievable---is conjectured to be $\Theta (n^{k / 4})$ \citep{richard2014statistical}.\footnote{Under the scaling of \cite{richard2014statistical}, these two thresholds are $\Theta (1)$ (constant order) and $\Theta (n^{(k - 2) / 4})$, respectively.} This critical threshold, in fact, has been achieved by various algorithms which originate from different ideas \citep{hopkins2015tensor,anandkumar2017homotopy,biroli2020iron}.

In spite of the strong theoretical guarantees achieved by strategically crafted algorithms, in practice, it is often preferable to resort to simple iterative algorithms.
Among them, \emph{tensor power iteration} has been extensively applied to solving a large number of tensor problems, e.g., tensor decomposition and tensor PCA \citep{kolda2011shifted,richard2014statistical,wang2016online,an2017analyzing,huang2022power,wu2023lower}. 
For tensor PCA, doing power iteration is equivalent to running projected gradient ascent on a
non-convex polynomial objective function with infinite step size. 
Towards understanding the dynamics of this algorithm, in \cite{richard2014statistical} the authors proved that tensor power iteration with random initialization converges rapidly to the true signal provided that $\lambda_n  \gtrsim n^{k / 2}$. They also employed a heuristic argument to suggest that the necessary and sufficient condition for convergence is actually $\lambda_n \gtrsim n^{(k - 1) / 2}$.\footnote{Again, under their scaling, these two conditions should be  $\lambda_n  \gtrsim n^{(k - 1) / 2}$ and $\lambda_n  \gtrsim n^{(k - 2) / 2}$, respectively.} Later, a more refined analysis was carried out by \cite{huang2022power}. The authors showed that tensor power iteration with a constant number of iterates succeeds when $\lambda_n \gtrsim n^{(k - 1) / 2 + \eps}$, and fails when $\lambda_n \lesssim n^{(k - 1) / 2 - \eps}$ for an arbitrarily small positive constant $\eps$, thus partially confirming the $n^{(k - 1) / 2}$ threshold conjectured by \cite{richard2014statistical}. However, their analysis is restricted to a fixed number of power iterates, and therefore fails to capture the dynamics of tensor power iteration when the number of iterations grows with the input dimension. Further, they did not characterize the asymptotic behavior of tensor power iteration when $\lambda_n \asymp n^{(k - 1) / 2}$. Therefore, a complete picture is still lacking. As a side note, past works have also considered 
tensor power iteration with a warm start depending on some extra side information \citep{richard2014statistical,huang2022power}. However, how to obtain such initialization in practice remains elusive. In this paper, we will only consider tensor power iteration with random initialization independent of the data.

\subsection{Our contribution}

This paper is devoted to establishing a more comprehensive picture of tensor power iteration starting from a random initialization. Our contributions are summarized below. 
\paragraph{Algorithmic threshold.} 
First, we give a partial answer to the open problem in \cite{richard2014statistical}.
%, by establishing that the threshold conjectured in their paper requires at least a poly-logarithmic factor correction. 
%First, we give an affirmative answer to the open problem in \cite{richard2014statistical}, by establishing that the threshold conjectured in their paper is correct modulus a poly-logarithmic factor. 
To be concrete, our results imply that tensor power iteration with a random initialization provably converges to the planted signal in polynomially many steps, requiring only $\lambda_n \gtrsim n^{(k - 1) / 2} (\log n)^{-C}$ for some positive constant $C$ that depends only on $k$.
Recall that the conjectured threshold in \cite{richard2014statistical} is $\Theta( n^{(k - 1) / 2})$, our conclusion actually shows that the true phase transition for power iteration occurs at a slightly lower signal-to-noise ratio than the one conjectured in \cite{richard2014statistical}. In order to establish such a result, we introduce the concept of \emph{alignment} to measure the correlation between the iterates obtained from power iteration and the planted signal, and show that the evolution of this alignment can be well approximated by a low-dimensional \emph{polynomial recurrence process}. We then conduct a precise analysis on the dynamics of this process to establish the convergence of power iteration for tensor PCA.

\paragraph{Number of iterations required for convergence.} Second, we present a sharp characterization of the number of iterations required for convergence, for $\lambda_n$ ranging from $\ll n^{(k - 1) / 2}$ to $\gg n^{(k - 1) / 2} $. To be precise, when $\gamma_n := n^{-(k - 1) / 2} \lambda_n \in [c, \modif{n^{o(1)}}]$ for $c > 0$ an arbitrary constant that does not depend on $n$, we show that $(1 + o_n(1)) \log_{k - 1} (\log_{k - 1} n / \log_{k - 1} \gamma_n)$ iterations are both necessary and sufficient for convergence to occur. In a different weak signal regime where $(\log n)^{-(k - 2) / 2} \ll \gamma_n \ll 1$, we establish upper and lower bounds on the number of iterations that are of the same order of magnitude on a logarithmic scale (see \cref{thm:conv_power_iter} for a formal statement). 
\modif{Our analysis is similar to that of \cite{huang2022power}. However, the technical treatment in \cite{huang2022power} is only able to analyze the dynamics of tensor power iteration up to a constant number of steps, which is not sufficient if the initialization is random. To overcome this difficulty, in this work we develop a novel Gaussian conditioning lemma (\cref{lemma:Gaussian-conditioning}), which allows us to accurately analyze the dynamics of tensor power iteration up to polynomially many steps.} To the best of our knowledge, this is the first result that studies the dynamics of tensor power iteration beyond a constant number of iterations under the setting of model \eqref{eq:tensor_PCA}.
%\modif{As we will see, our analysis is similar to that for the Approximate Message Passing (AMP) algorithm. 
%Analyzing the AMP algorithm beyond a constant number of iterations is a challenging and ongoing research direction \citep{rush2018finite,li2022non,li2023approximate,jones2024diagram}. 
%None of the previous works in this direction covers the tensor PCA model, and we provide the first result of this type under this canonical model.}

\paragraph{Stopping criterion.} We also propose a stopping criterion that allows us to decide when to terminate the iteration in practice. Our proposal is simple, effective, and comes with rigorous theoretical guarantee. To summarize, the proposed stopping criterion finds an iterate that with high probability correlates well with the hidden spike. Besides, if we implement the proposed stopping rule, then the actual number of power iteration we implement matches well with the upper and lower bounds we have established, emphasizing both accuracy and efficiency of our proposal.  

\paragraph{Gaussian conditioning beyond constant steps.} The tool that we employ to establish the above results is based on the Gaussian conditioning technique, which has been widely applied to analyze the Approximate Message Passing (AMP) algorithm \citep{bayati2011dynamics} as well as many other iterative algorithms. Prior art along this line of research mostly studies only a constant number of iterations. 
Encouragingly, 
recent years have witnessed significant progress towards generalizing such Gaussian conditioning type analysis to accommodate settings that allow the number of iterations to grow simultaneously with the input dimension \citep{rush2018finite,li2022non,li2023approximate,wu2023lower}. 
Our work contributes to this active field of research by establishing the first result of this kind under the tensor PCA model \eqref{eq:tensor_PCA}. 
From a technical perspective, we believe our results not only push forward
the development of AMP theory, but also enrich the toolbox to analyze general iterative
algorithms.

% Examining the proof of \cref{thm:conv_power_iter}, we see that under our assumptions, when starting from a random initialization, such  correlation remains $o_P(1)$ for any constant number of iterations. 
% As a consequence, results from \cite{huang2022power} are not sufficient here as they only analyze a constant number of iterations. }

\paragraph{Future directions.}
\modif{One interesting future direction is to generalize our convergence analysis to sub-Gaussian tensors, which requires developing non-asymptotic AMP-type analysis for sub-Gaussian random ensembles beyond a constant number of iterations, and is highly non-trivial. 
Progress in this direction is made only recently by \cite{jones2024diagram}, and their main results are based on some complicated combinatorial arguments. }

\modif{Another fascinating future direction would be to extend our main results to the multi-rank case, namely, the observed tensor $\bT$ is a rank-$r$ tensor perturbed by random Gaussian noise for some $r > 1$. Under this setting, we believe that the same Gaussian conditioning technique can be employed to analyze the (properly defined) alignment between the power iterates and the planted rank-$r$ signal, and we leave that for future work. }

\subsection{Organization}

The rest of this paper is organized as follows. \cref{sec:main} formulates the framework and gives our main result. We present the proof of the main theorem in \cref{sec:proof-sketch}, while deferring the proofs of several auxiliary lemmas  to appendices. In \cref{sec:experiments} we report numerical experiments that support our theorems.

%\begin{itemize}
%	\item Tensors are high order arrays, define tensor. 
%	\item Tensors have applications in many fields. Relation with clustering, etc. 
%	\item Many important tensors are low-rank. 
%	\item Tensors are often contaminated by random noise. Focus on the spiked tensor model introduced by Andrea. 
%	\item When $k = 2$, this is the spiked matrix model. 
%	\item Discussion of computational hardness. State many thresholds. State SOS algorithms. Simple, iterative algorithms are really great: efficient (time + memory) and often easy to implement
%In the case of tensors, we have
%Power iteration
%Gradient descent on non-convex loss
%In some cases, these seem to work well in practice (see refs in prior works if they cite any applied work). But it’s mysterious why.
%	\item In practice, it is preferable to have simple, iterative algorithms, and we want to understand them. 
%	\item Say previous thresholds for power iteration. We show that it is not tight. 
%	\item Application in statistical inference and testing? Analyze the distribution of test statistics? 
%\end{itemize}

\subsection{Notation}\label{sec:notation}
Throughout the proof, with a slight abuse of notation, we use letters $c, C$ to represent various
constants (which can only depend on the tensor order $k$), whose values might not necessarily be the same in each occurrence.
For a matrix $\bS \in \R^{d \times m}$, we denote by $\Pi_{\bS} \in \R^{d \times d}$ the projection matrix onto the column space of $\bS$, and let $\Pi_{\bS}^{\perp} := \id_d - \Pi_{\bS}$. For $n \in \NN_+$, we define $[n] = \left\{1,2,\cdots, n \right\}$.
For two sequences of positive numbers $\{a_n\}_{n \in \NN_+}$ and $\{b_n\}_{n \in \NN_+}$, we say $a_n \gtrsim b_n$ if there exists a positive constant $c$, such that $a_n \geq c\,b_n$, we say $a_n = o_n(b_n)$ if $a_n / b_n \to 0$ as $n \to \infty$, and we say $a_n \gg b_n$ if $a_n / b_n \to \infty$ as $n \to \infty$.  
%When we say for $a_n$ sufficiently large property $\mathsf{P}$ holds, it means that there exists a threshold $n_{\ast} > 0$ that is independent of $n$ (but might depends on other constants in the context, for instance, the order of the tensor), such that for all $n \geq n_{\ast}$ property $\mathsf{P}$ holds. 

%Let $f$ be a real-valued function on $\S^{n-1}$, then we denote by $\nabla f$ and $\nabla^2 f$ the Euclidean gradient and Hessian of $f$, and denote by $\operatorname{grad} f$ and $\operatorname{Hess} f$ the Riemannian gradient and Hessian of $f$.

\section{Main results} 
\label{sec:main}

We summarize in this section our main results. We first give a formal definition of tensor power iteration from a random initialization. 
Then, we present our main theorem in which we determine the regime of convergence and characterize the number of iterations required.    
We also give a stopping criterion that determines when to terminate the power iteration. 
%Finally, we extend our analysis to general rank-$r$ spiked tensor model. 

\subsection{Tensor power iteration}

We denote by $\vv^0 = \tilde{\vv}^0 \sim \Unif(\S^{n - 1})$ the random initialization that is independent of $\bT$. Tensor power iteration initialized at $\vv^0$ is defined recursively as follows:
\begin{align}
	\label{eq:TPI}
	\begin{split}
		\vv^{t+1} =\, & \bT [ ( \tilde{\vv}^t )^{\otimes (k-1)} ] = \lambda_n \langle \vv, \tilde{\vv}^t \rangle^{k - 1} \vv + \WW [ ( \tilde{\vv}^t )^{\otimes (k-1)} ], \\
		\tilde{\vv}^{t + 1} =\, & \frac{\vv^{t + 1}}{\|\vv^{t + 1}\|_2},
	\end{split}
\end{align}
where $\WW [ ( \tilde{\vv}^t )^{\otimes (k-1)} ]$ is an $n$-dimensional vector whose $i$-th entry is $\langle \WW, \ee_i \, \otimes \, (\tilde{\vv}^t)^{\otimes (k - 1)}  \rangle$.
Here, $\ee_i$ is an $n$-dimensional vector that has the $i$-th entry being one and all the rest being zero. 

As a side remark, iteration \eqref{eq:TPI} can be regarded as projected gradient ascent with infinite step size for the following constrained optimization problem: 
\begin{align*}
	\mbox{maximize }\langle \bT, \bsigma^{\otimes k} \rangle, \quad \mbox{subject to }\bsigma \in \S^{n - 1}. 
\end{align*}

\subsection{Convergence analysis}

Next, we study the number of iterations required for algorithm \eqref{eq:TPI} to converge. 
To this end,
we first define the convergence criterion. 
For any fixed positive constant $\delta$, let
\begin{align}
	\label{eq:T-delta-conv}
	T_{\delta}^{\rm conv} := \min \left\{t \in \NN_+: |\langle \tilde{\vv}^t, \vv \rangle| \geq 1 - \delta \right\}.
\end{align} 
Our main result provides upper and lower bounds on $T_{\delta}^{\rm conv}$, in a signal-to-noise ratio regime when we simultaneously have $\gamma_n \gg (\log n)^{-(k - 2) / 2}$ and $\gamma_n = \modif{n^{o(1)}}$.

\begin{thm}\label{thm:conv_power_iter}
	
	Recall that $\gamma_n = n^{-(k - 1) / 2}\lambda_n$. Assume $\gamma_n \gg (\log n)^{-(k - 2) / 2}$ and $\gamma_n = \modif{n^{o(1)}}$. Then for any fixed $\delta, \eta > 0$, with probability $1 - o_n(1)$ we have 
	% 
	%Let $c_k := k / 2 - 1.01$. There exists a constant $c_k > 0$, such that for any sufficiently large $n \in \mathbb{N}$, any $\gamma = \gamma_n \in [(\log n)^{-c}, \exp(o(\log n))]$, and any $\eta > 0$, with high probability one has
	\begin{align}
		\label{eq:lower-upper}
		\begin{split}
			T_{\delta}^{\rm conv} \ge \, & \max \left\{ \exp \left( \frac{1 - \eta}{2} \left( \frac{C_k}{\gamma_n} \right)^{2/(k-2)} \right), \ (1 - \eta) \log_{k-1} \frac{\log_{k-1} n}{\max\{\log_{k-1} \gamma_n, 1\}} \right\}, \\
			T_{\delta}^{\rm conv} \le \, & \exp \left( \frac{1 + \eta}{2} \left( \frac{1}{\gamma_n} \right)^{2/(k-2)} \right) + (1 + \eta) \log_{k-1} \frac{\log_{k-1} n}{\max\{\log_{k-1} \gamma_n, 1\}},
		\end{split}
	\end{align}
	where $C_k = (k-2)^{k-2}/(k-1)^{k-1}$. %Further, the number of power iterations needed for convergence lies in the interval $[T_{\veps}, T_{\veps} + 1]$ with high probability.
\end{thm}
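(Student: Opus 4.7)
The plan is to track the rescaled alignment $\beta_t := \sqrt{n}\,\langle\tilde{\vv}^t, \vv\rangle$, which starts as $\beta_0 \approx \cN(0,1)$ under the uniform random initialization, and to reduce its evolution in the weak-alignment regime to a scalar polynomial-plus-noise recurrence
\[
\beta_{t+1} \;=\; \gamma_n\,\beta_t^{k-1} + Z_t + (\text{error}_t),
\]
where the $Z_t$'s are (approximately) i.i.d.\ standard Gaussians. The two terms appearing in both bounds on $T_\delta^{\rm conv}$ then correspond cleanly to two phases of this recurrence: (a) \emph{escape} from the stochastic/noise-dominated regime, which yields the $\exp(\cdot)$ term; and (b) once $\beta_t$ has crossed the unstable fixed point $\gamma_n^{-1/(k-2)}$ of the deterministic map $\beta\mapsto\gamma_n\beta^{k-1}$, \emph{fast geometric growth} of $\log\beta_t$ at rate $k-1$, which yields the $\log_{k-1}$ term. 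A short transient in the signal-dominated regime (where $|\alpha_t|$ is of order 1) then contracts the alignment to within $1-\delta$ in $O(1)$ further steps.

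The core reduction is a Gaussian-conditioning argument in the spirit of the analysis of AMP. Conditioning on $\cF_t := \sigma(\tilde\vv^0,\ldots,\tilde\vv^{t-1})$ and letting $\bS_t$ be the matrix whose columns are the past iterates, I would decompose
\[
\WW\bigl[(\tilde\vv^{t})^{\otimes(k-1)}\bigr] \;=\; \Pi_{\bS_t}\,\WW\bigl[(\tilde\vv^{t})^{\otimes(k-1)}\bigr] \;+\; \Pi_{\bS_t}^{\perp}\,\WW\bigl[(\tilde\vv^{t})^{\otimes(k-1)}\bigr].
\]
The parallel part is a deterministic (linear) function of previously revealed contractions of $\WW$, augmented by Onsager-type corrections arising from index-collisions between the $\tilde\vv^s$ in different slots of the tensor; the orthogonal part is, conditionally, a fresh centered Gaussian with (near-identity) covariance on $\spann(\bS_t)^\perp$. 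Projecting onto $\vv$ gives $\langle\vv^{t+1},\vv\rangle = \lambda_n\alpha_t^{k-1} + G_t + (\text{error})$ with $G_t$ conditionally $\cN(0,1)$; normalizing by $\|\vv^{t+1}\|_2 \approx \sqrt{n}$ yields the scalar recurrence above with $Z_t = G_t$.

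For the upper bound, Gaussian anti-concentration gives $\P(\beta_{t+1}\ge \gamma_n^{-1/(k-2)} + c) \gtrsim \exp(-(1+o(1))\gamma_n^{-2/(k-2)}/2)$ per step, so within $\exp\bigl(\tfrac{1+\eta}{2}\gamma_n^{-2/(k-2)}\bigr)$ approximately-independent trials the process crosses the unstable fixed point with high probability. Thereafter the log-linearized recurrence $\log\beta_{t+1}\approx (k-1)\log\beta_t + \log\gamma_n$ reaches $\beta_t \asymp \sqrt{n}$ within $(1+\eta)\log_{k-1}\bigl(\log_{k-1} n/\log_{k-1}\gamma_n\bigr)$ further steps. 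The lower bound rests on the elementary identity
\[
\max_{c>0}\bigl(c-\gamma_n c^{k-1}\bigr) \;=\; \bigl(C_k/\gamma_n\bigr)^{1/(k-2)}, \qquad C_k = (k-2)^{k-2}/(k-1)^{k-1},
\]
which identifies the maximum deficit that the noise must supply in order to push $\beta$ upward along any monotone escape trajectory. A Gaussian tail bound, combined with a union bound over $T$ iterations, then gives $\P\bigl(\max_{t\le T}\beta_t \ge \gamma_n^{-1/(k-2)}\bigr) = o(1)$ whenever $T \le \exp\bigl(\tfrac{1-\eta}{2}(C_k/\gamma_n)^{2/(k-2)}\bigr)$; the log-term lower bound is immediate from the same log-linear recurrence.

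The hardest part is sustaining the Gaussian-conditioning approximation for polynomially --- indeed, possibly as many as sub-exponentially in $\gamma_n^{-2/(k-2)}$ --- many iterations, rather than only constantly many as in classical AMP analyses. As $t$ grows the iterates become increasingly correlated with $\WW$, and both the Onsager-type correction terms and the residual correlations between $\bS_t$ and $\tilde\vv^t$ tend to accumulate. I would handle this by maintaining a high-probability "good event" on which (i) successive iterates remain quantitatively near-orthogonal, (ii) the Onsager-type corrections are uniformly $o(1)$ in $t$, and (iii) $\|\vv^{t+1}\|_2$ concentrates sharply on $\sqrt{n}$; a careful union bound over the full polynomial/sub-exponential time horizon then propagates the recurrence uniformly in $t$ throughout the range appearing in the bounds.
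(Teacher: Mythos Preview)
Your proposal is essentially correct and follows the same route as the paper: reduce via Gaussian conditioning to a scalar recurrence (the paper tracks $\alpha_t=\gamma_n\beta_{t-1}^{k-1}$, giving the equivalent form $\alpha_{t+1}\approx\gamma_n(\alpha_t+Z_t)^{k-1}$), then run the same two-phase analysis --- escape past the unstable fixed point via Gaussian anti-concentration, followed by $(k-1)$-ary growth of $\log\beta_t$ --- and your derivation of $C_k$ from $\max_{c>0}(c-\gamma_n c^{k-1})$ matches the paper's lower-bound proposition exactly.

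One technical point deserves attention: your Gaussian conditioning projects the \emph{output} vector $\WW[(\tilde\vv^t)^{\otimes(k-1)}]$ onto $\spann(\bS_t)$ and its complement, but this does not by itself yield a conditionally Gaussian orthogonal part --- the constraints imposed by past iterations are full $n$-dimensional slices of $\WW$, not projections onto $\spann(\bS_t)$. The paper instead expands the \emph{input} tensor $(\tilde\vv^t)^{\otimes(k-1)}$ in the orthonormal basis $\{\uu_{i_1}\otimes\cdots\otimes\uu_{i_{k-1}}\}$ built from the Gram--Schmidt vectors $\uu_i=\tilde\vv^i_\perp/\|\tilde\vv^i_\perp\|_2$, and shows that $\WW[\uu_{i_1}\otimes\cdots\otimes\uu_{i_{k-1}}]$ is a fresh $\cN(\bzero,\id_n)$ whenever at least one index equals $t$. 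Your phrase ``index-collisions between the $\tilde\vv^s$ in different slots'' suggests you sense this, but the input-side decomposition is what makes the error terms explicit: they are governed by $\|\tilde\vv^t_\parallel\|_2$, which the paper shows inductively stays $O(n^{\eps-1/2})$ uniformly for $t\le n^{1/(2k-2)}$ in the weak-alignment phase. That is the concrete mechanism behind the ``good event'' you outline, and it is what buys the uniform-in-$t$ control over the full polynomial horizon.
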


\begin{rem}
	When $\gamma_n \gtrsim 1$, \cref{thm:conv_power_iter} implies that 
 \begin{equation}
     \frac{T_{\delta}^{\rm conv}}{\log_{k-1} \frac{\log_{k-1} n}{\max\{\log_{k-1} \gamma_n, 1\}}} \overset{\P}{\to} 1,
 \end{equation} 
 which gives a sharp characeterization of the number of steps required for convergence. On the other hand, as $\gamma_n$ drops below the constant level, $T_{\delta}^{\rm conv}$ grows drastically, but is still polynomial in $n$ provided that $\gamma_n \gg (\log n)^{-(k - 2) / 2}$. In addition, based on the upper and lower bounds presented in the theorem, we conjecture that the time complexity of tensor power iteration is super-polynomial when $\gamma_n \ll (\log n)^{-(k - 2) / 2}$. \modif{However, due to the use of the Gaussian conditioning scheme (\cref{lemma:Gaussian-conditioning}), our current approach can only accurately analyze the dynamics of tensor power iteration up to polynomially many steps. Therefore, justifying this conjecture would require the development of new theoretical tools, which we leave for future work. Further, our \cref{thm:conv_power_iter} assumes $\gamma_n = n^{o(1)}$, since \cite{huang2022power} already proved that a constant number of power iterations is sufficient to recover the true signal when $\gamma_n \gtrsim n^{c}$, for any constant $c > 0$ that does not depend on $n$. }
\end{rem}

We present the proof of \cref{thm:conv_power_iter} in \cref{sec:proof-sketch}, with proofs of auxiliary lemmas and propositions deferred to the appendices.

\subsection{Stopping criterion}
\label{sec:stopping-rule}

\cref{thm:conv_power_iter} gives lower and upper bounds on the number of iterations required for convergence. However, the theorem falls short of providing practical guidance regarding when should we terminate the power iteration, as we do not assume we know any prior information about the signal-to-noise ratio $\lambda_n$, or equivalently $\gamma_n$. 

To tackle this issue, we propose in this section a simple while effective stopping criterion that with high probability finds an iterate that aligns well with the hidden spike. In addition, we give upper and lower bounds on the actual number of power iterations we implement if we follow the proposed stopping criterion, which matches that introduced in \cref{thm:conv_power_iter}. 

To give a high level description, we propose to terminate the algorithm if we find any two consecutive iterates being moderately correlated with each other. To be precise, we define
\begin{align}
\label{eq:T-stop}
	T_{\sto} := \inf\left\{t \in \NN_+: \big|\langle \tilde\vv^{t - 2}, \tilde \vv^{t - 3} \rangle \big| \geq 1 / 2 \right\}. 
\end{align}
We shall output $\tilde{\vv}^{T_{\sto}}$ as an estimate of $\vv$. We give theoretical guarantee for our approach in the theorem below, which will be proved in \cref{sec:proof_stop}.
\begin{thm}
	\label{thm:stopping}
	We assume the conditions of \cref{thm:conv_power_iter}. Then, for any positive constant $\delta$, with probability $1 - o_n(1)$ we have $|\langle \tilde{\vv}^{T_{\sto}}, \vv \rangle| \geq 1 - \delta$. In addition, with high probability we still get the upper and lower bounds as indicated in \cref{eq:lower-upper} if we replace $T_{\delta}^{\rm{conv}}$ with $T_{\sto}$. 
\end{thm}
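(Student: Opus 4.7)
The plan is to show that the stopping time $T_{\sto}$ sandwiches $T_{\delta'}^{\rm conv}$ (for appropriate constants $\delta'$) up to an additive $O(1)$, and then invoke \cref{thm:conv_power_iter} to conclude both the upper/lower bounds and the final alignment guarantee. The stopping rule is essentially a sample-based detector for the event ``we have entered the high-alignment phase''.

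The main technical lemma I would establish first is a \emph{two-step correlation identity}: writing $\alpha_t = \langle \tilde{\vv}^t, \vv\rangle$,
\begin{align*}
\langle \tilde{\vv}^{t}, \tilde{\vv}^{t-1}\rangle \;=\; \alpha_t \, \alpha_{t-1} + \xi_t,
\end{align*}
where $|\xi_t| = o_n(1)$ with high probability, uniformly over $t \le T_{\sto}$. This is a direct consequence of the Gaussian conditioning machinery already deployed in the proof of \cref{thm:conv_power_iter}: conditional on the $\sigma$-algebra generated by $\tilde{\vv}^0,\ldots,\tilde{\vv}^{t-1}$, the random part of $\vv^t = \WW[(\tilde{\vv}^{t-1})^{\otimes (k-1)}] + \lambda_n \alpha_{t-1}^{k-1} \vv$ decomposes into a deterministic projection onto $\spann(\tilde{\vv}^0,\ldots,\tilde{\vv}^{t-1},\vv)$ plus an independent Gaussian vector orthogonal to that span, whose inner product with $\tilde{\vv}^{t-1}$ is of order $1/\sqrt{n}$ after normalization. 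The uniformity over polynomially many steps is precisely what the extended Gaussian conditioning framework of \cref{thm:conv_power_iter} provides, so no new machinery is needed; only a careful bookkeeping of error terms combined with a union bound.

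Given this lemma, the two bounds are proved as follows. For the \textbf{lower bound}, if $T_{\sto} = t$ then by definition $|\langle \tilde{\vv}^{t-2}, \tilde{\vv}^{t-3}\rangle| \ge 1/2$, which by the correlation identity forces $|\alpha_{t-2}\, \alpha_{t-3}| \ge 1/2 - o_n(1)$, hence $\min(|\alpha_{t-2}|, |\alpha_{t-3}|) \ge 1/\sqrt{2} - o_n(1)$. Fixing any $\delta' \in (0, 1-1/\sqrt{2})$, this gives $t - 3 \ge T_{\delta'}^{\rm conv}$ with high probability, so $T_{\sto} \ge T_{\delta'}^{\rm conv} + 3$. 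For the \textbf{upper bound}, I would use the stability of the convergence phase proved en route to \cref{thm:conv_power_iter}: once $|\alpha_t| \ge 1-\delta''$ for some sufficiently small constant $\delta''$, all subsequent $|\alpha_s|$ remain $\ge 1 - \delta''$, so the correlation identity gives $|\langle \tilde{\vv}^{s+1}, \tilde{\vv}^s\rangle| \ge (1-\delta'')^2 - o_n(1) > 1/2$ eventually, triggering $T_{\sto} \le T_{\delta''}^{\rm conv} + O(1)$. Since the bounds in \eqref{eq:lower-upper} do not depend on $\delta$, the same upper and lower bounds transfer to $T_{\sto}$.

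For the \textbf{final alignment} $|\langle \tilde{\vv}^{T_{\sto}}, \vv\rangle| \ge 1-\delta$: at time $T_{\sto}$ we know both $|\alpha_{T_{\sto}-3}|$ and $|\alpha_{T_{\sto}-2}|$ are at least $1/\sqrt{2} - o_n(1)$. In the regime $\gamma_n \gg (\log n)^{-(k-2)/2}$, one power iteration starting from a constant-order alignment $\alpha$ produces a new alignment $\alpha' = \lambda_n \alpha^{k-1}/\|\vv^{t+1}\|$ which is $1 - o_n(1)$, because the signal term $\lambda_n \alpha^{k-1}$ dominates the noise term $\|\WW[(\tilde{\vv})^{\otimes (k-1)}]\| = \Theta(\sqrt{n})$ by a factor tending to infinity. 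Hence the two additional iterations taking us from step $T_{\sto} - 2$ to step $T_{\sto}$ push $|\alpha_{T_{\sto}}|$ above $1 - \delta$ for any fixed $\delta > 0$.

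The hard part is the uniform control of the error term $\xi_t$ over all $t \le T_{\sto}$, which is itself a random (possibly polynomial in $n$) stopping time. This is exactly where the non-asymptotic, many-step Gaussian conditioning analysis underlying \cref{thm:conv_power_iter} is used; once that analysis is in hand, the remaining arguments above are essentially algebraic manipulations of the deterministic recursion for $\alpha_t$ plus union bounds over the iteration range already used in the proof of \cref{thm:conv_power_iter}.
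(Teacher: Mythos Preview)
Your proposal is correct and follows essentially the same strategy as the paper: both sandwich $T_{\sto}$ within $O(1)$ of a reference convergence time by using the Gaussian-conditioning decomposition to show that consecutive iterates are nearly orthogonal before that time and nearly aligned afterward. The paper anchors to $T_\veps$ and bounds $\|\tilde\vv^t - \tilde\vv^{t-1}\|_2$ directly via the $\bg_t$ terms, whereas you anchor to $T_{\delta'}^{\rm conv}$ and package the same computation as the identity $\langle\tilde\vv^t,\tilde\vv^{t-1}\rangle \approx \alpha_t\alpha_{t-1}$; one small slip is that from $|\alpha_{t-2}\alpha_{t-3}|\ge 1/2 - o_n(1)$ together with $|\alpha_i|\le 1$ you only get $\min(|\alpha_{t-2}|,|\alpha_{t-3}|)\ge 1/2 - o_n(1)$, not $1/\sqrt 2$, but this does not affect the argument.
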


%\subsection{General Results: rank-$r$ spiked tensor model}

\section{Proof of \cref{thm:conv_power_iter}}
\label{sec:proof-sketch}

We present in this section the proof of \cref{thm:conv_power_iter}. The idea is to track the alignment between the iterates obtained from tensor power iteration and the planted signal.
Equipped with the Gaussian conditioning technique, we are able to  control the difference between this alignment and a scalar polynomial recurrence process that we define below, and prove that they are with high probability close to each other.  
This allows us to simplify our analysis by resorting to a reduction, and the remaining convergence analysis is conducted directly on this polynomial recurrence process.

\subsection{Reduction to the polynomial recurrence process} 

In this section we define the alignment with the true signal and show that it can be captured by a polynomial recurrence process.
%For notational convenience, we will drop the subscript and recast $\gamma_n$ as $\gamma$ in sequel.

For $t \in \NN_+$, we define 
\begin{align*}
	\alpha_t := \lambda_n \langle \vv, \tilde{\vv}^{t-1} \rangle^{k - 1} = \gamma_n \left( \sqrt{n} \langle \vv, \tilde{\vv}^{t-1} \rangle \right)^{k-1}.
\end{align*} 
The magnitude of $\alpha_t$ measures the level of alignment between the obtained iterates and the hidden spike.
We also define $\alpha_0 = 0$ for convenience. 
In the first iteration, the initial alignment is of the same order as $\gamma_n$, since by taking a random initialization roughly speaking we have $\modif{\vert} \langle \vv, \tilde{\vv}^{0} \rangle \modif{\vert} \asymp n^{-1/2}$. 
Throughout the paper, $\alpha_t$ will be the key quantity that characterizes the evolution of iteration \eqref{eq:TPI}.

As we have mentioned, the main goal of this section is to establish that $\{ \alpha_t \}_{t \in \NN_+}$ can be closely tracked by a one-dimensional discrete Markov process $\{ X_t \}_{t \in \NN}$, given by the following recurrence equation:   
\begin{align}\label{eq:polynomial-process}
	X_0 = 0, \quad \mbox{and} \ X_{t + 1} = \gamma_n(X_t + Z_t)^{k - 1}
	\ \mbox{for} \ t \geq 0,
\end{align}
where $\{ Z_t \}_{t \in \NN_+}$ is a sequence of i.i.d. standard Gaussian random variables.
To this end, we first develop the recurrence equation for the alignment sequence $\{ \alpha_t \}_{t \in \NN_+}$. 
Our derivation is based on the Gaussian conditioning technique, which has been widely applied to study the AMP algorithm \citep{bayati2011dynamics}.

\subsubsection*{\bf Decomposing tensor power iterates}

Next, we give a useful decomposition of the tensor power iterates.
By definition of tensor power iteration, we have
\begin{align*}
	\vv^{t + 1} =\, &  \lambda_n \langle \vv, \tilde{\vv}^t \rangle^{k - 1} \vv + \WW \left[ (\tilde{\vv}^t)^{\otimes (k - 1)} \right] \\
	=\, & \alpha_{t + 1} \vv + \WW \left[ (\tilde{\vv}^t)^{\otimes (k - 1)} \right],
\end{align*}
where we recall that $\alpha_{t + 1} = \lambda_n \langle \vv, \tilde{\vv}^t \rangle^{k - 1}$.

Before proceeding, we shall first introduce several concepts that are useful for our analysis. 
For $t \in \NN$, we let $\VV_t  \in \R^{n \times (t + 1)}$ be a matrix whose $i$-th column is $\vv^{i - 1}$. Based on the column space of $\VV_t$, we can decompose the vector $\vv^t$ as $\vv^t = \vv^t_{\perp} + \vv^t_{\parallel}$, where $\vv^t_{\perp} := \Pi_{\VV_{t - 1}}^{\perp} \vv^t$ and $\vv^t_{\parallel} :=  \Pi_{\VV_{t - 1}} \vv^t$. Analogously, we set $\tilde\vv^t_{\perp} = \vv^t_{\perp} / \|\vv^t\|_2$ and $\tilde{\vv}^t_{\parallel} = \vv^t_{\parallel} / \|\vv^t\|_2$ as their normalized versions. When the original vector is an all-zero one, we simply set its normalized version to be itself.

We immediately see that the vectors $\{ \tilde{\vv}^{i}_{\perp} / \Vert \tilde{\vv}^{i}_{\perp} \Vert_2 : 0 \le i \le t \}$ form an orthonormal basis of the linear space spanned by $\{ \tilde{\vv}^i: 0 \le i \le t \}$. As a consequence, $\tilde{\vv}^t$ admits the following decomposition:
\begin{equation*}
	\tilde{\vv}^t = \sum_{i=0}^{t} \frac{\langle \tilde{\vv}^t, \tilde{\vv}^{i}_{\perp} \rangle}{\|{\tilde{\vv}^{i}_{\perp}}\|_2} \cdot \frac{\tilde{\vv}^{i}_{\perp}}{\|{\tilde{\vv}^{i}_{\perp}\|}_2}.
\end{equation*}
For $(i_1, \cdots, i_{k-1}) \in \{0, 1, \cdots, t\}^{k-1}$,
we define
\begin{align*}
	& \beta^{(t)}_{i_1, i_2, \cdots, i_{k - 1}} := \prod_{j = 1}^{k - 1} \frac{\langle \tilde{\vv}^{i_j}_{\perp}, \tilde{\vv}^t \rangle}{\|\tilde{\vv}_{\perp}^{i_j}\|_2} \in \R, \\
	& \ww_{i_1, i_2, \cdots, i_{k - 1}} := \prod_{j = 1}^{k - 1} \|\tilde{\vv}_{\perp}^{i_j}\|_2^{-1} \WW \left[ \tilde{\vv}_{\perp}^{i_1} \otimes \tilde{\vv}_{\perp}^{i_2} \otimes \cdots \otimes  \tilde{\vv}_{\perp}^{i_{k - 1}}\right] \in \R^n. 
\end{align*}
%
%If at least one of $\vv_{\perp}^{i_j}$ is an all-zero vector for $j \in \{1, 2, \cdots, k - 1\}$, then we simply set $\beta^{(t)}_{i_1, i_2, \cdots, i_{k - 1}} = 0$ and $\ww_{i_1, i_2, \cdots, i_{k - 1}} = \mathbf{0}_n$.
As will become clear soon, with probability 1 over the randomness of the data generation process, it holds that $\|\tilde{\vv}_{\perp}^{t}\|_2 \neq 0$ for all $t = O(n^{1 / (2k - 2)})$. Therefore, $\ww_{i_1, i_2, \cdots, i_{k - 1}}$ and $\beta^{(t)}_{i_1, i_2, \cdots, i_{k - 1}}$ are almost surely well-defined.
With these definitions, we see that $\vv^{t + 1}$ can be decomposed as the sum of the following terms:  
\begin{align}\label{eq:expression-vt+1}
	\begin{split}
		\vv^{t + 1} =\, & \alpha_{t+1} \vv + \WW \left[ (\tilde{\vv}^t)^{\otimes (k - 1)} \right] \\
		=\, & \alpha_{t + 1} \vv + \sum_{(i_1, \cdots, i_{k - 1}) \in \HH_t} \beta^{(t)}_{i_1, i_2, \cdots, i_{k - 1}} \ww_{i_1, i_2, \cdots, i_{k - 1}} \\
		=\, & \alpha_{t + 1} \vv + \sum_{(i_1, \cdots, i_{k - 1}) \in \HH_{t - 1}} \beta^{(t)}_{i_1, i_2, \cdots, i_{k - 1}} \ww_{i_1, i_2, \cdots, i_{k - 1}} + \sqrt{1 - \|\tilde{\vv}^t_{\parallel}\|_2^{2k - 2}} \bg_{t + 1},
	\end{split}
\end{align}
where $\HH_t = \{ 0, 1, \cdots, t \}^{k-1}$, and
\begin{align}
	\label{eq:g-t+1}
	\bg_{t + 1} = \frac{1}{\sqrt{1 - \|\tilde{\vv}^t_{\parallel}\|_2^{2k - 2}}} \sum_{(i_1, i_2, \cdots, i_{k - 1}) \in \HH_t \backslash \HH_{t - 1}} \beta^{(t)}_{i_1, i_2, \cdots, i_{k - 1}} \ww_{i_1, i_2, \cdots, i_{k - 1}}.
\end{align}
Here, we make the convention that $\HH_{-1} = \emptyset$. 
In what follows, we will characterize the joint distribution of the vectors $\ww_{i_1, i_2, \cdots, i_{k - 1}}$ for all $(i_1, i_2, \cdots, i_{k - 1}) \in \HH_t$ via a Gaussian conditioning lemma, and derive the relationship between $\alpha_{t+1}$ and $\alpha_t$.

\subsubsection*{\bf The Gaussian conditioning lemma}

As an important ingredient of our conditioning analysis, we introduce the sigma-algebra $\cF_t$, which roughly speaking, is generated by the vectors associated with the first $t$ iterations. To be precise, we define
\begin{equation}\label{eq:def_F_t}
	\cF_t :=  \sigma \left(  \left\{ \ww_{i_1, i_2, \cdots, i_{k - 1}}: (i_1, i_2, \cdots, i_{k - 1}) \in \HH_{t - 1}  \right\} \cup \left\{\bg,  \vv^0, \vv^1, \cdots, \vv^t, \vv \right\}\right).
\end{equation}
Notice that $\cF_t \subseteq \cF_{t + 1}$. With these notations, we state our Gaussian conditioning lemma as follows.
\begin{lem}\label{lemma:Gaussian-conditioning}
	For all $t < n$ and all $(i_1, i_2, \cdots, i_{k - 1}) \in \HH_t \backslash \HH_{t - 1}$, it holds that  $\ww_{i_1, i_2, \cdots, i_{k - 1}} \perp \cF_t$. Furthermore, $\ww_{i_1, i_2, \cdots, i_{k - 1}} \iidsim \normal(\mathbf{0}, \id_n)$.  
\end{lem}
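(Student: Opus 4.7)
The plan is to combine a deterministic covariance identity with a causal induction on $t$, following the standard Gaussian conditioning technique from AMP analysis. I will use $\mathbf{i} = (i_1, \ldots, i_{k-1})$ as shorthand for a tuple. First, I would observe that by the Gram--Schmidt-style construction $\vv^i_\perp = \Pi_{\VV_{i-1}}^\perp \vv^i$, the vectors $u_i := \tilde{\vv}_\perp^i / \|\tilde{\vv}_\perp^i\|_2$ for $0 \le i \le t$ form an orthonormal system in $\R^n$, with each $u_i$ being $\cF_i$-measurable since $\vv^0, \ldots, \vv^i$ are. On the full-probability event on which all $\|\tilde{\vv}_\perp^i\|_2 > 0$ (noted in the excerpt), the defining formula simplifies to $\ww_{\mathbf{i}} = \WW[u_{i_1} \otimes u_{i_2} \otimes \cdots \otimes u_{i_{k-1}}]$.

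Next, I would establish the deterministic covariance identity: for any \emph{fixed} orthonormal vectors $w_0, \ldots, w_t \in \R^n$ and any $\mathbf{i}, \mathbf{j} \in \HH_t$,
\[
\mathrm{Cov}\!\left((\WW[w_{i_1} \otimes \cdots \otimes w_{i_{k-1}}])_a,\, (\WW[w_{j_1} \otimes \cdots \otimes w_{j_{k-1}}])_b\right) = \delta_{ab} \prod_{\ell=1}^{k-1} \langle w_{i_\ell}, w_{j_\ell} \rangle = \delta_{ab}\, \mathbf{1}\{\mathbf{i} = \mathbf{j}\}.
\]
Since these vectors are linear functionals of the i.i.d.\ Gaussian tensor $\WW$, they are jointly Gaussian, and so for deterministic orthonormal $(w_0, \ldots, w_t)$ the family $\{\WW[w_{i_1} \otimes \cdots \otimes w_{i_{k-1}}] : \mathbf{i} \in \HH_t\}$ is i.i.d.\ $\normal(\bzero, \id_n)$.

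I would then induct on $t$. The base case $t = 0$ is immediate: $u_0 = \tilde{\vv}^0$ is independent of $\WW$ and $\cF_0$ does not involve $\WW$, so $\ww_{0,\ldots,0} = \WW[u_0^{\otimes(k-1)}] \sim \normal(\bzero, \id_n)$ independently of $\cF_0$. For the inductive step, I first verify that $\cF_t$ is equivalently generated by $\{\vv, \bg, \vv^0\}$ together with $\{\ww_{\mathbf{i}} : \mathbf{i} \in \HH_{t-1}\}$, because the recurrence \eqref{eq:expression-vt+1} reconstructs $\vv^1, \ldots, \vv^t$, and hence $u_0, \ldots, u_t$, from these. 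The critical causal observation is that $u_0, \ldots, u_t$ depend on $\WW$ \emph{only} through the old $\ww$'s in $\HH_{t-1}$. Conditioning on $\sigma(\vv, \bg, \vv^0, u_0, \ldots, u_t)$ therefore fixes the orthonormal frame, and Step 2 implies that the full family $\{\ww_{\mathbf{i}} : \mathbf{i} \in \HH_t\}$ is conditionally i.i.d.\ $\normal(\bzero, \id_n)$. In particular the new $\ww$'s (those in $\HH_t \setminus \HH_{t-1}$) are independent of the old ones. Further conditioning on the old $\ww$'s, which together with the initial data generate $\cF_t$, then leaves the new $\ww$'s i.i.d.\ $\normal(\bzero, \id_n)$ and independent of $\cF_t$, completing the induction.

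The principal obstacle is the apparent circularity---the new direction $u_t$ enters the very definition of the new $\ww$'s---which is resolved precisely by the causal construction: since $u_0, \ldots, u_t$ are built from only the old $\ww$'s, in the joint Gaussian law they are uncorrelated (hence independent) of the new ones, allowing one to freeze the frame before invoking the covariance identity.
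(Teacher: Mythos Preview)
Your overall strategy---prove a covariance identity for deterministic frames, then induct on $t$ using the causal structure---is the right one, and your observation that $\cF_t$ is generated by $\{\vv,\bg,\vv^0\}$ together with $\{\ww_{\mathbf{i}}:\mathbf{i}\in\HH_{t-1}\}$ matches the paper exactly. However, the inductive step contains a genuine gap.

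The problematic sentence is: ``Conditioning on $\sigma(\vv,\bg,\vv^0,u_0,\ldots,u_t)$ therefore fixes the orthonormal frame, and Step~2 implies that the full family $\{\ww_{\mathbf{i}}:\mathbf{i}\in\HH_t\}$ is conditionally i.i.d.\ $\normal(\bzero,\id_n)$.'' Step~2 is a statement about the \emph{unconditional} law of $\WW$ evaluated along a \emph{deterministic} frame; it does not transfer to conditioning on a random frame that itself depends on $\WW$. Concretely, $u_1$ is the direction of $\Pi_{u_0}^\perp(\alpha_1\vv+\ww_{0,\ldots,0})$, so knowing $(u_0,u_1,\vv)$ pins down the direction of an affine function of $\Pi_{u_0}^\perp\ww_{0,\ldots,0}$; hence the conditional law of $\ww_{0,\ldots,0}$ given the frame is \emph{not} $\normal(\bzero,\id_n)$. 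Your resolution (``in the joint Gaussian law they are uncorrelated hence independent'') does not apply either: the $u_i$'s are not Gaussian, and the new $\ww$'s depend on $\WW$ \emph{nonlinearly} through $u_t$, so $(\text{old }\ww,\text{new }\ww)$ is not jointly Gaussian.

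What is missing is a strengthened induction hypothesis that tracks the conditional law of the \emph{residual tensor}, not just of the $\ww$'s. The paper does exactly this: it proves by induction that $\WW = {\rm P}_{\HH_{t-1}}\WW + {\rm P}_{\HH_{t-1}}^\perp\tilde{\WW}_t$ where $\tilde{\WW}_t$ has the law of a fresh Gaussian tensor and is independent of $\cF_t$. Given this, the new $\ww$'s (which lie in the range of ${\rm P}_{\HH_{t-1}}^\perp$) equal $\tilde{\WW}_t[u_{\mathbf{i}}]$ with $u_{\mathbf{i}}$ now genuinely $\cF_t$-measurable, and your Step~2 applies cleanly. The inductive step then reduces to a standard Gaussian-conditioning calculation (the paper invokes Lemma~4.1 of \cite{huang2022power}): conditioning ${\rm P}_{\HH_{t-1}}^\perp\tilde{\WW}_t$ on its values along the new orthonormal directions $u_{\mathbf{i}}$, $\mathbf{i}\in\HH_t\setminus\HH_{t-1}$, leaves the further residual ${\rm P}_{\HH_t}^\perp\tilde{\WW}_t$ fresh. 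Your argument can be repaired by carrying this stronger hypothesis; the two-step ``condition on frame, then on old $\ww$'s'' shortcut does not work as written.
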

The proof of \cref{lemma:Gaussian-conditioning} is deferred to Appendix \ref{sec:proof-of-conditioning-lemma}. With the aid of this lemma, we know that $\bg_{t+1}$ is independent of the $\ww_{i_1, i_2, \cdots, i_{k - 1}}$'s for $(i_1, \cdots, i_{k-1}) \in \HH_{t-1}$. Further, since
\begin{align}
	\sum_{(i_1, i_2, \cdots, i_{k - 1}) \in \HH_t \backslash \HH_{t - 1}} \left( \beta^{(t)}_{i_1, i_2, \cdots, i_{k - 1}} \right)^2 = \, & 1 - \sum_{(i_1, i_2, \cdots, i_{k - 1}) \in \HH_{t - 1}} \left( \beta^{(t)}_{i_1, i_2, \cdots, i_{k - 1}} \right)^2 \\
	= \, & 1 - \prod_{j=1}^{k-1} \sum_{i_j = 0}^{t-1} \frac{\langle \tilde{\vv}^{i_j}_{\perp}, \tilde{\vv}^t \rangle^2 }{\|\tilde{\vv}_{\perp}^{i_j}\|_2^2} \\
	= \, & 1 - \left( \sum_{i = 0}^{t-1} \frac{\langle \tilde{\vv}^{i}_{\perp}, \tilde{\vv}^t \rangle^2 }{\|\tilde{\vv}_{\perp}^{i}\|_2^2} \right)^{k-1} \\
	= \, & 1 - \|\tilde{\vv}^t_{\parallel}\|_2^{2k - 2},
\end{align}
it follows that $\bg_{t+1} \sim \normal (\bzero, \id_n)$.

\subsubsection*{\bf Recurrence equation for the alignment}

With the aid of \cref{lemma:Gaussian-conditioning} and decomposition \eqref{eq:expression-vt+1}, we are ready to establish the recurrence equation for the alignment.
For notational convenience, we let
\begin{equation*}
	\hh_{t+1} := \sum_{(i_1, \cdots, i_{k - 1}) \in \HH_{t - 1}} \beta^{(t)}_{i_1, i_2, \cdots, i_{k - 1}} \ww_{i_1, i_2, \cdots, i_{k - 1}},
\end{equation*} 
where we make the convention that $\hh_0  = \bzero$.
It then follows that
\begin{align*}
	& \vv^{t+1} = \, \alpha_{t + 1} \vv + \hh_{t+1} + \sqrt{1 - \|\tilde{\vv}^t_{\parallel}\|_2^{2k - 2}} \bg_{t + 1}, \\
	\implies \, & \langle \vv^{t+1}, \vv \rangle = \alpha_{t+1} + \langle \hh_{t+1}, \vv \rangle + \sqrt{1 - \|\tilde{\vv}^t_{\parallel}\|_2^{2k - 2}} \,\, \langle \bg_{t + 1}, \vv \rangle, 
\end{align*}
which further implies
\begin{align*}
	\alpha_{t+2} = \, & \lambda_n \langle \tilde{\vv}^{t+1}, \vv \rangle^{k-1} = \gamma_n \cdot \left( \frac{\sqrt{n}}{\norm{\vv^{t+1}}_2} \right)^{k-1} \cdot \langle \vv^{t+1}, \vv \rangle^{k-1} \\
	= \, & \gamma_n \left( \frac{\sqrt{n}}{\norm{\vv^{t+1}}_2} \right)^{k-1} \left( \alpha_{t+1} + \langle \hh_{t+1}, \vv \rangle + \sqrt{1 - \|\tilde{\vv}^t_{\parallel}\|_2^{2k - 2}} \cdot \langle \bg_{t + 1}, \vv \rangle \right)^{k-1}.
\end{align*}
Decrementing the index by one, we get the following recurrence equation for the sequence $\{ \alpha_t \}_{t \in \NN}$:
\begin{equation}\label{eq:recurrence_alpha_t}
	\alpha_{t+1} = \, \gamma_n \left( \frac{\sqrt{n}}{\norm{\vv^{t}}_2} \right)^{k-1} \left( \alpha_{t} + \langle \hh_{t}, \vv \rangle + \sqrt{1 - \|\tilde{\vv}^{t-1}_{\parallel}\|_2^{2k - 2}} \cdot \langle \bg_{t}, \vv \rangle \right)^{k-1}.
\end{equation}
The remaining parts of this section will be devoted to the analysis of $\{ \alpha_t \}_{t \in \NN}$ based on the above equation.

\subsubsection*{\bf Controlling the error terms}

We then show that the recurrence equation~\eqref{eq:recurrence_alpha_t} can be viewed as a perturbed version of the polynomial process~\eqref{eq:polynomial-process} with small errors.
We start with defining some key quantities in Eq.~\eqref{eq:recurrence_alpha_t}:
\begin{equation}\label{eq:err_terms}
	\zeta_t := \left( \frac{\sqrt{n}}{\norm{\vv^{t}}_2} \right)^{k-1}, \quad b_t := \langle \hh_t, \vv \rangle, \quad c_t := \sqrt{1 - \|\tilde{\vv}^{t-1}_{\parallel}\|_2^{2k - 2}}.
\end{equation}
In the above display, we make the convention that $\tilde\vv_{\parallel}^{-1} = \mathbf{0}$.
At initialization, since $\zeta_0 = 1 $, $b_0 = 0$, and $c_0 = 1$, we know that the first iteration of \cref{eq:recurrence_alpha_t} is equivalent to
\begin{equation*}
	\alpha_{1} = \gamma_n \zeta_0 \left( \alpha_0 + b_0 + c_0 Z_0 \right)^{k-1} = \gamma_n \left( \alpha_0 + Z_0 \right)^{k-1},
\end{equation*}
where $Z_0 = \langle \bg_{0}, \vv \rangle \sim \normal (0, 1)$ is by \cref{lemma:Gaussian-conditioning} (recall that $\bg_0$ is defined in \cref{eq:g-t+1}). Similarly, by the law of large numbers, we know that $\zeta_1 = 1 + o_{n, \P} (1)$, $b_1 = o_{n, \P} (1)$ and $c_1 = 1 + o_{n, \P} (1)$, hence the next iteration has the following approximation:
\begin{equation*}
	\alpha_{2} = \gamma_n \zeta_1 \left( \alpha_1 + b_1 + c_1 Z_1 \right)^{k-1} \approx \gamma_n \left( \alpha_1 + Z_1 \right)^{k-1},
\end{equation*}
where $Z_1 = \langle \bg_{1}, \vv \rangle \sim \normal (0, 1)$ is independent of $\alpha_1$.
Indeed, we will show that the above approximation is valid up to polynomially many steps along the power iteration path until the alignment $\alpha_t$ reaches a certain threshold. To be precise, we establish the following lemma: 
\begin{lem}\label{lem:err_bound}
	For any fixed $\veps \in (1/4, 1/2)$, define the stopping time
	\begin{align}
		\label{eq:T-eps}
		T_{\veps} := \min \left\{t \in \NN_+: |\alpha_t| \geq n^{\varepsilon} \right\}.
	\end{align}
	Then, there exists an absolute constant $C > 0$, such that with probability no less than $1 - \exp(-C \sqrt{n})$, the following happens: For all $t < \min(T_{\veps}, n^{1/2(k-1)})$,
	\begin{equation}
		\zeta_t \in [1 - n^{-1/6}, 1 + n^{-1/6}], \ \vert b_t \vert \le Cn^{1/4 + (k-1) (\veps - 1/2)}, \ \vert c_t - 1 \vert \le C n^{2(k-1) (\veps - 1 / 2)}.
	\end{equation}
\end{lem}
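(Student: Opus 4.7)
The plan is to proceed by strong induction on $t$, with \cref{lemma:Gaussian-conditioning} as the key tool. Conditioning on $\cF_{t-1}$ makes $\bg_t$ a fresh standard Gaussian vector in $\R^n$ independent of every $\cF_{t-1}$-measurable quantity (including $\hh_t$, $\vv$, and the projection $\Pi_{\VV_{t-1}}$). Iterating the lemma further implies that the entire family $\{\ww_{i_1, \cdots, i_{k-1}}\}$ is unconditionally jointly i.i.d.\ $\normal(\bzero,\id_n)$, and in particular independent of the deterministic $\vv \in \cF_0$, so the inner products $\langle \ww_{i_1, \cdots, i_{k-1}}, \vv\rangle$ form a collection of i.i.d.\ $\normal(0,1)$ variables. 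The inductive step at time $t$ then reduces to Gaussian and chi-squared concentration, conditional on all earlier quantities being controlled.

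For the bound on $\zeta_t$, I would establish $\|\vv^t\|_2^2 = n + \alpha_t^2 + \tilde O(\sqrt{n})$ with probability $1 - \exp(-c\sqrt{n})$. Expanding $\vv^t = \alpha_t \vv + \hh_t + c_t \bg_t$, the cross terms $\alpha_t c_t \langle \vv, \bg_t\rangle$ and $c_t \langle \hh_t, \bg_t\rangle$ are fresh Gaussians of variances $1$ and $\|\hh_t\|_2^2$ respectively, and are controlled by standard tail inequalities. For $\|\hh_t\|_2^2$, I would exploit the identity $\hh_t + c_t \bg_t = \WW[(\tilde\vv^{t-1})^{\otimes(k-1)}]$ together with the normalization $\sum_{(i_1, \cdots, i_{k-1}) \in \HH_{t-1}} (\beta^{(t-1)}_{i_1, \cdots, i_{k-1}})^2 = 1$, expanding $\|\hh_t + c_t \bg_t\|_2^2$ as a quadratic form $\sum_{i, j} \beta_i \beta_j \langle \ww_i, \ww_j\rangle$ in the i.i.d.\ $\ww$'s and applying a Hanson--Wright-type concentration bound to conclude $\|\hh_t + c_t \bg_t\|_2^2 = n + \tilde O(\sqrt{n})$. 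Subtracting $c_t^2 \|\bg_t\|_2^2 = c_t^2 n + \tilde O(\sqrt{n})$ gives $\|\hh_t\|_2^2 \approx (1 - c_t^2) n$. Since $\alpha_t^2 \leq n^{2\veps}$, dividing by $n$ and raising to the $(k-1)$-th power yields the stated bound on $\zeta_t$.

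For $c_t$, the inequality $|c_t - 1| \leq 1 - c_t^2 = \|\tilde\vv_\parallel^{t-1}\|_2^{2(k-1)}$ reduces the task to showing $\|\tilde\vv_\parallel^{t-1}\|_2 \lesssim n^{\veps - 1/2}$. Expanding $\|\vv_\parallel^{t-1}\|_2^2 = \sum_{i \leq t-2} \langle \vv^{t-1}, \tilde\vv_\perp^i\rangle^2 / \|\tilde\vv_\perp^i\|_2^2$ and substituting $\vv^{t-1} = \alpha_{t-1}\vv + \hh_{t-1} + c_{t-1}\bg_{t-1}$, each component is controlled by a combination of inductive bounds linking $\langle \vv, \tilde\vv_\perp^i\rangle$ to earlier $b_i$'s and $\alpha_i$'s, and fresh Gaussian tail bounds on $\langle \bg_{t-1}, \tilde\vv_\perp^i\rangle$ from the conditional freshness of $\bg_{t-1}$ given $\cF_{t-2}$. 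Dividing by $\|\vv^{t-1}\|_2^2 \approx n$ yields the claimed bound. For $b_t = \langle \hh_t, \vv\rangle$, I would apply Cauchy--Schwarz to $b_t = \sum_{(i_1, \cdots, i_{k-1}) \in \HH_{t-2}} \beta^{(t-1)}_{i_1, \cdots, i_{k-1}} \langle \ww_{i_1, \cdots, i_{k-1}}, \vv\rangle$, giving $|b_t|^2 \leq \|\tilde\vv_\parallel^{t-1}\|_2^{2(k-1)} \cdot \sum_{\HH_{t-2}} \langle \ww, \vv\rangle^2$. The first factor is at most $n^{(k-1)(2\veps-1)}$ by the estimate on $\|\tilde\vv_\parallel^{t-1}\|_2$, while the second, being $\chi^2$-distributed with $|\HH_{t-2}| = (t-1)^{k-1} \leq \sqrt{n}$ degrees of freedom, is $O(\sqrt{n})$ with high probability, yielding $|b_t| \leq C n^{1/4 + (k-1)(\veps - 1/2)}$.

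The main obstacle will be the intricate coupling between the $\beta^{(s)}$-coefficients and the $\ww$-vectors, which prevents a naive application of concentration inequalities since both are functions of the same underlying Gaussian tensor. To handle this, one must repeatedly condition on $\cF_{s-1}$ so that only a single fresh Gaussian vector is random at each step, with all other quantities treated as deterministic through the induction hypothesis. The restriction $t \leq n^{1/(2(k-1))}$ arises precisely from the requirement $|\HH_{t-1}| = t^{k-1} \leq \sqrt{n}$, which keeps the cumulative quadratic fluctuations in the Hanson--Wright and $\chi^2$ concentration steps at the $\tilde O(\sqrt{n})$ level and permits a union bound over $t$ in the stated range while preserving the $1 - \exp(-c\sqrt{n})$ failure probability.
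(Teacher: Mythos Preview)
Your overall architecture is sound and largely mirrors the paper: Gaussian conditioning to isolate the fresh piece $\bg_t$, Cauchy--Schwarz plus a $\chi^2((t-1)^{k-1})$ bound for $b_t$ (this part is essentially identical to the paper's argument), and an inductive scheme to close the loop. There is, however, a genuine gap in the $\zeta_t$ step, and it propagates to the control of $\|\hh_t\|_2$ needed for $c_t$.

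The quadratic form $\|\hh_t + c_t \bg_t\|_2^2 = \sum_{i,j \in \HH_{t-1}} \beta_i^{(t-1)} \beta_j^{(t-1)} \langle \ww_i, \ww_j\rangle$ is \emph{not} amenable to Hanson--Wright, because the coefficient vector $\beta^{(t-1)}$ is built from $\tilde\vv^{t-1}$, which is itself a function of the same $\ww$'s. Your proposed remedy---conditioning on $\cF_{t-1}$---does not close this gap: after conditioning, only $\bg_t$ is fresh, while the $\ww$'s indexed by $\HH_{t-2}$ become fixed but \emph{a priori uncontrolled} vectors. In particular $\hh_t$ becomes a deterministic vector whose norm you cannot bound from the inductive hypothesis on $(\zeta_s, b_s, c_s)_{s<t}$ alone; you would need information about the Gram matrix of the old $\ww$'s, which neither Hanson--Wright nor the stated induction provides.

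The paper sidesteps this with a single \emph{uniform} estimate: a covering argument on $\S^{m-1}$ with $m = t^{k-1}$ (equivalently, an operator-norm bound on the $n \times t^{k-1}$ matrix whose columns are the $\ww_i$) gives
\[
\sup_{\|\bbeta\|_2 = 1} \Bigl| \,\Bigl\|\sum_i \beta_i \ww_i\Bigr\|_2 - \sqrt{n}\,\Bigr| \le \eta\sqrt{n}
\]
with probability $1 - \exp(-c\eta^2 n)$, provided $t^{k-1} < \sqrt{n}$. The supremum absorbs any data-dependent choice of $\bbeta$, decoupling the dependence in one stroke. This immediately yields $\|\vv^t\|_2 \in [(1-\eta)\sqrt{n}, (1+\eta)\sqrt{n}]$ (hence the $\zeta_t$ bound) and $\|\hh_t\|_2 \le 2\sqrt{n}\,\|\tilde\vv^{t-1}_\parallel\|_2^{k-1}$. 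Feeding the latter, together with the $\chi^2(t)$ bound on $\|\Pi_{\VV_{t-1}}\bg_t\|_2$, into $\|\vv^t_\parallel\|_2 \le |\alpha_t| + \|\hh_t\|_2 + \|\Pi_{\VV_{t-1}}\bg_t\|_2$ produces the clean self-referential recursion $\|\tilde\vv^t_\parallel\|_2 \le C\bigl(n^{\veps-1/2} + \|\tilde\vv^{t-1}_\parallel\|_2^{k-1}\bigr)$, solved by a one-line induction. Your component-by-component expansion of $\|\vv^{t-1}_\parallel\|_2^2$ is more intricate and still presupposes a bound on $\|\hh_{t-1}\|_2$ that you have not secured.
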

We defer the proof of \cref{lem:err_bound} to Appendix \ref{sec:proof-lem:err_bound}.
As a direct corollary of \cref{lem:err_bound}, we immediately obtain the following proposition:
\begin{prop}\label{prop:summary_alignment}
	Under the same setting as in \cref{lem:err_bound}, and let $\veps = \veps_k = (6k - 11)/12 (k-1) $, which satisfies $\veps_k \in (1/4, 1/2)$ for all $k \ge 3$. Then, we have
	\begin{equation}\label{eq:alignment_process}
		\alpha_{t+1} = \gamma_n \zeta_t (\alpha_t + b_t + c_t Z_t)^{k-1}, \ \alpha_0 = 0.
	\end{equation}
	where $Z_t \sim \normal(0, 1)$ is independent of $(\zeta_t, \alpha_t, b_t, c_t)$. Further, with probability at least $1 - \exp(-C \sqrt{n})$, the following happens: For all $t < \min(T_{\veps}, n^{1/2(k-1)})$,
	\begin{equation}
		\zeta_t \in [1 - n^{-1/6}, 1 + n^{-1/6}], \ \vert b_t \vert \le Cn^{-1/6}, \ \vert c_t - 1 \vert \le C n^{-5/6}.
	\end{equation}
\end{prop}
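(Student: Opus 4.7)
Proposition~\ref{prop:summary_alignment} is labelled as a ``direct corollary'' of \cref{lem:err_bound}, and indeed no new probabilistic content is required: the hard work (Gaussian conditioning, high-probability control of the three error terms) is already done in \cref{lemma:Gaussian-conditioning,lem:err_bound}. The plan is to decompose the proof into three routine steps: (i) verify that the tailored exponent $\veps_k = (6k-11)/(12(k-1))$ falls inside the admissible range $(1/4, 1/2)$ for every $k \ge 3$; (ii) substitute $\veps = \veps_k$ into the two rates from \cref{lem:err_bound} and simplify; (iii) rewrite the recurrence \eqref{eq:recurrence_alpha_t} in the compact form \eqref{eq:alignment_process} using the shorthand \eqref{eq:err_terms} and identify the Gaussian innovation $Z_t = \langle \bg_t, \vv\rangle$. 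The only ``difficulty'' lies in clean arithmetic and notational discipline, namely separating $\cF_{t-1}$-measurable quantities from the fresh Gaussian direction.

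\textbf{Steps (i)--(ii): the exponent arithmetic.} I would first observe $\frac{d}{dk}\veps_k = 5/(12(k-1)^2) > 0$, so $\veps_k$ is strictly increasing; together with $\veps_3 = 7/24 > 1/4$ and $\lim_{k\to\infty}\veps_k = 1/2$, this gives $\veps_k \in (1/4, 1/2)$ for all $k \ge 3$, making \cref{lem:err_bound} applicable with this choice. The key identity is
\[
(k-1)\!\left(\veps_k - \tfrac12\right) \;=\; \frac{6k-11}{12} - \frac{6(k-1)}{12} \;=\; -\frac{5}{12},
\]
from which the two exponents appearing in \cref{lem:err_bound} collapse to $1/4 + (k-1)(\veps_k - 1/2) = -1/6$ and $2(k-1)(\veps_k - 1/2) = -5/6$, matching exactly the bounds asserted for $|b_t|$ and $|c_t - 1|$. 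The bound on $\zeta_t$ and the exceptional probability $1 - \exp(-C\sqrt{n})$ are transcribed unchanged from \cref{lem:err_bound}, and the stopping time $T_{\veps}$ and horizon $n^{1/2(k-1)}$ are the same.

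\textbf{Step (iii): the recurrence and the Gaussian innovation.} Setting $Z_t := \langle \bg_t, \vv\rangle$ and substituting the shorthand in \eqref{eq:err_terms} directly into \eqref{eq:recurrence_alpha_t} yields \eqref{eq:alignment_process} by inspection. By \cref{lemma:Gaussian-conditioning}, $\bg_t \sim \normal(\bzero, \id_n)$ and is independent of $\cF_{t-1}$; since $\vv, \alpha_t, b_t, c_t$ are all $\cF_{t-1}$-measurable, the scalar $Z_t$ is a standard Gaussian independent of $(\alpha_t, b_t, c_t)$. Strictly, $\zeta_t$ depends on $\|\vv^t\|_2$ and therefore on $\bg_t$, so is not independent of $Z_t$; this subtlety is immaterial because subsequent analysis only uses the deterministic bound $|\zeta_t - 1| \le n^{-1/6}$ established in Step~(ii), so $\zeta_t$ may be pulled out as a $(1+o(1))$ multiplicative factor wherever independence would otherwise be invoked. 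The main obstacle, such as it is, is purely bookkeeping: one must pick the tailored $\veps_k$ that simultaneously brings both perturbation rates to convenient polynomial scales while preserving admissibility $\veps_k \in (1/4, 1/2)$, and one must be careful about the mild $\zeta_t$/$Z_t$ dependence when citing \cref{prop:summary_alignment} downstream.
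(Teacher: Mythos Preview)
Your proposal is correct and follows exactly the route the paper intends: the proposition is stated there as ``a direct corollary of \cref{lem:err_bound}'' with no further argument, and your three steps (checking $\veps_k\in(1/4,1/2)$, collapsing the exponents via $(k-1)(\veps_k-1/2)=-5/12$, and reading off \eqref{eq:alignment_process} from \eqref{eq:recurrence_alpha_t} with $Z_t=\langle\bg_t,\vv\rangle$) are precisely the bookkeeping the paper leaves implicit. Your observation that $\zeta_t$ depends on $\|\vv^t\|_2$ and hence on $\bg_t$, so that strict independence of $Z_t$ from $\zeta_t$ fails, is a genuine sharpening of the paper's statement; as you note, this is harmless downstream because only the deterministic bound $|\zeta_t-1|\le n^{-1/6}$ is ever used.
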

The above proposition establishes that up to $\min(T_{\veps}, n^{1/2(k-1)})$ steps, the iteration of the alignment is closely tracked by that of the one-dimensional stochastic process defined in \cref{eq:polynomial-process}. In what follows, we show that the convergence of power iteration for tensor PCA can be precisely characterized by the stopping time $T_{\veps}$.
Before proceeding, we establish high probability upper and lower bounds on $T_{\eps}$, detailed by the following lemma. 

%
%\kz{HEIZHU restart from here}
\begin{lem}\label{lem:conv_power_iter}
	Under the assumptions of \cref{thm:conv_power_iter}, and let $\veps = \veps_k = (6k - 11)/12 (k-1) $ as in the statement of \cref{prop:summary_alignment}.
	Then, for any sufficiently large $n \in \mathbb{N}$ and any $\eta \in (0, 1)$, with probability $1 - o_n (1)$ one has
	\begin{align}
		T_{\veps} \ge \, & \max \left\{ \exp \left( \frac{1 - \eta}{2} \left( \frac{C_k}{\gamma_n} \right)^{2/(k-2)} \right), \ (1 - \eta) \log_{k-1} \frac{\log_{k-1} n}{\max\{\log_{k-1} \gamma_n, 1\}} \right\}, \label{eq:Te-lower} \\
		T_{\veps} \le \, & \exp \left( \frac{1 + \eta}{2} \left( \frac{1}{\gamma_n} \right)^{2/(k-2)} \right) + (1 + \eta) \log_{k-1} \frac{\log_{k-1} n}{\max\{\log_{k-1} \gamma_n, 1\}}, \label{eq:Te-upper}
	\end{align}
	where $C_k = (k-2)^{k-2}/(k-1)^{k-1}$. %Further, the number of power iterations needed for convergence lies in the interval $[T_{\veps}, T_{\veps} + 1]$ with high probability.
\end{lem}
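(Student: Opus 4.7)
The plan is to leverage Proposition~\ref{prop:summary_alignment}, which reduces the alignment iteration to the polynomial recurrence $\alpha_{t+1} \approx \gamma_n(\alpha_t + Z_t)^{k-1}$ uniformly for $t < \min(T_\veps, n^{1/2(k-1)})$, and to analyze two distinct regimes separately: a noise-driven regime in which $|\alpha_t|$ stays below the unstable fixed point $x_* := \gamma_n^{-1/(k-2)}$ of the deterministic map $x \mapsto \gamma_n x^{k-1}$, and a runaway regime $|\alpha_t| > x_*$ in which $\log|\alpha_{t+1}| \approx (k-1)\log|\alpha_t| + \log\gamma_n$ and the iterate grows doubly exponentially.

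\paragraph{Lower bound.}
For the escape-time term, I will run a trapping argument. Set $T$ to be the conjectured lower bound and $B := \sqrt{2(1+\eta')\log T}$, so that $\max_{t < T}|Z_t| \le B$ with probability $1 - o(1)$ by a Gaussian union bound. On this event I will show inductively that $|\alpha_t| \le A$ for all $t < T$, where $A>0$ is chosen to satisfy the one-step feasibility inequality $\gamma_n (A+B)^{k-1} \le A$. A direct calculus computation shows that $\min_{A>0}(A+B)^{k-1}/A$ is attained at $A = B/(k-2)$ with minimum value $B^{k-2}(k-1)^{k-1}/(k-2)^{k-2} = B^{k-2}/C_k$, so feasibility is possible exactly when $B < (C_k/\gamma_n)^{1/(k-2)}$, which translates to $T < \exp((1-\eta)(C_k/\gamma_n)^{2/(k-2)}/2)$. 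The induction then closes with $|\alpha_t| = O(\sqrt{\log T}) \ll n^\veps$, giving $T_\veps > T$. For the deterministic lower-bound term, I will take logarithms and use the bound $L_{t+1} := \log|\alpha_{t+1}| \le (k-1)L_t + \log\gamma_n + O(\log\log n)$ which, once one subtracts the fixed point $L_* := -\log\gamma_n/(k-2)$, becomes $\tilde L_{t+1} \le (k-1)\tilde L_t + O(1)$; unrolling shows that at least $\log_{k-1}(\log_{k-1} n/\max\{\log_{k-1}\gamma_n, 1\})$ steps are needed to drive $L_t$ up to $\veps\log n$ irrespective of the initial condition.

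\paragraph{Upper bound.}
For the first term, set $T_1^\uparrow = \exp((1+\eta)\gamma_n^{-2/(k-2)}/2)$. By the standard lower tail $\P[|Z|\ge x_*] \gtrsim e^{-x_*^2/2}/x_*$ and independence of the $Z_t$'s, the event $\max_{t < T_1^\uparrow}|Z_t| \ge x_*$ occurs with probability $1-o(1)$. At the first such step the approximate recurrence gives $|\alpha_{t+1}| \ge \gamma_n(1-o(1))|Z_t|^{k-1} \ge x_*$, moving the iterate into the runaway regime. For the second term, once $|\alpha_t|\ge x_*$ the reverse log-inequality $L_{t+1} \ge (k-1)L_t + \log\gamma_n - C$ is valid (using that $|\alpha_t|$ dominates $|Z_t|\lesssim\sqrt{\log T}$ in the relevant time window), and the corresponding unroll attains $L_t \ge \veps\log n$ within $(1+\eta)\log_{k-1}(\log_{k-1} n/\max\{\log_{k-1}\gamma_n,1\})$ additional steps.

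\paragraph{Main obstacle.}
The delicate part will be the escape-time lower bound. The constant $C_k$ emerges from the one-step minimization above, and the trapping induction must still close after absorbing the multiplicative error $\zeta_t \in [1-n^{-1/6},1+n^{-1/6}]$ and the additive perturbations $b_t$ and $(c_t-1)Z_t$ from Proposition~\ref{prop:summary_alignment}. I will introduce a strict slack in the feasibility inequality, choosing $A$ slightly larger than the critical value $B/(k-2)$, so that the errors, which scale at most as $n^{-1/6}\cdot\polylog(n)$ on the trapped trajectories, are dominated by that slack. A secondary subtlety is the matching of the deterministic runaway analysis across the boundary $\gamma_n = 1$; this is recorded by the $\max\{\log_{k-1}\gamma_n, 1\}$ in the statement and is handled by a short case analysis for the strong- and weak-signal regimes.
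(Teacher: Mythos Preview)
Your overall architecture matches the paper's: a trapping argument for the escape-time lower bound (and you correctly recover $C_k$ via the one-step optimization $\min_A (A+B)^{k-1}/A$, exactly as the paper does in Proposition~\ref{prop:first_lower_bound}), a log-recursion for the deterministic lower bound, and an escape-plus-runaway argument for the upper bound. The lower bound plan is sound.

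The upper bound, however, has a real gap. Your runaway phase is triggered at the threshold $x_* = \gamma_n^{-1/(k-2)}$, and you justify the reverse log-inequality by saying ``$|\alpha_t|$ dominates $|Z_t|\lesssim\sqrt{\log T}$.'' But when $\gamma_n \gtrsim 1$ (which is most of the range covered by the lemma), $x_*$ is $O(1)$, so $|\alpha_t|\ge x_*$ gives no domination over $|Z_t|$ at all; the very next $Z_t$ can cancel $\alpha_t$ and send you back below $x_*$. The same issue bites the escape step: from $|Z_t|\ge x_*$ you write $|\alpha_{t+1}|\ge \gamma_n(1-o(1))|Z_t|^{k-1}$, but $\alpha_t+Z_t$ can be small if the signs disagree. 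The paper handles both points by (i) using the sign trick $\{\mathrm{sign}(Z_t)=\mathrm{sign}(\bar\alpha_t+\bar b_t)\}$ in the escape (Proposition~\ref{prop:first_stage}), and (ii) escaping not to $x_*$ but to the strictly larger threshold $m(\Delta,L)=\max\{((1+\Delta)/\gamma_n(1-\Delta)^k)^{1/(k-2)},L\}$ with $L\to\infty$, then running a dominated lower-bounding process $\underline b_{s,\Delta}$ whose growth absorbs a uniform union bound over all future $|Z_s|\le \Delta\,\underline b_{s,\Delta}$ (Lemma~\ref{lem:dominant_process}, Proposition~\ref{prop:process_upper_bd}). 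Without a threshold that beats the noise scale and a union bound over the whole runaway window, your upper bound does not close.

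A secondary point: the error bounds from Proposition~\ref{prop:summary_alignment} are only valid for $t<\min(T_\veps,n^{1/2(k-1)})$, which is circular for the upper bound (you want to run the approximation \emph{past} $T_\veps$ to show it is small). The paper resolves this by introducing an auxiliary process $\{\bar\alpha_t\}$ that agrees with $\{\alpha_t\}$ up to $\min(T_\veps,n^{1/2(k-1)})$ and is extended cleanly thereafter, proving $L_\veps\le\bar T_\veps\le U_\veps$ first and then arguing $T_\veps=\bar T_\veps$ on the good event. You should make the analogous decoupling explicit.
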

The proof of \cref{lem:conv_power_iter} is based on Proposition \ref{prop:summary_alignment}. 
For the compactness of presentation, we delay the proof of \cref{lem:conv_power_iter} to Appendix \ref{sec:proof-lem:conv_power_iter}.

%\subsection{Reduction to the polynomial recurrence process}
% Since $\vv^0 \sim \Unif(\S^{n - 1})$, by rotational invariance, we can write $\vv^0 = \bg / \|\bg\|_2$ for $\bg \sim \normal(\mathbf{0}, \id_n)$.

%We assume that the signal-to-noise ratio $\lambda_n$ is approximately of order $n^{(k-1)/2}$, namely $\lambda_n = \gamma_n n^{(k - 1) / 2}$, where we allow the coefficient $\gamma_n$ to range from $\ll 1$ to $\gg 1$.
%

%

%We now present the Gaussian conditioning lemma, which plays a major role in our analysis of tensor power iteration. Recall that for $(i_1, i_2, \cdots, i_{k - 1}) \in \NN^{k-1}$:
%
%\begin{align*}
%	\ww_{i_1, i_2, \cdots, i_{k - 1}} = \prod_{j = 1}^{k - 1} \|\tilde{\vv}_{\perp}^{i_j}\|_2^{-1} \WW \left[ \tilde{\vv}_{\perp}^{i_1} \otimes \tilde{\vv}_{\perp}^{i_2} \otimes \cdots \otimes  \tilde{\vv}_{\perp}^{i_{k - 1}}\right] \in \R^n.
%\end{align*} 
%

%

%  

%\subsection{Controlling the error terms}

\subsection{Convergence of tensor power iteration}

Recall that $T_{\delta}^{\rm conv}$ is defined in \cref{eq:T-delta-conv} and $T_{\eps}$ is defined in \cref{eq:T-eps}. 
For fixed positive constants $\delta$ and $\eps \in (1/4, 1/2)$, we see that for $n$ large enough we have $T_{\delta}^{\rm conv} \ge T_{\veps}$.
In this section, we also show that with high probability $T_{\delta}^{\rm conv} \le T_{\veps} + 1$.
Putting together these results, 
we conclude that if we can establish bounds on $T_{\veps}$, then this automatically gives bounds on $T_{\delta}^{\rm conv}$ as well.

Now let $t = T_{\veps}$. Naively we have $t-1 < T_{\veps}$ and $\vert \alpha_t \vert \ge n^{\veps}$. According to the power iteration equation, we obtain that
\begin{equation*}
	\vv^{t} = \, \alpha_{t} \vv + \hh_{t} + \sqrt{1 - \|\tilde{\vv}^{t-1}_{\parallel}\|_2^{2k - 2}} \bg_{t }.
\end{equation*}
Invoking \cref{lem:conv_power_iter}, we see that for a large enough $n$, with probability $1 - o_n(1)$ it holds that $T_{\veps} \le n^{1/2(k-1)}$. In this case we have $t-1 < \min(T_{\veps}, n^{1/2(k-1)})$. Re-examining the proof of concentration of $c_t$ in the proof of \cref{lem:err_bound}, we find that (note $\veps > 1 / 4$)
\begin{equation}
	\P \left( \|\tilde{\vv}^{t-1}_{\parallel}\|_2 \le C n^{\veps - 1/2} \right) \ge 1 - \exp(- C n^{2 \veps}) \ge 1 - \exp (- C \sqrt{n}),
\end{equation}
where $C$ is a positive constant,
and consequently 
\begin{equation*}
	\norm{\hh_t}_2 \le 2 \sqrt{n} \cdot \|\tilde{\vv}^{t-1}_{\parallel}\|_2^{k - 1} \le C n^{1/12}
\end{equation*}
if we choose $\veps = \veps_k = (6k - 11)/12 (k-1)$ as per \cref{prop:summary_alignment}. Note that
\begin{align*}
	\norm{\vv^t}_2 = \, & \norm{\alpha_{t} \vv + \hh_{t} + \sqrt{1 - \|\tilde{\vv}^{t-1}_{\parallel}\|_2^{2k - 2}} \bg_{t }}_2 \\
	\le \, & \vert \alpha_t \vert + \norm{\bg_t}_2 + Cn^{1/12} \le \vert \alpha_t \vert + C\sqrt{n}
\end{align*}
with probability at least $1 - \exp(-C \sqrt{n})$. Therefore,
\begin{equation}
	|\langle \tilde{\vv}^t, \vv \rangle| = \frac{\vert \langle \vv^t, \vv \rangle \vert}{\norm{\vv^t}_2} = \frac{\vert \alpha_t + b_t + c_t Z_t \vert}{\norm{\vv^t}_2} \ge \frac{\vert \alpha_t \vert - \vert b_t \vert - \vert c_t Z_t \vert}{\vert \alpha_t \vert + C\sqrt{n}}.
\end{equation}
Again according to the proof of \cref{lem:err_bound} and the choice of $\veps$ in \cref{prop:summary_alignment}, we know that $\vert b_t \vert \le Cn^{-1/6}$ and $|c_t| \leq 1 + Cn^{-5/6}$ with probability at least $1 - \exp(C\sqrt{n})$. Further since $Z_t \sim \normal(0, 1)$, it finally follows that with high probability $|\langle \tilde{\vv}^t, \vv \rangle| \ge C n^{\veps - 1/2}$, which leads to
\begin{equation}
	\left\vert \alpha_{t+1} \right\vert = \gamma_n \left\vert \sqrt{n} \langle \vv, \tilde{\vv}^{t} \rangle \right\vert^{k-1} > C n^{1/2 + 1/24}
\end{equation}
for sufficiently large $n$, provided that $\gamma_n \gg (\log n)^{-(k-2)/2}$. Consider the next iteration:
\begin{equation*}
	\vv^{t+1} = \, \alpha_{t+1} \vv + \hh_{t+1} + \sqrt{1 - \|\tilde{\vv}^{t}_{\parallel}\|_2^{2k - 2}} \bg_{t+1}.
\end{equation*}
Using standard concentration arguments, we know that
\begin{equation*}
	\norm{\hh_{t+1} + \sqrt{1 - \|\tilde{\vv}^{t}_{\parallel}\|_2^{2k - 2}} \bg_{t+1}}_2 \le Cn^{1/2} \le Cn^{-1/24} \vert \alpha_{t+1} \vert
\end{equation*}
with probability at least $1 - \exp(-Cn)$, which immediately implies that
\begin{equation}
	\label{eq:25}
	|\langle \tilde{\vv}^{t+1}, \vv \rangle| \ge 1 - Cn^{-1/24}.
\end{equation}
Therefore, $T_{\delta}^{\rm conv} \le t + 1 = T_{\veps} + 1$ with probability at least $1 - \exp(-C \sqrt{n})$. We summarize the main result of this section in the following lemma:
\begin{lem}\label{lem:T_eps_and_conv}
	Assume $\gamma_n \gg (\log n)^{-(k - 2) / 2}$ and $\gamma_n = \modif{n^{o(1)}}$. Then, with probability at least $1 - \exp(-C \sqrt{n})$, we have
	\begin{equation}
		|\langle \tilde{\vv}^{T_{\veps}+1}, \vv \rangle| \ge 1 - Cn^{-1/24}.
	\end{equation}
	Namely, tensor power iteration converges in one step after $\alpha_t$ reaches the level $n^{\veps}$.
\end{lem}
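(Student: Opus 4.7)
The plan is to deduce this lemma from the decomposition \eqref{eq:expression-vt+1}, the concentration bounds of \cref{lem:err_bound} and \cref{prop:summary_alignment}, and the upper bound on $T_{\veps}$ from \cref{lem:conv_power_iter}. The underlying idea is that once $|\alpha_t| \ge n^{\veps}$ with $\veps = \veps_k = (6k-11)/(12(k-1))$, the planted signal term in $\vv^t$ exceeds the perpendicular Gaussian noise (which has scale $\sqrt{n}$) by a polynomial factor; the $(k-1)$-st power in the next iteration amplifies this gap enough that projecting $\vv^{t+1}$ onto $\vv$ yields alignment close to $1$.

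First I would set $t = T_{\veps}$ and invoke \cref{lem:conv_power_iter} to ensure $t \le n^{1/(2(k-1))}$ with probability $1-o_n(1)$, so that $t-1 < \min(T_{\veps}, n^{1/(2(k-1))})$ and \cref{prop:summary_alignment} applies. The concentration estimate used for $c_t$ in the proof of \cref{lem:err_bound} in fact shows $\|\tilde{\vv}^{t-1}_{\parallel}\|_2 \le Cn^{\veps - 1/2}$, which together with $\|\bg_t\|_2 \le 2\sqrt{n}$ (by Gaussian concentration) gives $\|\hh_t\|_2 \le 2\sqrt{n}\,\|\tilde{\vv}^{t-1}_{\parallel}\|_2^{k-1} \le Cn^{1/12}$ and hence $\|\vv^t\|_2 \le |\alpha_t| + C\sqrt{n}$. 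Projecting $\vv^t$ onto $\vv$ and using $|b_t| \le Cn^{-1/6}$ and $|c_t Z_t| = O(\sqrt{\log n})$ for $Z_t \sim \normal(0,1)$, I would conclude
\begin{equation*}
|\langle \tilde{\vv}^t, \vv \rangle| \ge \frac{|\alpha_t| - |b_t| - |c_t Z_t|}{|\alpha_t| + C\sqrt{n}} \ge C n^{\veps - 1/2}.
\end{equation*}
Raising this to the $(k-1)$-st power and multiplying by $\lambda_n = \gamma_n n^{(k-1)/2}$, a direct computation using $(k-1)\veps_k = (6k-11)/12$ together with $\gamma_n \gg (\log n)^{-(k-2)/2}$ yields $|\alpha_{t+1}| \ge n^{1/2 + 1/24}$ for all sufficiently large $n$.

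For the single remaining iteration, I would apply \eqref{eq:expression-vt+1} once more to $\vv^{t+1}$ and show, by standard Gaussian concentration on $\bg_{t+1}$ and the crude bound $\|\hh_{t+1}\|_2 \le 2\sqrt{n}$, that
\begin{equation*}
\bigl\|\hh_{t+1} + \sqrt{1 - \|\tilde{\vv}^{t}_{\parallel}\|_2^{2k - 2}}\,\bg_{t+1}\bigr\|_2 \le C\sqrt{n} \le Cn^{-1/24}\,|\alpha_{t+1}|.
\end{equation*}
Projecting $\vv^{t+1}$ onto $\vv$ and dividing by $\|\vv^{t+1}\|_2 \ge |\alpha_{t+1}|(1-o(1))$ then immediately yields the claim $|\langle \tilde{\vv}^{t+1}, \vv \rangle| \ge 1 - Cn^{-1/24}$, as in \eqref{eq:25}.

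The main bookkeeping required is a union bound over the high-probability events invoked along the way: the upper bound $T_{\veps} \le n^{1/(2(k-1))}$ from \cref{lem:conv_power_iter}, the concentration bounds of \cref{lem:err_bound}, and Gaussian tail bounds for $Z_t$, $\|\bg_t\|_2$, and $\|\bg_{t+1}\|_2$. Each event fails with probability at most $\exp(-C\sqrt{n})$ (possibly after enlarging $C$), so the union bound preserves the claimed probability. The one subtle point is that the fine bounds of \cref{lem:err_bound} are not available for the $(t+1)$-st step since $t \ge T_{\veps}$; however, at that point I only need the crude size bound on the perpendicular noise, which follows from \cref{lemma:Gaussian-conditioning} and standard Gaussian concentration alone, without any fine alignment control.
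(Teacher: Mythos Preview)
Your proposal is correct and follows essentially the same approach as the paper's proof: set $t=T_{\veps}$, use \cref{lem:conv_power_iter} to place $t$ in the range where \cref{lem:err_bound} applies, extract $\|\tilde{\vv}^{t-1}_{\parallel}\|_2 \le Cn^{\veps-1/2}$ from the proof of that lemma to bound $\|\hh_t\|_2$, deduce $|\langle \tilde{\vv}^t,\vv\rangle| \ge Cn^{\veps-1/2}$ and hence $|\alpha_{t+1}| \ge n^{1/2+1/24}$, and then finish with one more application of the decomposition and crude Gaussian concentration. One small bookkeeping point: the bound $|c_t Z_t| = O(\sqrt{\log n})$ fails with probability $n^{-C}$ rather than $\exp(-C\sqrt{n})$; to match the stated probability you should instead take, say, $|Z_t| \le n^{1/8}$, which still satisfies $|Z_t| \ll n^{\veps}$ and fails with probability $\exp(-Cn^{1/4}) \le \exp(-C\sqrt{n})$ for large $n$.
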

Combining the conclusions of \cref{lem:conv_power_iter} and \cref{lem:T_eps_and_conv} completes the proof of \cref{thm:conv_power_iter}.

\section{Numerical experiments}
\label{sec:experiments}

We present in this section simulations that support our theories.
For the simplicity of presentation, in the main text we only present experiments for several representative settings. 
We refer interested readers to Appendix \ref{sec:additional_exp} for simulation outcomes under more settings.

\subsection{Comparing alignment and the polynomial recurrence process}
\label{sec:exp-compare}

As demonstrated in \cref{sec:proof-sketch}, a key ingredient of our proof is to connect the tensor alignments $\{\alpha_t\}_{t \geq 0}$ with the polynomial recurrence process $\{X_t\}_{t \geq 0}$ defined in \cref{eq:polynomial-process}. Theoretical result that suggests their closeness has already been established in \cref{prop:summary_alignment}. We complement to this result in this section by providing empirical evidence.  

To set the stage, we choose $n = 200$, $k =3$, $\lambda_n = n^{(k - 1) / 2}$, and generate the tensor data according to \cref{eq:tensor_PCA}. We then run tensor power iteration with random initialization and compare the marginal distributions of $\alpha_t$ and $X_t$, for all $t \in \{1, 2, 3, 4\}$. We repeat this procedure 1000 times independently, and collect the realized values of $\alpha_t$ to form the corresponding empirical distributions. 
We also simulate the polynomial recurrence process $\{X_t\}_{t \geq 0}$ and obtain 1000 independent samples. 
We display the simulation outcomes in Figure \ref{fig:comparison}, which suggests that the marginal distributions already match well with a moderately large $n$.    
\begin{figure}[ht]
  \begin{minipage}[t]{.5\linewidth}
    \includegraphics[width=\linewidth]{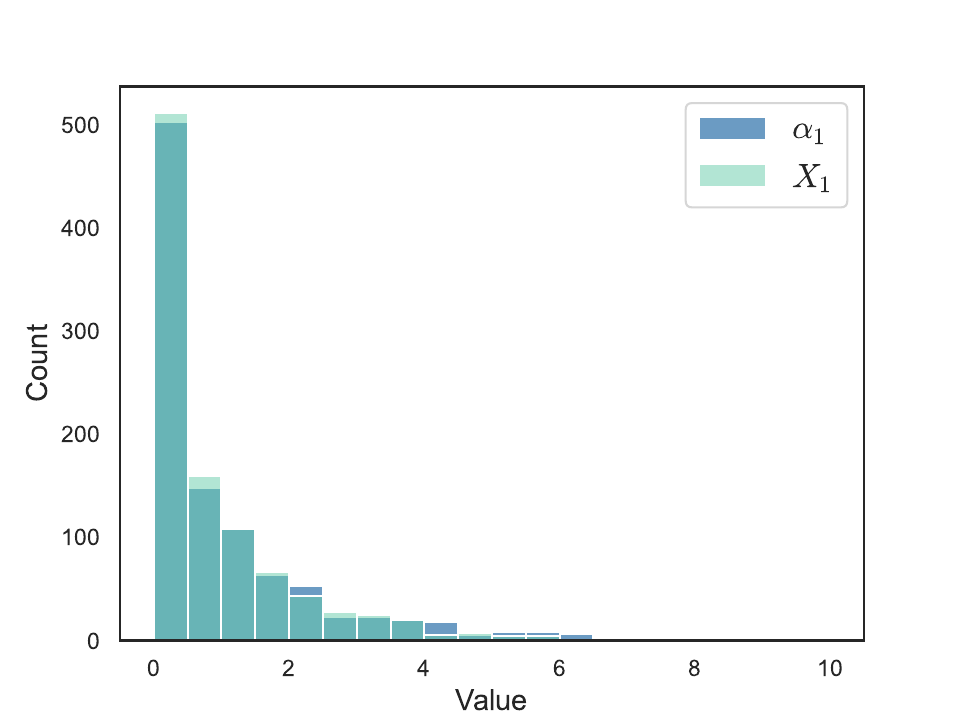}%
    %\caption{This is a duck}%
  \end{minipage}\hfil
  \begin{minipage}[t]{.5\linewidth}
    \includegraphics[width=\linewidth]{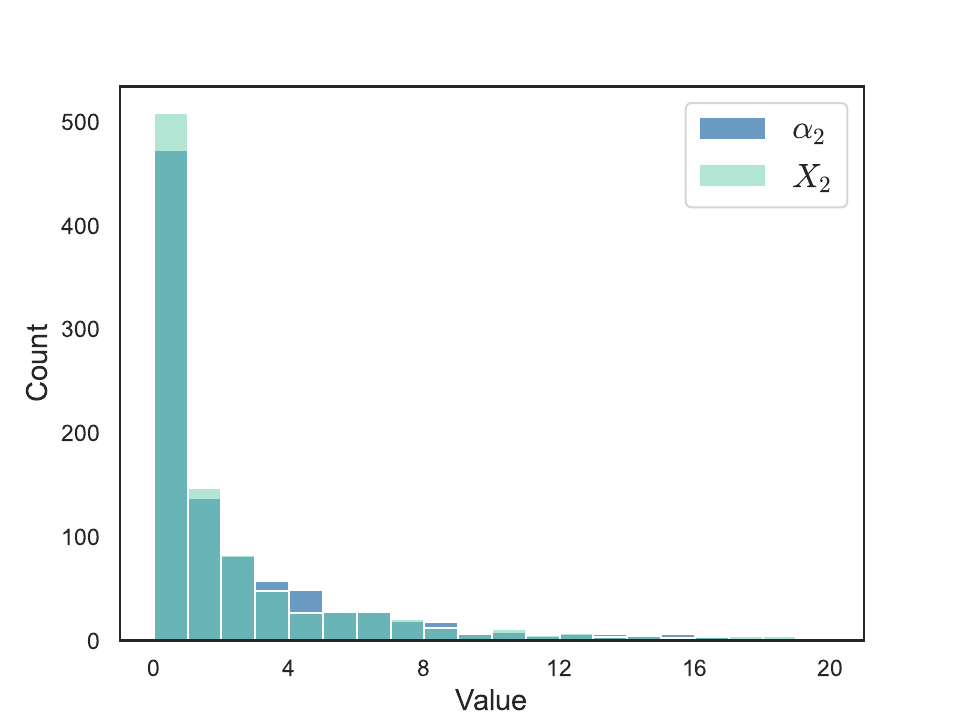}%
    %\caption{This is another duck}%
  \end{minipage}\hfil \\
  \begin{minipage}[t]{.5\linewidth}
    \includegraphics[width=\linewidth]{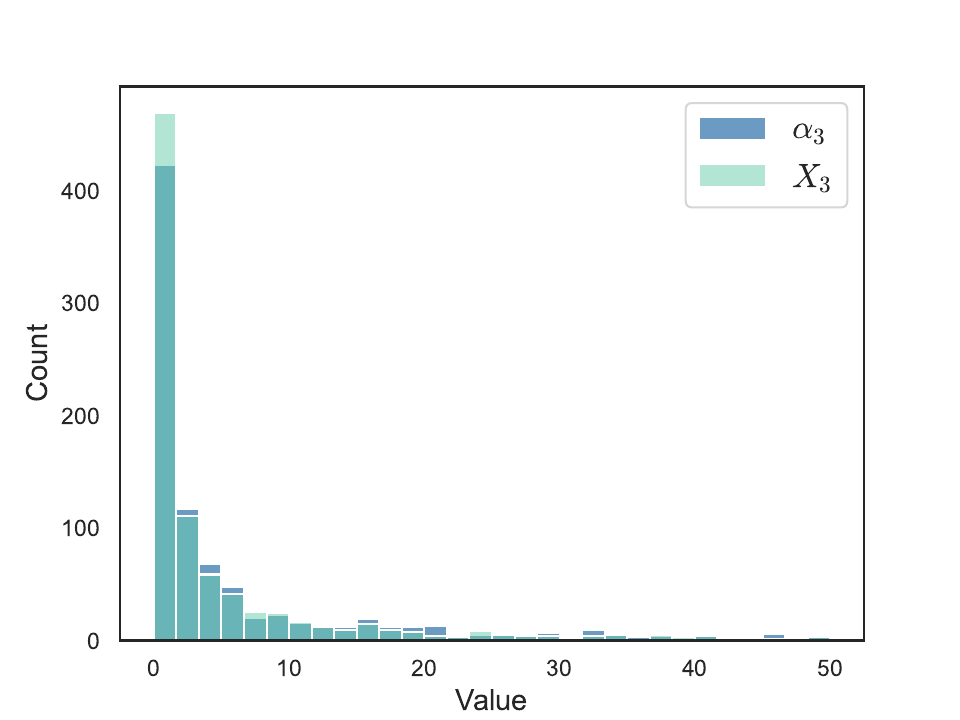}%
    %\caption{But then the ducks arrived}%
  \end{minipage}\hfil
  \begin{minipage}[t]{.5\linewidth}
    \includegraphics[width=\linewidth]{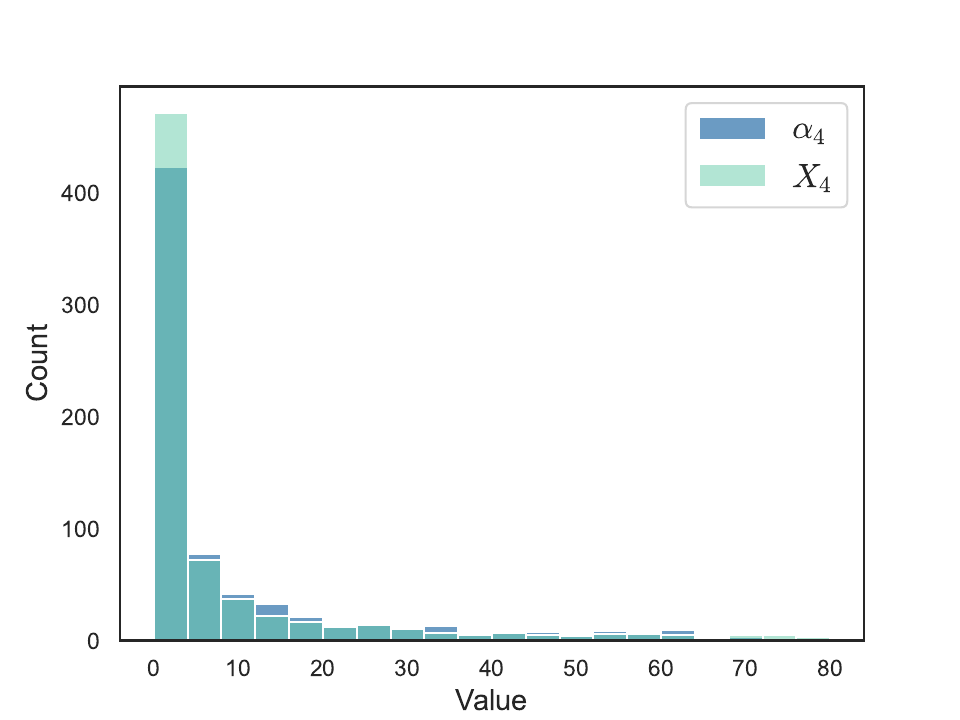}%
    %\caption{This is another duck}%
  \end{minipage}
  \caption{Comparison of the marginal distributions between $\alpha_t$ and $X_t$, for $t \in \{1, 2, 3, 4\}$. Here, we set $n = 200$, $k = 3$, $\lambda_n = n^{(k - 1) / 2}$, and run tensor power iteration from random initialization on independent datasets for 1000 times. \modif{Note that in this figure, the histograms for $\alpha_t$ and $X_t$ overlap a lot with each other (their overlapping regions are indicated by the third color), meaning that the marginal distributions of $\alpha_t$ and $X_t$ are indeed very close. }}
\label{fig:comparison}
\end{figure}

\subsection{Evolution of correlation}

\cref{thm:conv_power_iter} implies that as long as $\lambda_n \gtrsim 1$, tensor power iteration with random initialization will converge to the planted spike within $O(\log \log n)$ iterations. 
In this experiment, we provide numerical evidence that supports this claim. Throughout the experiment, we  set $\lambda_n = n^{(k - 1) / 2}$. In Figure \ref{fig:correlation}, we plot the evolution of correlation $|\langle \tilde{\vv}^t, \vv \rangle|$ as a function of the number of iterations $t$. From the figure, we see that the correlation rapidly increases from 0 to 1 as $t$ increases. Furthermore, the number of iterations required for convergence is nearly independent of the input dimension, suggesting the correctness of our claim. 

\begin{figure}[ht]
	\centering
	\includegraphics[width=\linewidth]{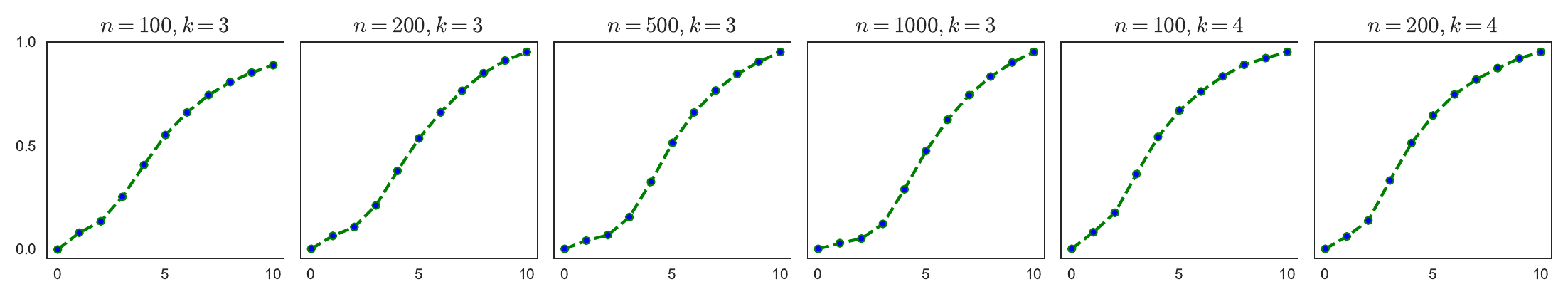}
	\caption{Evolution of correlation $|\langle \tilde{\vv}^t, \vv \rangle|$ as a function of the number of iterations $t$. Here, the $x$ axis represents the number of iterations ranging from 0 to 10, and the $y$ axis gives the level of correlation. We repeat the experiment independently for 1000 times for every combination of $(n, k)$, and compute the average correlation. }
	\label{fig:correlation}
\end{figure}

\subsection{Convergence probability}

Next, we investigate the probability of tensor power iteration with a random initialization converging to the planted spike. For this part we let $\lambda_n = n^{(k - 1) / 2}$, $k = 3$, and use different values of $n$. For each tensor realization, we run tensor power iteration from a random initialization for a sufficiently large number of iterations and check the convergence. 
For each $n \in \{25, 50, 100, 200, 400, 800\}$, we repeat this procedure independently for 1000 times and compute the empirical convergence probability. Here, we say an iterate $\tilde\vv^t$ converges to the true spike if and only if $|\langle \tilde \vv^t, \vv \rangle| > 0.99$. 
We plot such empirical probabilities in Figure \ref{fig:probability}. Inspecting the figure, we see that the $\gamma$-threshold above which power iteration with a random start achieves near probability one convergence decreases and approaches 0 as $n \to \infty$, once again suggesting the correctness of our main theorem. 

\begin{figure}[ht]
	\centering
	\includegraphics[width=0.5\linewidth]{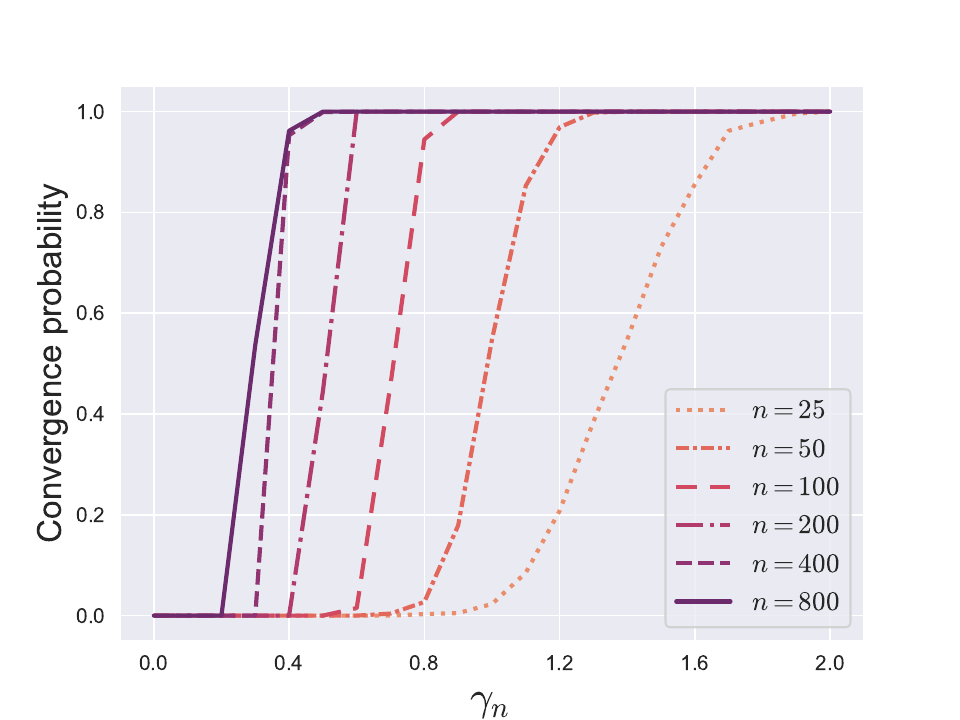}
	\caption{Probability of tensor power iteration with random initialization converging to the hidden spike. The $x$ axis stands for $\gamma_n$ and the $y$ axis gives the empirical convergence probability averaged over 1000 independent experiments. }
	\label{fig:probability}
\end{figure} 

\modif{\subsection{Stopping rule}}
\label{sec:stopping-rule-experiments}

\modif{Finally, we evaluate the performance of the stopping rule proposed in \cref{sec:stopping-rule}. 
In this experiment, we fix $\lambda_n = n^{(k - 1) / 2}$ and consider different combinations of $(n, k)$. For each configuration, we independently generate five tensors following model \eqref{eq:tensor_PCA}, and implement tensor power iteration from a random initialization. We then compute the correlation between the true spike and the iterates measured by $|\langle \tilde \vv^t, \vv \rangle|$, and plot the evolution of this correlation over the first 100 iterations. We also calculate $T_{\sto}$ using \cref{eq:T-stop}. We present the simulation results in Figure \ref{fig:stopping-rule}. From the figure, we see that the proposed stopping rule effectively terminates the algorithm at an early stage while simultaneously maintains high estimation accuracy. 
\begin{figure}[ht]
	\centering
	\includegraphics[width=0.9\linewidth]{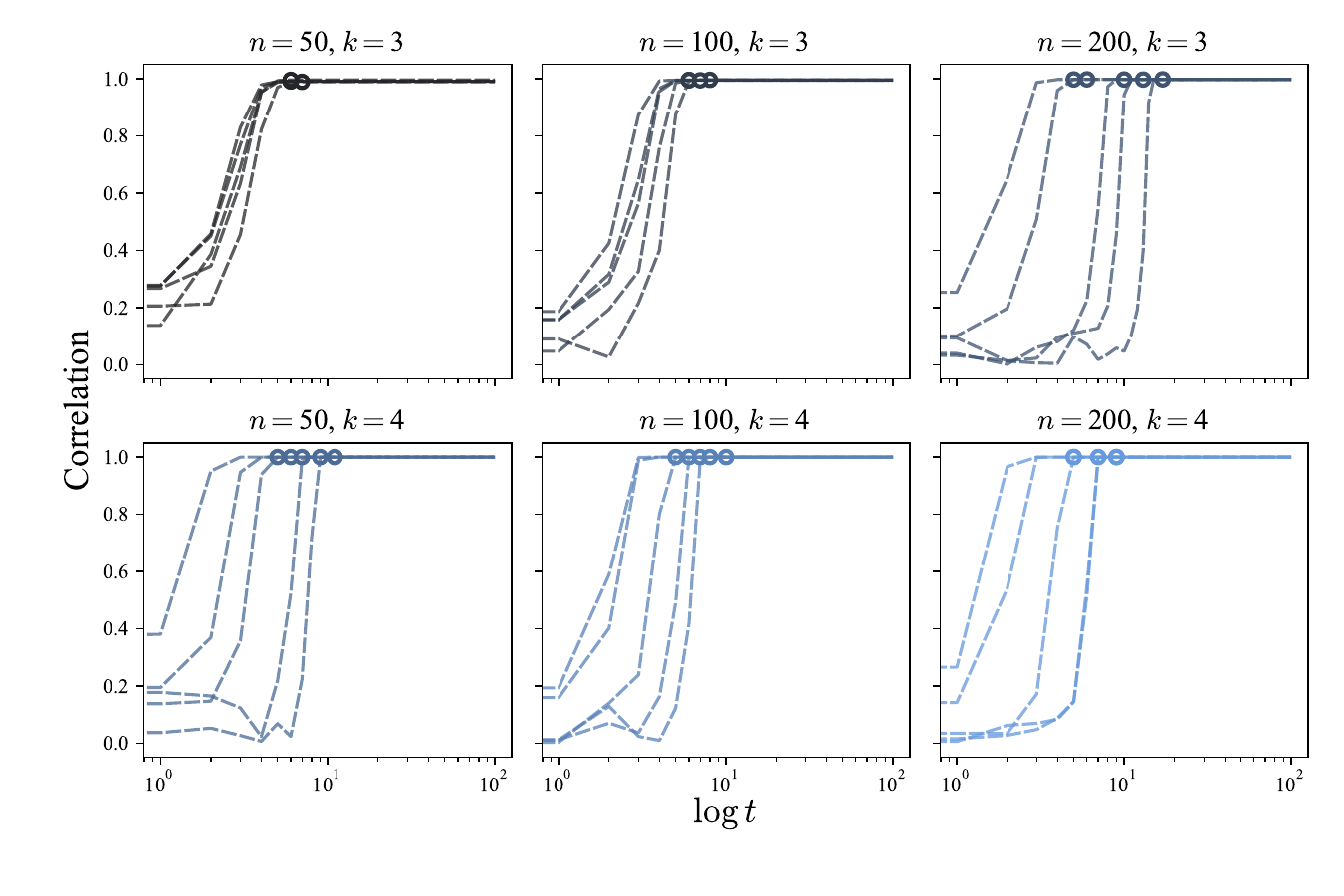}
	\caption{Illustration of the effectiveness of the stopping rule. The $x$ axis here is the logarithmic of the number of iterations, and the $y$ axis shows the correlation. We independently repeat the experiment 5 times for each setting, and record the correlation along the power iteration trajectory. Here, $T_{\sto}$ is computed using the power iteration iterates and is marked with a circle in the figure. }
	\label{fig:stopping-rule}
\end{figure} \newline
In Appendix \ref{sec:stopping-rule-extra}, we alter the value of the stopping threshold (which is $1/2$ in the current experiment) and still observe good performance, suggesting the proposed stopping rule is not sensitive to the choice of the stopping threshold. }

% Acknowledgements and Disclosure of Funding should go at the end, before appendices and references

\acks{The authors would like to thank Tselil Schramm for suggesting this topic, as well as for many helpful conversations.
	The authors also thank Jiaoyang Huang for an insightful discussion.
	K.Z. was supported by the NSF through award DMS-2031883 and the Simons Foundation through Award 814639 for the Collaboration on the Theoretical Foundations of Deep Learning and by the NSF grant CCF-2006489.}

% Manual newpage inserted to improve layout of sample file - not
% needed in general before appendices/bibliography.

\newpage
\bibliography{TPI}

\newpage
\appendix

\section{Technical lemmas}

\begin{lem}[Tails of the normal distribution]\label{lemma:Gaussian-tail}
	Let $g \sim \normal(0,1)$. Then for all $t > 0$, we have 
	\begin{align*}
		\left( \frac{1}{t} - \frac{1}{t^3} \right) \cdot \frac{1}{\sqrt{2\pi}} e^{-t^2 / 2} \leq \P\left(g \geq t \right) \leq \frac{1}{t} \cdot \frac{1}{\sqrt{2\pi}} e^{-t^2 / 2}.
	\end{align*}
\end{lem}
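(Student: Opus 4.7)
The plan is to prove both inequalities via integration by parts, exploiting the identity $\varphi'(x) = -x\varphi(x)$ for the standard normal density $\varphi(x) = (2\pi)^{-1/2} e^{-x^2/2}$. The core trick is to rewrite $\varphi(x) = \tfrac{1}{x}\cdot x\varphi(x)$, which puts the integrand in a form where $x\varphi(x)\,dx = -d\varphi(x)$ acts as the differential in the IBP.

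First, for the upper bound, I would compute
\[
\P(g \ge t) = \int_t^\infty \varphi(x)\, dx = \int_t^\infty \frac{1}{x}\cdot x\varphi(x)\, dx,
\]
and apply integration by parts with $u = 1/x$ and $dv = x\varphi(x)\,dx$, giving
\[
\P(g \ge t) = \frac{\varphi(t)}{t} - \int_t^\infty \frac{\varphi(x)}{x^2}\, dx.
\]
The boundary term at infinity vanishes because $\varphi$ decays super-exponentially, and since the remaining integral is nonnegative, the upper bound $\P(g \ge t) \le \varphi(t)/t$ follows at once.

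For the lower bound, the strategy is to apply the same trick once more to the leftover integral $\int_t^\infty x^{-2}\varphi(x)\,dx$. Writing it as $\int_t^\infty x^{-3}\cdot x\varphi(x)\,dx$ and integrating by parts with $u = 1/x^3$ produces
\[
\int_t^\infty \frac{\varphi(x)}{x^2}\, dx = \frac{\varphi(t)}{t^3} - 3\int_t^\infty \frac{\varphi(x)}{x^4}\, dx \le \frac{\varphi(t)}{t^3}.
\]
Substituting this estimate into the identity from the previous step yields $\P(g \ge t) \ge \left(\tfrac{1}{t} - \tfrac{1}{t^3}\right)\varphi(t)$, which is the claimed lower bound.

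There is really no obstacle here — this is the textbook Mills' ratio calculation, and the only things to verify are the vanishing of boundary terms (immediate from Gaussian decay) and that the sign of the discarded remainder $3\int_t^\infty x^{-4}\varphi(x)\,dx$ goes the right way. I would also note in passing that the lower bound is vacuous for $t \le 1$ (since $1/t - 1/t^3 \le 0$ there while $\P(g \ge t) \ge 0$), so the substantive content is entirely captured by the regime $t > 1$, where the computation above applies without modification.
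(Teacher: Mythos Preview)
Your proof is correct and is exactly the standard Mills' ratio argument via iterated integration by parts. The paper itself does not supply a proof of this lemma---it is listed as a technical fact without justification---so there is nothing to compare against; your derivation is the textbook one and would serve perfectly well if a proof were to be included.
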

\begin{lem}[Bernstein's inequality]\label{lemma:Bernstein}
	Let $X_1, \cdots, X_N$ be independent, mean zero, sub-exponential random variables. Then, for every $t \geq 0$, we have
	\begin{align*}
		\P\left( \left|\sum_{i = 1}^N X_i \right| \geq t \right) \leq 2 \exp \left[ -c \min \left( \frac{t^2}{\sum_{i = 1}^N \|X_i\|_{\Psi_1}^2}, \frac{t}{\max_{i} \|X_i\|_{\Psi_1}} \right) \right].
	\end{align*}
\end{lem}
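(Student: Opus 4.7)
The plan is to prove this by the classical Chernoff/Laplace transform method: control the moment generating function (MGF) of each summand, combine via independence, apply Markov's inequality to the exponentiated sum, and then optimize the dual parameter.

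First, I would establish the single-variable MGF bound. For a mean-zero random variable $X$ with $K := \|X\|_{\Psi_1}$, the equivalence of the standard characterizations of the sub-exponential norm yields the moment bound $\E|X|^p \le C^p K^p p^p$ for all integers $p \ge 1$. Plugging this into the Taylor expansion $\E\,e^{\lambda X} = 1 + \lambda \E X + \sum_{p \ge 2} \lambda^p \E X^p / p!$ and using $\E X = 0$ to kill the linear term gives the standard bound
\begin{equation*}
\E\,\exp(\lambda X) \le \exp(C \lambda^2 K^2) \qquad \text{for all } |\lambda| \le c/K,
\end{equation*}
for suitable absolute constants $C, c > 0$.

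Next, by independence of the $X_i$'s, writing $K_i := \|X_i\|_{\Psi_1}$ and $\sigma^2 := \sum_i K_i^2$, any $\lambda$ with $|\lambda| \le c/\max_i K_i$ satisfies $\E\,\exp(\lambda \sum_i X_i) = \prod_i \E\,\exp(\lambda X_i) \le \exp(C \lambda^2 \sigma^2)$. Applying Markov's inequality then gives $\P(\sum_i X_i \ge t) \le \exp(-\lambda t + C \lambda^2 \sigma^2)$, and I optimize over the admissible range of $\lambda$. The unconstrained minimum of $-\lambda t + C \lambda^2 \sigma^2$ is attained at $\lambda^\star = t/(2 C \sigma^2)$ with value $-t^2/(4 C \sigma^2)$; this choice is admissible precisely when $t \le 2Cc\, \sigma^2/\max_i K_i$, which corresponds to the sub-Gaussian regime. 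In the complementary regime one plugs in the boundary value $\lambda = c/\max_i K_i$, producing a bound of the form $\exp(-c' t/\max_i K_i)$ after absorbing the resulting $\sigma^2$ term into the linear one. Taking the weaker bound in each regime recovers exactly the stated minimum of the sub-Gaussian and sub-exponential exponents. To pass from $\sum_i X_i$ to $|\sum_i X_i|$, I apply the identical argument to $\{-X_i\}$, which are mean-zero with the same sub-exponential norms, and a union bound produces the factor of $2$.

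The only genuinely substantive step is the single-variable MGF bound; the remainder is elementary convex optimization and a union bound. The result is entirely classical (e.g., Vershynin's text on high-dimensional probability), and no real obstacle is expected, so for the appendix I would likely just cite a standard reference rather than reproduce the derivation.
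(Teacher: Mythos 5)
Your proof sketch is the standard Chernoff/Laplace transform argument for Bernstein's inequality (as in Vershynin, \emph{High-Dimensional Probability}, Theorem 2.8.1), and each step is correct: the single-variable MGF bound for centered sub-exponential variables, tensorization by independence, Markov's inequality, case analysis on whether the unconstrained optimizer $\lambda^\star = t/(2C\sigma^2)$ lies in the admissible window $|\lambda|\le c/\max_i K_i$, and a union bound over the two tails. The paper, however, does not prove this lemma at all --- it is stated in the appendix as a black-box technical tool --- so your closing remark that one would ``likely just cite a standard reference rather than reproduce the derivation'' is exactly what the authors do, and is the right call.
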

\begin{lem}\label{lem:norm_concentration}
	Let $\xx_i \stackrel{\iid}{\sim} \normal (\bzero, \id_n)$ for $i = 1, \cdots, m$, where $m < \sqrt{n}$. Then, we have
	\begin{equation*}
		\P \left( \sup_{\boldsymbol{\alpha} \in \S^{m-1}} \left\vert \norm{\sum_{i=1}^{m} \alpha_i \xx_i}_2 - \sqrt{n} \right\vert \ge \veps \sqrt{n} \right) \le \exp(- C \veps^2 n).
	\end{equation*}
	for some absolute constant $C > 0$.
\end{lem}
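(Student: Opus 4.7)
The plan is to recognize this bound as a standard control on the extreme singular values of a tall Gaussian matrix. Form $\bX \in \R^{n \times m}$ with columns $\xx_1,\dots,\xx_m$, so that $\sum_{i=1}^m \alpha_i \xx_i = \bX\boldsymbol{\alpha}$ and
\begin{equation*}
\sup_{\boldsymbol{\alpha} \in \S^{m-1}} \norm{\bX\boldsymbol{\alpha}}_2 = \sigma_{\max}(\bX), \qquad \inf_{\boldsymbol{\alpha} \in \S^{m-1}} \norm{\bX\boldsymbol{\alpha}}_2 = \sigma_{\min}(\bX).
\end{equation*}
Thus the event in the lemma is contained in $\{\sigma_{\max}(\bX) > (1+\veps)\sqrt{n}\} \cup \{\sigma_{\min}(\bX) < (1-\veps)\sqrt{n}\}$, and it suffices to control each piece separately.

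The first ingredient I would use is the Gordon / Davidson--Szarek bound on the expected extreme singular values, namely $\E\sigma_{\max}(\bX) \le \sqrt{n}+\sqrt{m}$ and $\E\sigma_{\min}(\bX) \ge \sqrt{n}-\sqrt{m}$. The second is that both $\bX \mapsto \sigma_{\max}(\bX)$ and $\bX \mapsto \sigma_{\min}(\bX)$ are $1$-Lipschitz in the Frobenius norm on $\R^{n\times m}$. The Gaussian concentration inequality then yields
\begin{equation*}
\P\bigl(\sigma_{\max}(\bX) \ge \sqrt{n}+\sqrt{m}+t\bigr) \le e^{-t^2/2}, \qquad \P\bigl(\sigma_{\min}(\bX) \le \sqrt{n}-\sqrt{m}-t\bigr) \le e^{-t^2/2}.
\end{equation*}
Since $m < \sqrt{n}$ gives $\sqrt{m} < n^{1/4} = o(\sqrt{n})$, the choice $t = \veps\sqrt{n}/2$ absorbs the $\sqrt{m}$ slack once $n$ is large enough (equivalently, once $\veps \gtrsim n^{-1/4}$), yielding $\sigma_{\max}(\bX) \le (1+\veps)\sqrt{n}$ and $\sigma_{\min}(\bX) \ge (1-\veps)\sqrt{n}$ with probability at least $1 - 2e^{-\veps^2 n/8}$. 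A union bound then delivers the lemma with, say, $C = 1/16$.

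A slightly more self-contained alternative is a direct $\delta$-net argument. Take a $1/2$-net $\cN \subset \S^{m-1}$ with $|\cN| \le 5^m$; for each fixed $\boldsymbol{\alpha} \in \cN$ the vector $\bX\boldsymbol{\alpha}$ is exactly $\normal(\bzero,\id_n)$, so the chi concentration $\P(|\norm{\bX\boldsymbol{\alpha}}_2 - \sqrt{n}| \ge t) \le 2\exp(-ct^2)$ and a union bound control $\max_{\boldsymbol{\alpha}\in\cN}|\norm{\bX\boldsymbol{\alpha}}_2 - \sqrt{n}|$, and the standard argument transfers this to the supremum over $\S^{m-1}$ at a constant cost. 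The mild obstacle to watch is the balancing of the combinatorial net cardinality against the single-point tail probability: the hypothesis $m < \sqrt{n}$ is tailored precisely so that $|\cN| \le \exp(\sqrt{n}\log 5)$ is dominated by the concentration factor $\exp(-c\veps^2 n/16)$ for the relevant regime of $\veps$. For $\veps$ so small that $\exp(-C\veps^2 n)$ is essentially $1$ the claimed bound is vacuous and requires no separate treatment.
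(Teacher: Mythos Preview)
Your proposal is correct. Your primary route---recasting the supremum and infimum as $\sigma_{\max}(\bX)$ and $\sigma_{\min}(\bX)$ and then invoking the Davidson--Szarek expectation bounds together with Gaussian Lipschitz concentration---is a genuinely different (and cleaner) argument than the paper's. The paper instead carries out the $\veps$-net argument you sketch as your ``self-contained alternative'': it first controls $\|\bX\|_{\op}$ to get a Lipschitz bound for $\balpha \mapsto \|\bX\balpha\|_2$ on the sphere, then applies chi concentration at each net point and a union bound over a net of cardinality $(C/\veps)^m$. Your singular-value approach buys a shorter proof that appeals to off-the-shelf random matrix facts; the paper's net argument is more elementary but essentially reproduces those facts from scratch. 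Both approaches share the same mild slack issue in the regime $n^{-1/2} \ll \veps \lesssim n^{-1/4}$ (where the $\sqrt{m}$ or $m\log(1/\veps)$ overhead competes with $\veps^2 n$), but this is irrelevant for the paper's applications, which only invoke the lemma with $\veps \asymp n^{-1/6}$.
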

\begin{proof}
	We use a covering argument. For $\boldsymbol{\alpha}, \boldsymbol{\beta} \in \S^{m-1}$ with $\norm{\boldsymbol{\alpha} - \boldsymbol{\beta}}_2 \le \veps$, we have
	\begin{equation*}
		\left\vert \norm{\sum_{i=1}^{m} \alpha_i \xx_i}_2 - \norm{\sum_{i=1}^{m} \beta_i \xx_i}_2 \right\vert \le \norm{\XX}_{\rm op} \norm{\balpha - \bbeta}_2 \le C \sqrt{n} \veps
	\end{equation*}
	with probability at least $1 - \exp(-Cn)$, where $\XX$ is the matrix whose $i$-th column is $\xx_i$.
	For any fixed $\balpha \in \S^{m-1}$, we have $\sum_{i=1}^{m} \alpha_i \xx_i \sim \normal (\bzero, \id_n)$. Therefore, for any $t \in (0, \sqrt{n})$, we have
	\begin{equation*}
		\P \left( \left\vert \norm{\sum_{i=1}^{m} \alpha_i \xx_i}_2 - \sqrt{n} \right\vert \ge t \right) \le 2 \exp \left( - \frac{t^2}{8} \right),
	\end{equation*}
	where the last inequality follows from concentration of sub-exponential random variables. Now, let $N_{\veps}^{m}$ be an $\veps$-net of $\S^{m-1}$, we thus obtain that
	\begin{align*}
		& \P \left( \sup_{\boldsymbol{\alpha} \in \S^{m-1}} \left\vert \norm{\sum_{i=1}^{m} \alpha_i \xx_i}_2 - \sqrt{n} \right\vert \ge 2 C \sqrt{n} \veps \right) \\
		\le \, & \exp (-Cn) + \P \left( \sup_{\balpha \in N_{\veps}^{m}} \left\vert \norm{\sum_{i=1}^{m} \alpha_i \xx_i}_2 - \sqrt{n} \right\vert \ge C \sqrt{n} \veps \right) \\
		\le \, & \exp(-Cn) + \left( \frac{C}{\veps} \right)^m \times 2 \exp \left( - \frac{C^2 n \veps^2}{8} \right) \le \exp (-C \veps^2 n).
	\end{align*}
	Replacing $\veps$ by $C \veps$ completes the proof.
\end{proof}

\section{Proof of \cref{lemma:Gaussian-conditioning}} \label{sec:proof-of-conditioning-lemma}

	Recall that $\cF_t$ is defined in \cref{eq:def_F_t}. We first show that
	\begin{equation*}
		\cF_t =  \sigma \left(  \left\{ \ww_{i_1, i_2, \cdots, i_{k - 1}}: (i_1, i_2, \cdots, i_{k - 1}) \in \HH_{t - 1}  \right\} \cup \left\{\bg, \vv \right\}\right),
	\end{equation*}
	which is equivalent to proving that $\vv^0, \vv^1, \cdots, \vv^t$ are measurable with respect to the $\sigma$-algebra on the right hand side of the above equation. Since $\vv^0 = \bg / \norm{\bg}_2$, we know that $\vv^0$ is measurable. Using decomposition \eqref{eq:expression-vt+1} with $t=0$, we know that $\vv^1$ is measurable as well. Repeating this argument yields that $\vv^2, \cdots, \vv^t$ are all measurable. This proves our claim.
	
	Next, we are in position to prove the lemma. To avoid heavy notation, we denote
	\begin{equation*}
		\uu_{i_1, \cdots, i_{k-1}} = \prod_{j=1}^{k-1} \norm{\tilde{\vv}_{\perp}^{i_j}}_2^{-1} \cdot \left( \tilde{\vv}_{\perp}^{i_1} \otimes \cdots \otimes \tilde{\vv}_{\perp}^{i_{k-1}} \right). 
	\end{equation*}
	We further define for $(i_1, \cdots, i_{k-1}) \in \NN^{k-1}$ and $\WW \in (\R^n)^{\otimes k}$ the rank-one tensor:
	\begin{equation}
		 {\rm P}_{(i_1, \cdots, i_{k-1})} \WW = \WW \left[ \uu_{i_1, \cdots, i_{k-1}} \right] \otimes \uu_{i_1, \cdots, i_{k-1}}.
	\end{equation}
	Straightforward calculation reveals that for $(i_1, \cdots, i_{k-1}), \ (j_1, \cdots, j_{k-1}) \in \NN^{k-1}$,
	\begin{align*}
		&{\rm P}_{(j_1, \cdots, j_{k-1})} {\rm P}_{(i_1, \cdots, i_{k-1})} \WW = {\rm P}_{(i_1, \cdots, i_{k-1})} \WW \left[ \uu_{j_1, \cdots, j_{k-1}} \right] \otimes \uu_{j_1, \cdots, j_{k-1}} \\
		=\, & \left\langle \uu_{i_1, \cdots, i_{k-1}}, \uu_{j_1, \cdots, j_{k-1}} \right\rangle \cdot \WW \left[ \uu_{i_1, \cdots, i_{k-1}} \right] \otimes \uu_{j_1, \cdots, j_{k-1}} \\
		\stackrel{(i)}{=}\, & \delta_{i_1 j_1} \cdots \delta_{i_{k-1} j_{k-1}} \cdot \WW \left[ \uu_{i_1, \cdots, i_{k-1}} \right] \otimes \uu_{j_1, \cdots, j_{k-1}} \\
		=\, & \delta_{i_1 j_1} \cdots \delta_{i_{k-1} j_{k-1}} \cdot {\rm P}_{(i_1, \cdots, i_{k-1})} \WW,
	\end{align*}
	where $(i)$ follows from the fact that the $\uu_{i_1, \cdots, i_{k-1}}$'s are mutually orthogonal, and $\delta$ represents the Kronecker delta: $\delta_{ij} = \bone \{ i = j \}$. Moreover, for any subset $S \subset \NN^{k-1}$, define
	\begin{equation}
		{\rm P}_S \WW = \sum_{(i_1, \cdots, i_{k-1}) \in S} {\rm P}_{(i_1, \cdots, i_{k-1})} \WW, \quad {\rm P}_S^{\perp} \WW = \WW - {\rm P}_S \WW. 
	\end{equation}
	One can prove using the previous calculation that for $S, \ T \subset \NN^{k-1}$,
	\begin{equation}
		{\rm P}_S {\rm P}_T = {\rm P}_T {\rm P}_S = {\rm P}_{S \cap T}.
	\end{equation}
	We will show that for all $t \in \NN$,
	\begin{equation}\label{eq:decompose_W}
		\WW = {\rm P}_{\HH_{t-1}} \WW + {\rm P}_{\HH_{t-1}}^{\perp} \tilde{\WW}_t,
	\end{equation}
	where $\tilde{\WW}_t \stackrel{d}{=} \WW$ and is independent of $\cF_t$. We prove \cref{eq:decompose_W} by induction. For $t=0$, it is obvious that we can simply choose $\tilde{\WW}_0 = \WW$, since $\WW$ is independent of $\cF_0 = \sigma(\{ \bg, \vv \})$. Now assume \eqref{eq:decompose_W} holds for $t=s$, namely we have
	\begin{equation}\label{eq:decompose_W_s}
		\WW = {\rm P}_{\HH_{s-1}} \WW + {\rm P}_{\HH_{s-1}}^{\perp} \tilde{\WW}_s, \quad \tilde{\WW}_s \stackrel{d}{=} \WW, \ \tilde{\WW}_s \perp \cF_s.
	\end{equation}
	Then, for $t=s+1$, let us define
	\begin{equation}
		\tilde{\WW}_{s+1} = {\rm P}_{\HH_s}^{\perp} \tilde{\WW}_s + {\rm P}_{\HH_s} \WW'_s,
	\end{equation}
	where $\WW'_s \stackrel{d}{=} \WW$ and is independent of everything else. We show that $\tilde{\WW}_{s+1}$ defined as above satisfies our requirement. To this end, first note that since $\HH_{s-1} \subset \HH_s$,
	\begin{align*}
		{\rm P}_{\HH_s}^{\perp} \WW =\, & {\rm P}_{\HH_s}^{\perp} {\rm P}_{\HH_{s-1}} \WW + {\rm P}_{\HH_s}^{\perp} {\rm P}_{\HH_{s-1}}^{\perp} \tilde{\WW}_s = {\rm P}_{\HH_s}^{\perp} \tilde{\WW}_s, \\
		{\rm P}_{\HH_s}^{\perp} \tilde{\WW}_{s+1} =\, & {\rm P}_{\HH_s}^{\perp} {\rm P}_{\HH_s}^{\perp} \tilde{\WW}_s + {\rm P}_{\HH_s}^{\perp} {\rm P}_{\HH_s} \WW'_s = {\rm P}_{\HH_s}^{\perp} \tilde{\WW}_s,
	\end{align*}
	which further implies that ${\rm P}_{\HH_s}^{\perp} \WW = {\rm P}_{\HH_s}^{\perp} \tilde{\WW}_{s+1}$. Hence, we deduce that
	\begin{equation}
		\WW = {\rm P}_{\HH_s} \WW + {\rm P}_{\HH_s}^{\perp} \WW = {\rm P}_{\HH_s} \WW + {\rm P}_{\HH_s}^{\perp} \tilde{\WW}_{s+1},
	\end{equation}
	i.e., \cref{eq:decompose_W} holds for $t = s+1$. Next, it suffices to show that $\tilde{\WW}_{s+1} \stackrel{d}{=} \WW$ and that $\tilde{\WW}_{s+1}$ is independent of $\cF_{s+1}$. Recall that we already proved
	\begin{equation}
		\cF_{s+1} =  \sigma \left(  \left\{ \ww_{i_1, i_2, \cdots, i_{k - 1}}: (i_1, i_2, \cdots, i_{k - 1}) \in \HH_{s}  \right\} \cup \left\{\bg, \vv \right\}\right).
	\end{equation}
	According to \cref{eq:decompose_W_s} and direct calculation, we know that for $(i_1, \cdots, i_{k-1}) \in \HH_s \backslash \HH_{s-1}$,
	\begin{equation*}
		\ww_{i_1, \cdots, i_{k-1}} = \WW \left[ \uu_{i_1, \cdots, i_{k-1}}  \right] = \tilde{\WW}_s \left[ \uu_{i_1, \cdots, i_{k-1}}  \right],
	\end{equation*}
	and that $\uu_{i_1, \cdots, i_{k-1}} \in \cF_s$. Therefore,
	\begin{equation}
		\cF_{s+1} = \sigma \left( \cF_s \cup \sigma \left\{ \tilde{\WW}_s \left[ \uu_{i_1, \cdots, i_{k-1}}  \right]: (i_1, \cdots, i_{k-1}) \in \HH_s \backslash \HH_{s-1} \right\} \right).
	\end{equation}
	Next we compute the conditional distribution of ${\rm P}_{\HH_s}^{\perp} \tilde{\WW}_s$ given $\cF_{s+1}$, which is equivalent to the law of ${\rm P}_{\HH_s}^{\perp} \tilde{\WW}_s$ conditioning on $\cF_s$ and the random variables $\tilde{\WW}_s \left[ \uu_{i_1, \cdots, i_{k-1}}  \right]$ for $(i_1, \cdots, i_{k-1}) \in \HH_s \backslash \HH_{s-1}$. Here, we can view the $(k-1)$-tensors $\uu_{i_1, \cdots, i_{k-1}}$ as fixed since they are measurable with respect to $\cF_s$. By definition, these $\uu_{i_1, \cdots, i_{k-1}}$'s are mutually orthogonal and belong to $\HH_s$, and we know from induction hypothesis that $\tilde{\WW}_s \vert \cF_s \stackrel{d}{=} \WW$. Applying Lemma 4.1 in \cite{huang2022power} yields that the conditional distribution of ${\rm P}_{\HH_s}^{\perp} \tilde{\WW}_s$ is equal to the law of ${\rm P}_{\HH_s}^{\perp} \WW''_s$, where $\WW''_s$ is an independent copy of $\tilde{\WW}_s$ which is further independent of $\cF_{s+1}$. As a consequence, it follows that
	\begin{equation*}
		\tilde{\WW}_{s+1} \vert \cF_{s+1} \stackrel{d}{=} {\rm P}_{\HH_{s}}^{\perp} \WW''_s + {\rm P}_{\HH_s} \WW'_s \vert \cF_{s+1} \stackrel{d}{=} \WW,	
	\end{equation*}
	i.e., $\tilde{\WW}_{s+1} \stackrel{d}{=} \WW$ is independent of $\cF_{s+1}$. This completes the induction. Now, using \cref{eq:decompose_W}, we know that for $(i_1, \cdots, i_{k-1}) \in \HH_t \backslash \HH_{t-1}$, $\ww_{i_1, \cdots, i_{k-1}} = \tilde{\WW}_t [\uu_{i_1, \cdots, i_{k-1}}]$, where $\tilde{\WW}_t \perp \cF_t$, $\{ \uu_{i_1, \cdots, i_{k-1}} \}$ is an orthonormal set that is measurable with respect to $\cF_t$. It then follows immediately that $\ww_{i_1, \cdots, i_{k-1}} \iidsim \normal (\bzero, \id_n)$ are independent of $\cF_t$ for $(i_1, \cdots, i_{k-1}) \in \HH_t \backslash \HH_{t-1}$. This completes the proof of \cref{lemma:Gaussian-conditioning}.
	
\section{Proof of \cref{lem:err_bound}}
\label{sec:proof-lem:err_bound}

The entire argument is divided into three parts:
	\paragraph{Concentration of $\zeta_t$.} To this end, we need to estimate $\norm{\vv^t}_2$. From Eq.~\eqref{eq:expression-vt+1}, we know that
	\begin{equation}
		\vv^{t} =\, \alpha_{t} \vv + \sum_{(i_1, \cdots, i_{k - 1}) \in \HH_{t-1}} \beta^{(t-1)}_{i_1, i_2, \cdots, i_{k - 1}} \ww_{i_1, i_2, \cdots, i_{k - 1}}.
	\end{equation}
	Since $\ww_{i_1, i_2, \cdots, i_{k - 1}} \stackrel{\iid}{\sim} \normal (\bzero, \id_n)$, and $\sum_{(i_1, \cdots, i_{k - 1}) \in \HH_{t-1}} (\beta^{(t-1)}_{i_1, i_2, \cdots, i_{k - 1}})^2 = 1$, we deduce from \cref{lem:norm_concentration} that with probability at least $1 - \exp(-C \eta^2 n)$, for all $t < n^{1/2(k-1)}$,
	\begin{equation}\label{eq:vt_norm_estimate}
		(1 - \eta) \sqrt{n} \le \norm{\sum_{(i_1, \cdots, i_{k - 1}) \in \HH_{t-1}} \beta^{(t-1)}_{i_1, i_2, \cdots, i_{k - 1}} \ww_{i_1, i_2, \cdots, i_{k - 1}}}_2 \le (1 + \eta) \sqrt{n},
	\end{equation}
	where $\eta > 0$ is a small constant. Further, as long as $t < T_{\veps}$, by definition we have
	\begin{equation*}
		\norm{\alpha_t \vv}_2 = \left\vert \alpha_t \right\vert < n^{\veps},
	\end{equation*}
	which leads to
	\begin{equation}
		(1 - 2 \eta) \sqrt{n} \le (1 - \eta) \sqrt{n} - n^{\veps} \le \norm{\vv^t}_2 \le (1 + \eta) \sqrt{n} + n^\veps \le (1 + 2 \eta) \sqrt{n}.
	\end{equation}
	To summarize, we conclude that with probability at least $1 - \exp(-C \eta^2 n)$, the following happens:
	\begin{equation}\label{eq:vt_norm_bound}
		\text{For all} \ t < \min ( T_{\veps}, n^{1/2(k-1)} ), \
		\norm{\vv^t}_2 \in [(1 - \eta) \sqrt{n}, (1 + \eta) \sqrt{n}].
	\end{equation}
	Recall that $\zeta_t = (\sqrt{n} / \norm{\vv^t}_2)^{k-1}$, the above bound also implies that $\zeta_t \in [1 - 2(k-1) \eta, 1 + 2 (k-1) \eta]$ for sufficiently small $\eta > 0$. In fact, from the proof of \cref{lem:norm_concentration} we know that $\eta$ can be chosen as $n^{-1/6}/2(k-1)$, so that
	\begin{equation}
		\P \left( \zeta_t \in [1 - n^{-1/6}, 1 + n^{-1/6}] \ \text{for all} \ t < \min ( T_{\veps}, n^{1/2(k-1)} ) \right) \ge 1 - \exp(-C n^{2/3}).
	\end{equation}
	
	\paragraph{Concentration of $c_t$.} To show that $c_t$ is close to $1$ with high probability, we need to control $\|\tilde{\vv}^{t-1}_{\parallel}\|_2$. By definition, we have
	\begin{equation*}
		\norm{\tilde{\vv}^{t-1}_{\parallel}}_2 = \frac{\norm{\vv^{t-1}_{\parallel}}_2}{\norm{\vv^{t-1}}_2} \le \frac{1}{1 - \eta} \frac{\norm{\vv^{t-1}_{\parallel}}_2}{\sqrt{n}}
	\end{equation*}
	for all $t < \min ( T_{\veps}, n^{1/2(k-1)} ) + 1$ with probability at least $1 - \exp(-C \eta^2 n)$. It then suffices to establish an upper bound for $\norm{\vv^{t}_{\parallel}}_2$ with $t < \min ( T_{\veps}, n^{1/2(k-1)} )$. Note that
	\begin{align*}
		\vv^{t}_{\parallel} = \, & \Pi_{\VV^{t-1}} \vv^t = \Pi_{\VV^{t-1}} \left( \alpha_{t} \vv + \hh_{t} + \sqrt{1 - \|\tilde{\vv}^{t-1}_{\parallel}\|_2^{2k - 2}} \bg_{t} \right) \\
		= \, & \alpha_{t} \cdot \Pi_{\VV^{t-1}} \vv + \Pi_{\VV^{t-1}} \hh_{t} + \sqrt{1 - \|\tilde{\vv}^{t-1}_{\parallel}\|_2^{2k - 2}} \cdot \Pi_{\VV^{t-1}} \bg_{t},
	\end{align*}
	which further implies that
	\begin{equation}
		\norm{\vv^{t}_{\parallel}}_2 \le \alpha_t + \norm{\hh_t}_2 + \sqrt{1 - \|\tilde{\vv}^{t-1}_{\parallel}\|_2^{2k - 2}} \cdot \norm{\Pi_{\VV^{t-1}} \bg_{t}}_2.
	\end{equation}
	By definition of $\hh_t$, we know that $\hh_t$ can be written as
	\begin{equation*}
		\hh_t = \|\tilde{\vv}^{t-1}_{\parallel}\|_2^{k - 1} \cdot \sum_{j=1}^{(t-1)^{k-1}} \beta_j \ww_j,
	\end{equation*}
	where $\sum_{j=1}^{(t-1)^{k-1}} \beta_j^2 = 1$ and $\ww_j \sim_{\iid} \normal(\bzero, \id_n)$. Applying again \cref{lem:norm_concentration}, it follows that
	\begin{equation}
		\norm{\hh_t}_2 \le 2 \sqrt{n} \cdot \|\tilde{\vv}^{t-1}_{\parallel}\|_2^{k - 1} \ \text{for all} \ t < n^{1 / 2(k-1)}
	\end{equation}
	with probability at least $1 - \exp(-Cn)$. Further, since $\bg_t$ is independent of $\VV^{t-1}$, we have $\norm{\Pi_{\VV^{t-1}} \bg_{t}}_2^2 \sim \chi^2 (t)$. As a consequence,
	\begin{equation}
		\P \left( \norm{\Pi_{\VV^{t-1}} \bg_{t}}_2 \le C n^{\veps} \ \text{for all} \ t < n^{1/2(k-1)} \right) \ge 1 - \exp \left( -C n^{2 \veps} \right).
	\end{equation}
	To summarize, with probability $1 - \exp(-C n^{2 \veps})$, the following estimate holds for all $t < \min ( T_{\veps}, n^{1/2(k-1)} )$:
	\begin{align*}
		\norm{\vv^{t}_{\parallel}}_2 \le \, & \alpha_t + \norm{\hh_t}_2 + \sqrt{1 - \|\tilde{\vv}^{t-1}_{\parallel}\|_2^{2k - 2}} \cdot \norm{\Pi_{\VV^{t-1}} \bg_{t}}_2 \\
		\le \, & n^{\veps} + 2 \sqrt{n} \cdot \|\tilde{\vv}^{t-1}_{\parallel}\|_2^{k - 1} + Cn^{\veps} \sqrt{1 - \|\tilde{\vv}^{t-1}_{\parallel}\|_2^{2k - 2}} \\
		\le \, & C \left( n^{\veps} + \sqrt{n} \cdot \|\tilde{\vv}^{t-1}_{\parallel}\|_2^{k - 1} \right),
	\end{align*}
	which further implies that
	\begin{align*}
		\norm{\tilde{\vv}^{t}_{\parallel}}_2 \le C \left( n^{\veps - 1/2} + \norm{\tilde{\vv}^{t-1}_{\parallel}}_2^{k - 1} \right).
	\end{align*}
	At initialization, we have $\norm{\tilde{\vv}^{0}_{\parallel}}_2 = 0$. We will use induction to show that $\norm{\tilde{\vv}^{t}_{\parallel}}_2 \le (C+1) n^{\veps - 1/2}$ as long as the above inequality holds. Assume this is true for $t-1$, then we have
	\begin{align*}
		\norm{\tilde{\vv}^{t}_{\parallel}}_2 \le \, & C \left( n^{\veps - 1/2} + (C+1)^{k - 1} n^{(k-1) (\veps - 1/2)} \right) \\
		\le \, & n^{\veps - 1/2} \left( C + C (C+1)^{k-1} n^{(k-2)(\veps - 1/2)} \right) \\
		\le \, & (C + 1) n^{\veps - 1/2}
	\end{align*}
	for sufficiently large $n$. We thus conclude that
	\begin{equation}
		\P \left( \left\vert c_t - 1 \right\vert \le C n^{2(k-1) (\veps - 1 / 2)} \ \text{for all} \ t < \min ( T_{\veps}, n^{1/2(k-1)} ) \right) \ge 1 - \exp(-C n^{2 \veps}).
	\end{equation}
	\paragraph{Concentration of $b_t$.} Recall the definition of $b_t$:
	\begin{align*}
		b_t = \, \langle \hh_t, \vv \rangle = \, & \sum_{(i_1, \cdots, i_{k - 1}) \in \HH_{t - 2}} \beta^{(t-1)}_{i_1, i_2, \cdots, i_{k - 1}} \cdot \left\langle \ww_{i_1, i_2, \cdots, i_{k - 1}}, \vv \right\rangle \\
		= \, & \|\tilde{\vv}^{t-1}_{\parallel}\|_2^{k - 1} \cdot \sum_{j=1}^{(t-1)^{k-1}} \beta_j \langle \ww_j, \vv \rangle,
	\end{align*}
	where we have
	\begin{equation}
		\sum_{j=1}^{(t-1)^{k-1}} \beta_j^2 = 1, \ \text{and} \ \ww_j \stackrel{\iid}{\sim} \normal(\bzero, \id_n).
	\end{equation}
	It then follows that
	\begin{align*}
		\vert b_t \vert \le \, \|\tilde{\vv}^{t-1}_{\parallel}\|_2^{k - 1} \cdot \sup_{\bbeta \in \S^{(t-1)^{k-1} - 1}} \left\vert \sum_{j=1}^{(t-1)^{k-1}} \beta_j \langle \ww_j, \vv \rangle \right\vert \le \|\tilde{\vv}^{t-1}_{\parallel}\|_2^{k - 1} \cdot \norm{\WW \vv}_2.
	\end{align*}
	Since $\WW \vv \in \R^{(t-1)^{k-1}}$ has i.i.d. standard normal entries, we know that $\norm{\WW \vv}_2 \le C n^{1/4}$ with probability no less than $1 - \exp(-C\sqrt{n})$. Therefore, 
	\begin{equation}
		\P \left( \vert b_t \vert \le Cn^{1/4} \|\tilde{\vv}^{t-1}_{\parallel}\|_2^{k - 1} \ \text{for all} \ t < n^{1/2(k-1)} \right) \ge 1 - \exp(-C\sqrt{n}).
	\end{equation} 
	Using the upper bound on $\|\tilde{\vv}^{t-1}_{\parallel}\|_2$ we obtained in the previous paragraph, it follows that
	\begin{equation}
		\vert b_t \vert \le Cn^{1/4} \|\tilde{\vv}^{t-1}_{\parallel}\|_2^{k - 1} \le Cn^{1/4 + (k-1) (\veps - 1/2)}
	\end{equation}
	for all $t < \min(T_{\veps}, n^{1/2(k-1)})$ with probability at least $1 - \exp(-C\sqrt{n}) - \exp(-C n^{2 \veps}) \ge 1 - \exp(-C \sqrt{n})$. This proves the concentration bound on $b_t$.

\section{Proof of \cref{lem:conv_power_iter}}
\label{sec:proof-lem:conv_power_iter}

Note that \cref{prop:summary_alignment} implies that the following event occurs with probability at least $1 - \exp (-C\sqrt{n})$:
\begin{align}
\label{eq:event}
	E = \left\{ \zeta_t \in [1 - n^{-1/6}, 1 + n^{-1/6}], \ \vert b_t \vert \le Cn^{-1/6}, \ \vert c_t - 1 \vert \le C n^{-5/6} \mbox{ for all }t < \min(T_{\veps}, n^{1/2(k-1)})\right\}. 
\end{align}
%\begin{prop}\label{prop:polynomial_reduction}
%	There exists a stochastic process $\{ X_t \}_{t \ge 0}$ as described above, with $\delta = \delta_n = Cn^{-1/6}$, satisfying that
%	\begin{equation}
%		\P \left( X_t = \alpha_t \ \text{for all} \ t < \min(T_{\veps}, n^{1/2(k-1)}) + 1 \right) \ge 1 - \exp(- C \sqrt{n}).
%	\end{equation}
%\end{prop}
%It then suffices to study the polynomial recurrence process defined as per \cref{eq:poly_process_new} and \eqref{eq:poly_coef_bound}. 
%Below we will establish lower and upper bounds on its first hitting time to certain level sets. For future convenience, we define 
%\begin{equation}
%	T_{\veps} (X) = \min \left\{t \in \mathbb{N}_+ : \vert X_t \vert \ge n^{\veps} \right\}.
%\end{equation}
To facilitate our analysis, we define an auxiliary stochastic process $\{\bar\alpha_t\}_{t \in \NN}$ as follows:
\begin{enumerate}
	\item $\{\bar\alpha_t\}$ and $\{ \alpha_t \}$ have the same initialization, i.e., $\bar{\alpha}_0 = \alpha_0 = 0$.
	\item For all $t \in \NN$, we have $\bar{\alpha}_{t+1} = \gamma_n \bar{\zeta}_t ( \bar{\alpha}_t + \bar{b}_t + \bar{c}_t Z_t )^{k-1}$, where $( \bar{\zeta}_t, \bar{b}_t, \bar{c}_t ) = (\zeta_t, b_t, c_t)$ if $t < \min(T_{\veps}, n^{1/2(k-1)})$, and $(1, 0, 1)$ otherwise.
\end{enumerate}
By definition, we know that $\alpha_t = \bar\alpha_t$ for all $t \in \{0, 1, \cdots, \min(T_{\veps}, n^{1/2(k-1)}) \}$. Further, setting $\delta = (C+1) n^{-1/6}$, \cref{prop:summary_alignment} then implies that
\begin{equation}
	\max \left\{ \left\vert \bar{\zeta}_t - 1 \right\vert, \left\vert \bar{b}_t \right\vert, \left\vert \bar{c}_t - 1 \right\vert \right\} \le \delta, \quad \forall t \in \NN.
\end{equation}
Below we will establish lower and upper bounds on the first hitting time of $\{\bar\alpha_t\}_{t \in \NN}$ to certain level sets, which is defined as follows:
\begin{equation}\label{eq:T-bar}
	\bar T_{\veps} = \min \left\{t \in \mathbb{N}_+ : \vert \bar\alpha_t\vert \ge n^{\veps} \right\}.
\end{equation}
Then, we will show that $\bar{T}_{\veps} = T_{\veps}$ with high probability, and consequently obtain the same lower and upper bounds on $T_{\veps}$.
To begin with, we state a helper lemma that is useful for establishing upper and lower bounds on $\bar T_{\veps}$:
\begin{lem}\label{lem:dominant_process}
	Let $\delta := (C + 1)n^{-1/6}$.
	Fix $\Delta \in [\delta, 1]$. Define the deterministic sequences $\{ \overline{b}_{t, \Delta} \}_{t \ge 0}$ and $\{ \underline{b}_{t, \Delta} \}_{t \ge 0}$ recursively as follows:
	\begin{align}\label{eq:upper_process}
		& \overline{b}_{0, \Delta} \ge 0, \quad \overline{b}_{t+1, \Delta} = \gamma_n (1 + \Delta)^k \cdot \overline{b}_{t, \Delta}^{k-1}, \\
		\label{eq:lower_process}
		& \underline{b}_{0, \Delta} \ge 0, \quad \underline{b}_{t+1, \Delta} = \gamma_n (1 - \Delta)^k \cdot \underline{b}_{t, \Delta}^{k-1}.
	\end{align}
	Then, we have
	\begin{equation}\label{eq:formula_polynomial}
	\begin{split}
		\overline{b}_{t, \Delta} =\, & \left( \gamma_n (1 + \Delta)^k \right)^{((k-1)^t - 1) / (k-2)} \cdot \overline{b}_{0, \Delta}^{(k-1)^t}, \\
		\underline{b}_{t, \Delta} =\, & \left( \gamma_n (1 - \Delta)^{k} \right)^{((k-1)^t - 1) / (k-2)} \cdot \underline{b}_{0, \Delta}^{(k-1)^t}.
	\end{split}
	\end{equation}
	Furthermore, for any $t \in \NN$,
	\begin{align}\label{eq:process_upper_bd}
	\begin{split}
		& \P \left( \vert \bar  \alpha_{t+s} \vert \le \overline{b}_{s, \Delta}, \ \forall s \in \NN \ \big\vert \ \vert \bar \alpha_t \vert \le \overline{b}_{0, \Delta} \right) \\
		\ge\, & \P \left( \vert Z_s \vert \le \frac{\Delta \overline{b}_{s, \Delta} - \delta}{1 + \delta}, \forall s \in \NN \right) \ge 1 - 2 \sum_{s=0}^{\infty} \Phi \left( - \frac{\Delta \overline{b}_{s, \Delta} - \delta}{1 + \delta} \right), 
		\end{split}\\
\label{eq:process_lower_bd}
\begin{split}
		& \P \left( \vert \bar \alpha_{t+s} \vert \ge \underline{b}_{s, \Delta}, \ \forall s \in \NN \ \big\vert \ \vert \bar \alpha_t \vert \ge \underline{b}_{0, \Delta} \right) \\
		\ge\, & \P \left( \vert Z_s \vert \le \frac{ \Delta \underline{b}_{s, \Delta} - \delta}{1 + \delta}, \forall s \in \NN \right) \ge 1 - 2 \sum_{s=0}^{\infty} \Phi \left( - \frac{ \Delta \underline{b}_{s, \Delta} - \delta}{1 + \delta} \right).
	\end{split}
	\end{align}
\end{lem}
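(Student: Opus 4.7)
The proof naturally splits into two parts: verifying the closed-form expressions \eqref{eq:formula_polynomial} for the two deterministic sequences, and establishing the pathwise stochastic comparisons \eqref{eq:process_upper_bd}--\eqref{eq:process_lower_bd}.

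For \eqref{eq:formula_polynomial} I would proceed by direct induction on $t$. The base case $t=0$ is immediate since $((k-1)^0-1)/(k-2)=0$, and the inductive step reduces to the algebraic identity $(k-1)\cdot\frac{(k-1)^t-1}{k-2}+1=\frac{(k-1)^{t+1}-1}{k-2}$ applied to the recursion $\overline{b}_{t+1,\Delta}=\gamma_n(1+\Delta)^k\overline{b}_{t,\Delta}^{k-1}$. The formula for $\underline{b}_{t,\Delta}$ follows identically. I would also observe that $\overline{b}_{t,\Delta},\underline{b}_{t,\Delta}\geq 0$ for every $t$ since powers of nonnegative numbers remain nonnegative, so absolute values can be dropped throughout.

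The main content is \eqref{eq:process_upper_bd}. I would work on the high-probability event from \cref{prop:summary_alignment} where $\max\{|\overline{\zeta}_{t+s}-1|,|\overline{b}_{t+s}|,|\overline{c}_{t+s}-1|\}\leq\delta$ for every $s\in\NN$ (this is precisely the regime in which the auxiliary process $\{\bar{\alpha}_t\}$ was designed to evolve). Introduce the event
\[
\mathcal{G}_t := \bigl\{|Z_{t+s}|\leq (\Delta\overline{b}_{s,\Delta}-\delta)/(1+\delta)\text{ for all } s\in\NN\bigr\}.
\]
Then I would show by induction on $s$ that on $\mathcal{G}_t\cap\{|\bar{\alpha}_t|\leq\overline{b}_{0,\Delta}\}$ one has $|\bar{\alpha}_{t+s}|\leq\overline{b}_{s,\Delta}$ for all $s$. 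Plugging the inductive hypothesis into $\bar{\alpha}_{t+s+1}=\gamma_n\overline{\zeta}_{t+s}(\bar{\alpha}_{t+s}+\overline{b}_{t+s}+\overline{c}_{t+s}Z_{t+s})^{k-1}$ and applying the triangle inequality yields
\[
|\bar{\alpha}_{t+s+1}|\leq\gamma_n(1+\delta)\bigl(\overline{b}_{s,\Delta}+\delta+(1+\delta)|Z_{t+s}|\bigr)^{k-1}\leq\gamma_n(1+\delta)(1+\Delta)^{k-1}\overline{b}_{s,\Delta}^{k-1}\leq\overline{b}_{s+1,\Delta},
\]
where the first inner bound uses the definition of $\mathcal{G}_t$, and the last step uses $1+\delta\leq 1+\Delta$. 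Since \cref{lemma:Gaussian-conditioning} together with \cref{prop:summary_alignment} ensures that $\{Z_{t+s}\}_{s\geq 0}$ is an i.i.d.\ sequence of standard normals independent of $\bar{\alpha}_t$, relabeling $Z_{t+s}\mapsto Z_s$ in distribution produces the first inequality of \eqref{eq:process_upper_bd}, and a union bound with $\P(|Z|>x)=2\Phi(-x)$ delivers the second.

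The lower bound \eqref{eq:process_lower_bd} follows by a symmetric induction: on $\{|Z_{t+s}|\leq(\Delta\underline{b}_{s,\Delta}-\delta)/(1+\delta)\}$ and assuming $|\bar{\alpha}_{t+s}|\geq\underline{b}_{s,\Delta}$, the reverse triangle inequality gives $|\bar{\alpha}_{t+s}+\overline{b}_{t+s}+\overline{c}_{t+s}Z_{t+s}|\geq(1-\Delta)\underline{b}_{s,\Delta}$, which together with $\overline{\zeta}_{t+s}\geq 1-\delta\geq 1-\Delta$ propagates to $|\bar{\alpha}_{t+s+1}|\geq\gamma_n(1-\Delta)^k\underline{b}_{s,\Delta}^{k-1}=\underline{b}_{s+1,\Delta}$. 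I do not anticipate any deep obstacle here; the only subtlety is confirming that $\{Z_{t+s}\}_{s\geq 0}$ is independent of the filtration through time $t$ so that the conditional-probability manipulations are legitimate, and this is exactly what the Gaussian-conditioning framework underlying \cref{prop:summary_alignment} provides.
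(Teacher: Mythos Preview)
Your proposal is correct and follows essentially the same route as the paper. The paper solves the closed form by the substitution $q^{1/(k-2)}b_t$ (which linearizes the exponent recursion) rather than by direct induction, reduces to $t=0$ by a ``without loss of generality'' remark instead of carrying a general $t$, and then runs the identical pathwise induction and union bound that you outline; the lower bound \eqref{eq:process_lower_bd} is likewise dismissed as symmetric.
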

We prove \cref{lem:dominant_process} in Appendix \ref{sec:proof-lem:dominant_process}. 

\subsection{Lower bound on $\bar T_{\veps}$}
\label{sec:lower-bound-T}

We start with two useful propositions, whose proofs are deferred to Appendices \ref{sec:proof-prop:first_lower_bound} and \ref{sec:proof-prop:process_lower_bd}, respectively.

\begin{prop}\label{prop:first_lower_bound}
	Let $C_k = (k-2)^{k-2}/(k-1)^{k-1}$ and $\delta = (C + 1)n^{-1/6}$. Define
	\begin{equation}
		M(k, \gamma_n, \delta) := \frac{1}{k-2} \left( \frac{C_k}{\gamma_n(1 + \delta)} \right)^{1/(k-2)}, \ N(k, \gamma_n, \delta) := \frac{1}{1 + \delta} \left( \frac{C_k}{\gamma_n(1 + \delta)} \right)^{1/(k-2)} - \frac{\delta}{1 + \delta}.
	\end{equation}
	Then, for any $T \in \mathbb{N}$, with probability at least $1 - 2 T \Phi ( -N(k, \gamma_n, \delta) )$, we have
	\begin{equation}
		\max_{0 \le t \le T} \left\vert \bar \alpha_t \right\vert \le M(k, \gamma_n, \delta).
	\end{equation}
\end{prop}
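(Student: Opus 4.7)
The plan is to prove $|\bar\alpha_t| \le M(k,\gamma_n,\delta)$ for all $t \in \{0,\ldots,T\}$ by induction on $t$, conditional on a high-probability event that bounds the driving noise. Define the good event
\[
	\mathcal{E}_T := \bigl\{ |Z_t| \le N(k,\gamma_n,\delta) \text{ for all } t \in \{0,1,\ldots,T-1\} \bigr\}.
\]
Since $Z_t \stackrel{\iid}{\sim} \normal(0,1)$, a union bound immediately gives $\P(\mathcal{E}_T) \ge 1 - 2T\,\Phi(-N(k,\gamma_n,\delta))$, which is exactly the probability in the proposition. So the remaining task is a deterministic argument on $\mathcal{E}_T$.

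From the definition of $\bar\alpha_{t+1}$ and the bounds $|\bar\zeta_t - 1|, |\bar b_t|, |\bar c_t - 1| \le \delta$, I will first derive the clean upper estimate
\[
	|\bar\alpha_{t+1}| \le \gamma_n (1+\delta)\bigl( |\bar\alpha_t| + \delta + (1+\delta)|Z_t| \bigr)^{k-1}.
\]
The right-hand side is monotone nondecreasing in both $|\bar\alpha_t|$ and $|Z_t|$ on $[0,\infty)$, which is what makes the inductive step work.

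The heart of the argument is the verification that $M$ is a fixed point of $x \mapsto \gamma_n(1+\delta)(x + \delta + (1+\delta)N)^{k-1}$. Introduce the shorthand $A := (C_k / (\gamma_n(1+\delta)))^{1/(k-2)}$, so that $M = A/(k-2)$. The definition of $N$ unpacks to $\delta + (1+\delta)N = A$, hence $M + \delta + (1+\delta)N = M + A = (k-1)M$. Using $C_k = (k-2)^{k-2}/(k-1)^{k-1}$, one computes $M^{k-2} = 1/\bigl((k-1)^{k-1}\gamma_n(1+\delta)\bigr)$, whence
\[
	\gamma_n(1+\delta)\bigl((k-1)M\bigr)^{k-1} = \gamma_n(1+\delta)(k-1)^{k-1} M^{k-2} \cdot M = M.
\]
Consequently, on $\mathcal{E}_T$, whenever $|\bar\alpha_t| \le M$ the display above yields $|\bar\alpha_{t+1}| \le M$. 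The base case $\bar\alpha_0 = 0 \le M$ is immediate, and induction closes the argument.

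I expect the only subtle step to be the algebraic fixed-point identity in Step 3: the constants $M$ and $N$ are engineered specifically so that the cancellation involving $C_k = (k-2)^{k-2}/(k-1)^{k-1}$ produces exactly $M$ on the nose. Once that identity is in hand, the rest is a direct induction combined with the Gaussian tail union bound, and no invocation of \cref{lem:dominant_process} is needed since here the dominating deterministic sequence is simply the constant $M$.
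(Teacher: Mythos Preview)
Your proposal is correct and follows essentially the same argument as the paper: bound $|\bar\alpha_{t+1}|$ by $\gamma_n(1+\delta)(|\bar\alpha_t|+\delta+(1+\delta)|Z_t|)^{k-1}$, condition on $|Z_t|\le N$ for all $t\le T-1$ via a union bound, and verify by induction that $M$ is a fixed point of the resulting map using the algebraic identity afforded by $C_k$. The only cosmetic difference is your introduction of the shorthand $A$, which streamlines the fixed-point computation the paper carries out directly.
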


\begin{prop}\label{prop:process_lower_bd}
	Assume $\gamma_n = \modif{n^{o(1)}}$ and $\gamma_n \gg (\log n)^{-(k - 2) / 2}$. Let $\veps = \veps_k = (6k - 11)/12 (k-1)$. Then, for any fixed $\eta \in (0, 1)$, and $t_n = \lfloor (1 - \eta) \log_{k-1} \frac{\log_{k-1} n}{\max\{\log_{k-1} \gamma_n, 1\}} \rfloor$, we have
	\begin{equation}\label{eq:lower_bd_X}
		\P \left( \max_{0 \le t \le t_n} \vert \bar \alpha_{t} \vert \le \frac{n^{\veps}}{2} \right) \ge 1 - n^{-C},
	\end{equation}
	where $C > 0$ is an absolute constant.
\end{prop}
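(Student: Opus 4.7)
The plan is to invoke \cref{lem:dominant_process} once, with initial time $t = 0$, parameter $\Delta = 1/2$, and starting envelope $\overline{b}_{0,\Delta} = A \sqrt{\log n}$ for a large constant $A = A(C, k, \eta)$ to be chosen. Since $\bar{\alpha}_0 = 0$, the hypothesis $|\bar{\alpha}_0| \le \overline{b}_{0,\Delta}$ is automatic, so \eqref{eq:process_upper_bd} gives $|\bar\alpha_s| \le \overline{b}_{s,\Delta}$ for all $s \in \NN$ on an event of probability at least $1 - 2 \sum_{s \ge 0} \Phi(- (\Delta \overline{b}_{s,\Delta} - \delta)/(1+\delta))$. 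It then suffices to verify that (i) this tail sum is $\le n^{-C}$, and (ii) $\overline{b}_{t_n,\Delta} \le n^{\veps}/2$; by monotonicity of $\overline{b}_{s,\Delta}$ in $s$, these two facts together yield the proposition.

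For (i), the assumption $\gamma_n \gg (\log n)^{-(k-2)/2}$ together with the choice of $\overline{b}_{0,\Delta}$ forces $\gamma_n (1 + \Delta)^k \overline{b}_{0,\Delta}^{k-2} = \gamma_n (3/2)^k A^{k-2} (\log n)^{(k-2)/2} \ge 2$ once $A$ is large enough, so by the explicit formula \eqref{eq:formula_polynomial} we get the doubly exponential growth $\overline{b}_{s,\Delta} \ge 2^s \overline{b}_{0,\Delta}$. Standard Gaussian tail bounds then give $\Phi(-(\Delta \overline{b}_{s,\Delta} - \delta)/(1+\delta)) \le \exp(-c A^2 4^s \log n)$, whose geometric sum is at most $n^{-c A^2}$, which we can make smaller than $n^{-C}$ by enlarging $A$.

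For (ii), taking logarithms in \eqref{eq:formula_polynomial} gives $\log \overline{b}_{t_n,\Delta} \le (k-1)^{t_n} \bigl[(k-2)^{-1}\log \gamma_n + \tfrac{1}{2}\log \log n + O(1)\bigr]$, and the definition of $t_n$ yields $(k-1)^{t_n} \le (\log_{k-1} n / \max\{\log_{k-1}\gamma_n, 1\})^{1-\eta}$. We split into cases. When $\gamma_n \le k-1$, the maximum equals $1$ and $\log \gamma_n = O(1)$, so the right-hand side is $O((\log n)^{1-\eta} \log \log n) = o(\log n)$. When $\gamma_n > k-1$, a short calculation with $(k-1)^{t_n} \le (\log n / \log \gamma_n)^{1-\eta}$ yields $\log \overline{b}_{t_n,\Delta} \le (1+o(1)) (k-2)^{-1} (\log n)^{1-\eta} (\log \gamma_n)^{\eta} + O((\log n/\log \gamma_n)^{1-\eta} \log \log n)$, and the hypothesis $\gamma_n = e^{o(\log n)}$ forces $(\log \gamma_n / \log n)^{\eta} = o(1)$, so this too is $o(\log n)$. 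In either case $\overline{b}_{t_n,\Delta} \le n^{\veps}/2$ for $n$ sufficiently large.

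The principal challenge is the strong-signal calculation in the second case, where two competing factors must be balanced: the per-step amplification $\gamma_n^{1/(k-2)}$ grows with $\gamma_n$, while the number of iterations $(k-1)^{t_n}$ shrinks with $\log \gamma_n$. Their product is controlled only through the quantitative hypothesis $\gamma_n = e^{o(\log n)}$, which is what ultimately prevents the envelope from overshooting $n^\veps$.
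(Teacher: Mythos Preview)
Your proposal is correct and follows essentially the same approach as the paper's proof: both invoke \cref{lem:dominant_process} once at $t=0$ with a carefully chosen initial envelope, verify the Gaussian tail sum is $O(n^{-C})$ via the doubly exponential growth of $\overline{b}_{s,\Delta}$, and then check that $\overline{b}_{t_n,\Delta} = n^{o(1)}$ using the hypothesis $\gamma_n = e^{o(\log n)}$. The only cosmetic differences are the parameter choices---the paper takes $\Delta = 1$ and $\overline{b}_{0,\Delta} = \log_{k-1} n$, whereas you take $\Delta = 1/2$ and $\overline{b}_{0,\Delta} = A\sqrt{\log n}$---but the structure and the key estimates are identical.
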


With the aid of \cref{prop:first_lower_bound} and \cref{prop:process_lower_bd}, we prove the follwing lower bound on $\bar T_{\veps}$:
For any fixed $\eta \in (0,1)$ and large enough $n$,
with probability at least $1 - n^{-C}$ we have
\begin{equation}\label{eq:lower_bd_Te}
	\bar T_{\veps} \ge \, \max \left\{ \exp \left( \frac{1 - \eta}{2} \left( \frac{C_k}{\gamma_n} \right)^{2/(k-2)} \right), \ (1 - \eta) \log_{k-1} \frac{\log_{k-1} n}{\max\{\log_{k-1} \gamma_n, 1\}} \right\}.
\end{equation}
To show \cref{eq:lower_bd_Te}, note that by definition of $\bar T_{\veps}$ and \cref{prop:process_lower_bd}, it immediately follows that
\begin{equation*}
	\P \left( \bar T_{\veps} \ge (1 - \eta) \log_{k-1} \frac{\log_{k-1} n}{\max\{\log_{k-1} \gamma_n, 1\}} \right) \ge 1 - n^{-C}.
\end{equation*}
It then suffices to consider the case $(\log n)^{-(k-2)/2} \ll \gamma_n \ll 1$, otherwise the lower bound on the right hand side of \cref{eq:lower_bd_Te} is just $(1 - \eta) \log_{k-1} (\log_{k-1} n / \max\{\log_{k-1} \gamma_n, 1\})$ for a large enough $n$. Recall that $M(k, \gamma_n, \delta)$ and $N(k, \gamma_n, \delta)$ are defined in \cref{prop:first_lower_bound}. For a large enough $n$, we have
	\begin{equation*}
		M(k, \gamma_n, \delta) \le \frac{1}{k-2} \left( \frac{C_k}{\gamma_n} \right)^{1/(k-2)} \ll n^{\veps},\,\,\,\,\, \ N(k, \gamma_n, \delta) \ge \frac{1}{1 + 10\delta} \left( \frac{C_k}{\gamma_n} \right)^{1/(k-2)} \gg 1,
	\end{equation*}
	and
	\begin{equation*}
		1 - 2 T \Phi \left( - N(k, \gamma_n, \delta) \right) \ge 1 - \frac{2 T}{N(k, \gamma_n, \delta)} \exp \left( -\frac{1}{2} N(k, \gamma_n, \delta)^2 \right).
	\end{equation*}
	As a consequence, as long as $T \le \exp(N(k, \gamma_n, \delta)^2 / 2)$, with high probability we have
	\begin{equation*}
		\max_{0 \le t \le T} \left\vert \bar \alpha_t \right\vert \le M(k, \gamma_n, \delta) \ll n^{\veps},
	\end{equation*}
	which further implies that
	\begin{equation*}
		\bar T_{\veps} \ge \exp \left( \frac{1}{2(1+10 \delta)^2} \left( \frac{C_k}{\gamma_n} \right)^{2/(k-2)} \right) \ge \exp \left( \frac{1 - \eta}{2} \left( \frac{C_k}{\gamma_n} \right)^{2/(k-2)} \right),
	\end{equation*}
	since $\delta \to 0$ as $n \to \infty$. This completes the proof of the lower bound given in \cref{eq:lower_bd_Te}.
	
\subsection{Upper bound on $\bar T_{\veps}$}
\label{sec:upper-bound-T}

Next, we establish an upper bound for $\bar T_{\veps}$.
Our proof consists of two steps.
First, we show that in a moderately many steps, the alignment $\bar{\alpha}_t$ will reach a sufficiently large magnitude. Then, we use \cref{lem:dominant_process} to prove that, after reaching this magnitude, it takes at most $(1+\eta) \log_{k-1} ( \log_{k-1} n / \max\{\log_{k-1} \gamma_n, 1\})$ steps for $\vert \bar \alpha_t \vert$ to reach $n^{\eps}$, where $\eta \in (0,1)$ is an arbitrarily small fixed positive constant. 
%We emphasize that throughout this section, we also assume event \eqref{eq:event} occurs. 
%for a sufficiently large $n$. %As a consequence, $T_{\veps}$ will be upper bounded by the sum of iterations needed in these two stages.
To get started, we establish the following proposition, the proof of which can be found in Appendix \ref{sec:proof-prop:first_stage}.

\begin{prop}\label{prop:first_stage}
	For any $m > 0$ and $T \in \mathbb{N}$, we have
	\begin{equation*}
		\P \left( \max_{0 \le t \le T} \left\vert \bar \alpha_t \right\vert \ge m \right) \ge 1 - \exp \left( - T \Phi \left( - \left( \frac{m}{\gamma_n (1 - \delta)^k} \right)^{1/(k-1)} \right) \right),
	\end{equation*}
	where we recall that $\delta = (C + 1)n^{-1/6}$. 
\end{prop}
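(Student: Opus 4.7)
The plan is to exhibit a family of ``one-shot boost'' events $E_0, \ldots, E_{T-1}$ such that the occurrence of \emph{any single} $E_t$ already forces $|\bar{\alpha}_{t+1}| \ge m$, and then to lower bound $\P\bigl(\bigcup_t E_t\bigr)$ via a conditional independence argument using the i.i.d.\ structure of the driving Gaussians $\{Z_t\}$.

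First, I would pick the threshold $\tau := \bigl(m/(\gamma_n (1-\delta)^k)\bigr)^{1/(k-1)}$, tuned so that one step of the recurrence \eqref{eq:alignment_process} amplifies a Gaussian input of magnitude $\tau$ to magnitude at least $m$ under the worst-case multiplicative perturbations $\bar{\zeta}_t, \bar{c}_t \in [1-\delta, 1+\delta]$. For each $t \in \{0, \ldots, T-1\}$ define
\[
E_t \;:=\; \bigl\{\, |Z_t| \ge \tau \text{ and } \sgn(Z_t)\cdot(\bar{\alpha}_t + \bar{b}_t) \ge 0 \,\bigr\},
\]
where the sign product is interpreted as $\ge 0$ whenever $\bar{\alpha}_t + \bar{b}_t = 0$. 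The sign match ensures that $\bar{\alpha}_t + \bar{b}_t$ and $\bar{c}_t Z_t$ contribute constructively, so on $E_t$ one has $|\bar{\alpha}_t + \bar{b}_t + \bar{c}_t Z_t| \ge \bar{c}_t|Z_t| \ge (1-\delta)\tau$. Raising to the $(k{-}1)$-th power and multiplying by $\gamma_n \bar{\zeta}_t \ge \gamma_n(1-\delta)$ then yields the deterministic implication $|\bar{\alpha}_{t+1}| \ge \gamma_n(1-\delta)^k \tau^{k-1} = m$.

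The second ingredient is a uniform conditional lower bound on $\P(E_t \mid \mathcal{F}_t)$, where $\mathcal{F}_t$ denotes the natural filtration rendering $(\bar{\zeta}_t, \bar{\alpha}_t, \bar{b}_t, \bar{c}_t)$ measurable while leaving $Z_t$ a standard normal independent of $\mathcal{F}_t$---this independence is exactly what was established through the Gaussian conditioning in \cref{lemma:Gaussian-conditioning} and inherited by the auxiliary process. Since $\sgn(\bar{\alpha}_t + \bar{b}_t)$ is $\mathcal{F}_t$-measurable, $E_t$ reduces to a one-sided Gaussian tail event for $Z_t$ with prescribed sign, whence $\P(E_t \mid \mathcal{F}_t) \ge \Phi(-\tau)$.

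Finally, on $\{\max_{0 \le t \le T}|\bar{\alpha}_t| < m\}$ every $E_t$ must fail, so iterating the tower property and using $1-x \le e^{-x}$ gives
\[
\P\bigl(\max_{0 \le t \le T}|\bar{\alpha}_t| < m\bigr) \;\le\; \P\Bigl(\bigcap_{t=0}^{T-1} E_t^c\Bigr) \;\le\; \bigl(1-\Phi(-\tau)\bigr)^T \;\le\; \exp\bigl(-T\Phi(-\tau)\bigr),
\]
which is exactly the claimed bound. The main subtlety is the sign-matching device: it is what lets the argument proceed without any additional control on the sign or the magnitude of $\bar{\alpha}_t$ (which is otherwise unbounded and could in principle cancel $\bar{c}_t Z_t$). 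Without this trick one would have to carry an auxiliary upper envelope on $|\bar{\alpha}_t + \bar{b}_t|$ through the first-hitting analysis, which would couple this proposition to the upper-bound argument and complicate matters considerably.
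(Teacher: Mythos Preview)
Your proposal is correct and follows essentially the same approach as the paper's proof: both use the identical sign-matching device on $Z_t$ relative to $\bar{\alpha}_t + \bar{b}_t$, the same threshold $\tau = (m/\gamma_n(1-\delta)^k)^{1/(k-1)}$, and the same iterated conditioning to obtain $(1-\Phi(-\tau))^T$. The only differences are cosmetic---you name the ``success'' event $E_t$ whereas the paper names the ``failure'' event $E_t$, and you invoke the tower property via the filtration while the paper phrases it as the Markov property of $\{\bar{\alpha}_t\}$.
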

For $\Delta \in (\delta, 1)$ and $L > 0$, we define
\begin{equation}
	m(\Delta, L) := \max \left\{ \left( \frac{1 + \Delta}{\gamma_n (1 - \Delta)^k} \right)^{1/(k-2)}, L \right\},
\end{equation}
we know that (using $\Delta \ge \delta$)
\begin{equation*}
	\left( \frac{m(\Delta, L)}{\gamma_n (1 - \delta)^k} \right)^{1/(k-1)} \le \max \left\{ \left( \frac{(1+\Delta)^{1/(k-1)}}{\gamma_n (1-\Delta)^k} \right)^{1/(k-2)}, \ \left( \frac{L (1+\Delta)^{1/(k-1)}}{\gamma_n (1-\Delta)^k} \right)^{1/(k-1)} \right\}.
\end{equation*}
Define $A_{k, \Delta} := (1+\Delta)^{1/(k-1)} / (1 - \Delta)^k$ and $B_{k, \Delta} := (1 + \Delta) / (1 - \Delta)^k$. Invoking Proposition \ref{prop:first_stage} and taking $m = m(\Delta, L)$, we know that with probability at least 
\begin{equation*}
	1 - \max \left\{ \exp \left( -T \Phi \left( - \left( \frac{A_{k, \Delta}}{\gamma_n} \right)^{1/(k-2)} \right) \right), \ \exp \left( -T \Phi \left( - \left( \frac{A_{k, \Delta} L}{\gamma_n} \right)^{1/(k-1)} \right) \right) \right\},
\end{equation*}
there exists $t \le T$ satisfying
\begin{equation*}
	\vert \bar \alpha_t \vert \ge m(\Delta, L) = \max \left\{ \left( \frac{B_{k, \Delta}}{\gamma_n} \right)^{1/(k-2)}, L \right\}.
\end{equation*}
 In what follows, we will show that for a sufficiently large $L$,  
 starting from such $\bar \alpha_t$, it takes at most $(1 + \eta) \log_{k-1} \frac{\log_{k-1} n}{\max\{\log_{k-1} \gamma_n, 1\}}$ steps for $\bar \alpha_t$ reach order $n^{\eps}$.
 Such a result is established as Proposition \ref{prop:process_upper_bd} below. We delay the proof of this proposition to Appendix \ref{sec:proof-prop:process_upper_bd}.

\begin{prop}\label{prop:process_upper_bd}
Assume $\gamma_n = \modif{n^{o(1)}}$ and $\gamma_n \gg (\log n)^{-(k - 2) / 2}$. Let $\veps = \veps_k = (6k - 11)/12 (k-1)$. Then, for any fixed $\eta \in (0, \infty)$ and $t_n = \lfloor (1 + \eta) \log_{k-1} \frac{\log_{k-1} n}{\max\{\log_{k-1} \gamma_n, 1\}} \rfloor$, for $L, n$ sufficiently large we have
	\begin{equation}
		\P \left( \vert \bar \alpha_{t + t_n} \vert \ge n^{\eps} \ \big\vert \ \vert \bar \alpha_t \vert \ge m(\Delta, L) \right) \ge 1 - C \exp \left( - \frac{\Delta^2 m(\Delta, L)^2}{8} \right),
	\end{equation}
	where $C$ is an absolute constant.
\end{prop}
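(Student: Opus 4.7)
The plan is to apply \cref{lem:dominant_process}, specifically equation~\eqref{eq:process_lower_bd}, with initial value $\underline{b}_{0,\Delta} = m(\Delta, L)$. This produces a deterministic sequence $\{\underline{b}_{s,\Delta}\}_{s \ge 0}$, given in closed form by~\eqref{eq:formula_polynomial}, and guarantees that conditionally on $|\bar\alpha_t| \ge m(\Delta, L)$ one has $|\bar\alpha_{t+s}| \ge \underline{b}_{s,\Delta}$ for every $s \in \NN$ with probability at least $1 - 2\sum_{s = 0}^{\infty}\Phi\!\left(-(\Delta \underline{b}_{s,\Delta} - \delta)/(1+\delta)\right)$. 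Thus it suffices to (i) verify $\underline{b}_{t_n,\Delta} \ge n^{\veps}$ for the chosen $t_n$, so that the deterministic lower bound already forces the desired escape level at time $t + t_n$, and (ii) bound the Gaussian tail sum above by $C\exp(-\Delta^2 m(\Delta, L)^2/8)$.

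For~(i), I would rewrite \eqref{eq:formula_polynomial} in log scale as
\[
\log \underline{b}_{s,\Delta} \;=\; (k-1)^{s}\,\mu \;-\; \frac{1}{k-2}\log\!\bigl(\gamma_n(1-\Delta)^{k}\bigr), \qquad \mu := \log m(\Delta, L) + \frac{1}{k-2}\log\!\bigl(\gamma_n(1-\Delta)^{k}\bigr).
\]
The two lower bounds packed into the maximum defining $m(\Delta,L)$ translate to $\mu \ge \log(1+\Delta)/(k-2) > 0$ and $\mu \ge \log L + (k-2)^{-1}\log(\gamma_n(1-\Delta)^{k})$. The target $\log \underline{b}_{t_n, \Delta} \ge \veps\log n$ then reduces to $(k-1)^{t_n}\mu \gtrsim \veps \log n + |\log\gamma_n|/(k-2)$, which I would verify by splitting on the size of $\gamma_n$. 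When $\gamma_n$ stays bounded (in particular the regime $\gamma_n \ll 1$), $\mu$ is a positive constant, $\max\{\log_{k-1}\gamma_n, 1\} = 1$, and $(k-1)^{t_n} = (\log_{k-1} n)^{1+\eta} \gg \log n$. When $\gamma_n \to \infty$ with $\log \gamma_n = o(\log n)$, taking $L$ large enough makes $m(\Delta, L) = L$ dominate, so $\mu \gtrsim \log \gamma_n/(k-2)$, while $(k-1)^{t_n} = (\log n/\log \gamma_n)^{1+\eta}$; the product satisfies $(k-1)^{t_n}\mu \gtrsim (\log n)^{1+\eta}/(\log \gamma_n)^{\eta} \gg \log n$ precisely because $\log\gamma_n = o(\log n)$.

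For~(ii), the explicit formula \eqref{eq:formula_polynomial} shows that $\{\underline{b}_{s,\Delta}\}$ is strictly increasing (since $m(\Delta,L)$ lies strictly above the positive fixed point of $x \mapsto \gamma_n(1-\Delta)^{k} x^{k-1}$) and grows doubly exponentially in $s$. For $L$ large enough that $\Delta m(\Delta, L) \gg \delta$, I would combine the Gaussian tail bound $\Phi(-x) \le e^{-x^{2}/2}$ from \cref{lemma:Gaussian-tail} with this doubly exponential growth: the $s=0$ term alone contributes a factor of order $\exp(-\Delta^{2} m(\Delta, L)^{2}(1-o(1))/2)$, while every subsequent term is super-exponentially smaller, so the entire series is bounded by a constant multiple of the $s=0$ term, giving the claimed $C\exp(-\Delta^{2} m(\Delta, L)^{2}/8)$.

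The main obstacle I anticipate is the calibration in step~(i): one must cover the full range $(\log n)^{-(k-2)/2} \ll \gamma_n \le e^{o(\log n)}$ with a single closed-form $t_n$, and verify that the multiplicative margin $(1+\eta)$ is just enough to absorb the $(\log \gamma_n)^{\eta}$ factor that arises in the intermediate regime where $\log\gamma_n$ is large yet $o(\log n)$. Once this calibration is handled uniformly, the rest of the argument is routine: apply \cref{lem:dominant_process} and carry through the Gaussian tail estimate.
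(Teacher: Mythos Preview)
Your proposal is correct and follows essentially the same approach as the paper: apply \cref{lem:dominant_process} with $\underline{b}_{0,\Delta}=m(\Delta,L)$, then separately verify that $\underline{b}_{t_n,\Delta}\ge n^{\veps}$ and that the Gaussian tail sum is bounded by $C\exp(-\Delta^2 m(\Delta,L)^2/8)$. The only cosmetic differences are that the paper handles step~(i) uniformly by setting $\omega=\max\{\gamma_n,e\}$ rather than splitting into cases on the size of $\gamma_n$, and for step~(ii) it makes the ``first term dominates'' heuristic explicit via the chain $(1+\Delta)^{((k-1)^s-1)/(k-2)}\ge (1+\Delta)^s$ and $(1+\Delta)^{2s}\ge 1+2s\Delta$, reducing the tail sum to a geometric series.
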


Putting together \cref{prop:first_stage} and \cref{prop:process_upper_bd}, we obtain the following theorem:

\begin{lem}[Upper bound on $\bar T_{\veps}$]\label{thm:upper_bd_X}
Assume $\gamma_n = \modif{n^{o(1)}}$, $\gamma_n \gg (\log n)^{-(k - 2) / 2}$, and $\veps = \veps_k = (6k - 11)/12 (k-1)$.
	For any fixed $\eta > 0$ and sufficiently large $n \in \mathbb{N}$, with probability $1 - o_n(1)$ we have
	\begin{align}
		\bar T_{\veps} \le \, \exp \left( \frac{1 + \eta}{2} \left( \frac{1}{\gamma_n} \right)^{2/(k-2)} \right) + (1 + \eta) \log_{k-1} \frac{\log_{k-1} n}{\max\{\log_{k-1} \gamma_n, 1\}}.
	\end{align}
\end{lem}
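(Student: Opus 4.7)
The plan is to chain the two auxiliary propositions established just above: Proposition~\ref{prop:first_stage} provides a ``first stage'' that drives $|\bar{\alpha}_t|$ up to the moderate level $m(\Delta, L)$ within controllably many iterations, and Proposition~\ref{prop:process_upper_bd} provides a ``second stage'' that takes $|\bar{\alpha}_t|$ from $m(\Delta, L)$ to $n^{\veps}$ in roughly $t_n = \lfloor (1+\eta')\log_{k-1}(\log_{k-1} n / \max\{\log_{k-1}\gamma_n, 1\}) \rfloor$ additional steps. Summing the two time budgets and tuning the slack parameters $\Delta$ and $L$ should reproduce the stated bound.

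I would begin by fixing $\eta > 0$ and choosing $\Delta = \Delta(\eta) \in (0,1)$ small and $L = L(\eta) < \infty$ large, both independent of $n$, so that $A_{k,\Delta}^{2/(k-2)} \le 1 + \eta/3$, $B_{k,\Delta}^{1/(k-2)} \le (1 + \eta/3)^{1/2}$, and every constant depending on $\Delta$ fits inside the $\eta$-margin of the final inequality. I would then apply Proposition~\ref{prop:first_stage} with $m = m(\Delta, L)$ and
\begin{equation*}
T_1 \;:=\; \left\lceil \exp\!\left( \frac{1 + \eta/3}{2} \left( \frac{A_{k,\Delta}}{\gamma_n} \right)^{2/(k-2)} \right) \right\rceil.
\end{equation*}
The key computation is to verify, via the Gaussian tail bound in Lemma~\ref{lemma:Gaussian-tail} combined with the hypotheses $\gamma_n \gg (\log n)^{-(k-2)/2}$ and $\gamma_n = e^{o(\log n)}$, that
\begin{equation*}
T_1 \, \Phi\!\left( -(A_{k,\Delta}/\gamma_n)^{1/(k-2)} \right) \to \infty \quad\text{and}\quad T_1 \, \Phi\!\left( -(A_{k,\Delta} L / \gamma_n)^{1/(k-1)} \right) \to \infty.
\end{equation*}
This produces, with probability $1 - o_n(1)$, some $t^\ast \le T_1$ with $|\bar{\alpha}_{t^\ast}| \ge m(\Delta, L)$.

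Having reached that threshold, I would invoke Proposition~\ref{prop:process_upper_bd} at $t^\ast$ with $t_n := \lfloor (1 + \eta/3)\log_{k-1}(\log_{k-1} n / \max\{\log_{k-1}\gamma_n, 1\}) \rfloor$. Since $m(\Delta, L) \ge L$ with $L$ large (and $m(\Delta, L) \to \infty$ whenever $\gamma_n \to 0$), the conditional failure probability $C\exp(-\Delta^2 m(\Delta, L)^2/8)$ is $o_n(1)$. On the intersection of the two high-probability events we therefore obtain $|\bar{\alpha}_{t^\ast + t_n}| \ge n^{\veps}$, i.e., $\bar{T}_{\veps} \le T_1 + t_n$; a routine bookkeeping step merges the two $\eta/3$ slacks into a single $(1+\eta)$ factor, matching the stated upper bound.

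The hard part, I expect, will be the simultaneous tuning of $\Delta$, $L$, and $T_1$: $\Delta$ must be small enough that both $A_{k,\Delta}^{2/(k-2)}$ and $B_{k,\Delta}^{1/(k-2)}$ fit inside the $(1+\eta)$ budget, but not so small that the Gaussian tails in Proposition~\ref{prop:first_stage} become too punishing to leave room for the prescribed $T_1$. The hypothesis $\gamma_n \gg (\log n)^{-(k-2)/2}$ is precisely what provides the extra slack in the Gaussian tail to promote the leading exponent from $1/2$ to $(1+\eta)/2$ while keeping the failure probability $o_n(1)$. A secondary subtlety arises in the regime $\gamma_n \gtrsim 1$, where the leading exponential factor is only $O(1)$: one must check that any $\omega(1)$ safety factor slipped into $T_1$ to force $T_1 \Phi(\cdots) \to \infty$ can still be absorbed into the $(1+\eta) t_n$ term, which is fine because $t_n \to \infty$ throughout this range.
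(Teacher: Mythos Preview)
Your overall two-stage strategy matches the paper's exactly, and the first-stage analysis (including the subtlety you flag about sneaking an $\omega(1)$ factor into $T_1$ when $\gamma_n \gtrsim 1$) is on target. However, there is a genuine gap in the second stage.

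You fix $L = L(\eta)$ \emph{independent of $n$} and then assert that the conditional failure probability $C\exp(-\Delta^2 m(\Delta,L)^2/8)$ from Proposition~\ref{prop:process_upper_bd} is $o_n(1)$. This is false in the regime where $\gamma_n$ does not tend to $0$ (which the hypotheses allow: $\gamma_n$ could be constant, or even $\gamma_n \to \infty$ sub-polynomially). In that case $m(\Delta,L) = \max\{(B_{k,\Delta}/\gamma_n)^{1/(k-2)}, L\}$ is bounded, so the failure probability is a fixed small constant --- small, but not $o_n(1)$. Your parenthetical ``$m(\Delta,L)\to\infty$ whenever $\gamma_n\to 0$'' covers only half the picture.

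The paper closes this gap by taking $L = L_n \to \infty$ with $n$. The price is that growing $L_n$ inflates the second branch of the first-stage time budget, namely $\exp\bigl(\tfrac{1+\eta}{2}(A_{k,\Delta}L_n/\gamma_n)^{2/(k-1)}\bigr)$, and one must check this does not overrun the target bound. The paper handles this by choosing $L_n$ as the solution of a balancing equation that equates this second branch with $\exp\bigl(\tfrac{1+\eta}{2}(A_{k,\Delta}/\gamma_n)^{2/(k-2)}\bigr)$ plus a sublinear-in-$t_n$ correction; one then checks that $L_n \to \infty$ (using $\gamma_n \gg (\log n)^{-(k-2)/2}$) and that the resulting $T$ still fits under the stated bound after absorbing everything into the $\eta$-slack. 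Your proposal needs this extra balancing step; the choice of a single $n$-independent $L$ cannot deliver the claimed $1 - o_n(1)$ probability.
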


\begin{proof}[Proof of \cref{thm:upper_bd_X}]
	Let $L = L_n$ be such that $L \to \infty$ as $n \to \infty$ (specific choice of $L$ will be discussed later). Then, from the discussion following \cref{prop:first_stage}, we know that as long as
	\begin{equation}\label{eq:condition_T}
		T \gg \max \left\{ \exp \left( \frac{1+\eta}{2} \left( \frac{A_{k, \Delta}}{\gamma_n} \right)^{2/(k-2)} \right), \ \exp \left( \frac{1+\eta}{2} \left( \frac{A_{k, \Delta} L}{\gamma_n} \right)^{2/(k-1)} \right) \right\},
	\end{equation}
	then with high probability there exists $t \le T$ satisfying
	\begin{equation*}
		\vert \bar \alpha_t \vert \ge m(\Delta, L) = \max \left\{ \left( \frac{B_{k, \Delta}}{\gamma_n} \right)^{1/(k-2)}, L \right\}.
	\end{equation*}
	Applying \cref{prop:process_upper_bd} yields that
	\begin{equation}
		\left\vert \bar \alpha_{t + t_n} \right\vert \ge n^{\eps},\,\,\,\, \ t_n = \lfloor (1 + \eta) \log_{k-1} \frac{\log_{k-1} n}{\max\{\log_{k-1} \gamma_n, 1\}} \rfloor.
	\end{equation}
	The above calculation implies that
	\begin{equation}
		\bar T_{\veps} \le T +  (1 + \eta) \log_{k-1} \frac{\log_{k-1} n}{\max\{\log_{k-1} \gamma_n, 1\}}.
	\end{equation}
	Next we discuss the choice of $L$, which will eventually lead to an upper bound on $T$. Let $L$ be the solution to the equation below
	\begin{equation*}
		\exp \left( \frac{1+\eta}{2} \left( \frac{A_{k, \Delta} L}{\gamma_n} \right)^{2/(k-1)} \right) = \exp \left( \frac{1+\eta}{2} \left( \frac{A_{k, \Delta}}{\gamma_n} \right)^{2/(k-2)} \right) + \left(\log_{k-1} \frac{\log_{k-1} n}{\max\{\log_{k-1} \gamma_n, 1\}} \right)^{1 - \eta},
	\end{equation*}
	then one can show that $L \to \infty$ as $n \to \infty$. In fact, the above equation implies
	\begin{align*}
		L \ge \, & C_{k, \Delta, \eta} \cdot \max \left\{ \gamma_n^{-1/(k-2)}, \,\, \gamma_n \cdot \left( (1 - \eta) \cdot \log  \log_{k-1} \frac{\log_{k-1} n}{\max\{\log_{k-1} \gamma_n, 1\}}  \right)^{(k - 1) / 2} \right\} \\
		\ge \, & C_{k, \Delta, \eta} \cdot \left( (1 - \eta) \cdot \log  \log_{k-1} \frac{\log_{k-1} n}{\max\{\log_{k-1} \gamma_n, 1\}}  \right)^{1 / 2} \to \infty,
	\end{align*}
	where $C_{k, \Delta, \eta}$ is a constant only depending on $k, \Delta, \eta$. Under this choice of $L$, setting
	\begin{equation}
		T = \exp \left( \frac{1 + 2\eta}{2} \left( \frac{A_{k, \Delta}}{\gamma_n} \right)^{2/(k-2)} \right) + \eta \log_{k-1} \frac{\log_{k-1} n}{\max\{\log_{k-1} \gamma_n, 1\}}
	\end{equation}
	verifies \cref{eq:condition_T}. We finally deduce that
	\begin{equation}
		\bar T_{\veps} \le \exp \left( \frac{1 + 2\eta}{2} \left( \frac{A_{k, \Delta}}{\gamma_n} \right)^{2/(k-2)} \right) + (1 + 2 \eta) \log_{k-1} \frac{\log_{k-1} n}{\max\{\log_{k-1} \gamma_n, 1\}}.
	\end{equation}
	Since $\eta$ and $\Delta$ can be arbitrarily small, the above upper bound is equivalent to
	\begin{equation}
		\bar T_{\veps} \le \exp \left( \frac{1 + \eta}{2} \left( \frac{1}{\gamma_n} \right)^{2/(k-2)} \right) + (1 + \eta) \log_{k-1} \frac{\log_{k-1} n}{\max\{\log_{k-1} \gamma_n, 1\}}.
	\end{equation}
	This concludes the proof.
\end{proof}

\subsection{Proof of the lemma}

Finally, we put together results in the previous two sections and prove \cref{lem:conv_power_iter}. 
	Denote the lower and upper bounds in the statement of the theorem as $L_{\veps}$ and $U_{\veps}$, respectively. Then we know that $L_{\veps} \le \bar T_{\veps}  \le U_{\veps} \le n^{1/2(k-1)}$ with high probability. Note that $\P (E_n) \to 1$ as $n \to \infty$, where
	\begin{equation}
		E_n := \left\{ \alpha_t = \bar \alpha_t \ \text{for all} \ t < \min(T_{\veps}, n^{1/2(k-1)}) + 1 \ \text{and} \ L_{\veps} \le \bar T_{\veps} \le U_{\veps} \right\}.
	\end{equation}
	We will show that $T_{\veps} = \bar T_{\veps}$ on $E_n$. Assume this is not true, then there are two possibilities:
	\begin{itemize}
		\item [(a)] $T_{\veps} < \bar T_{\veps}$. Since $\bar T_{\veps} \le n^{1/2(k-1)}$, we know that $\bar\alpha_t = \alpha_t$ for all $t \le T_{\veps}$, which further implies that $\vert \bar\alpha_{T_{\veps}} \vert \ge n^{\veps}$. As a consequence, $T_{\veps} \ge \bar T_{\veps}$, a contradiction.
		\item [(b)] $T_{\veps} > \bar T_{\veps}$. In this case, we know that $\bar \alpha_t = \alpha_t$ for all $t < \min(\bar T_{\veps} + 1, n^{1/2(k-1)}) = \bar T_{\veps} + 1$. Therefore, $ \vert \alpha_{\bar T_{\veps}} \vert \ge n^{\veps}$, and we know that $T_{\veps} \le \bar T_{\veps}$, a contradiction.
	\end{itemize}
	We thus conclude that $T_{\veps} = \bar T_{\veps}$ on $E_n$. Hence, with high probability $L_{\veps} \le T_{\veps} \le U_{\veps}$ as well.
This completes the proof of \cref{lem:conv_power_iter}.

\section{Proofs of auxiliary lemmas in \cref{sec:proof-lem:conv_power_iter}}\label{sec:proof_process}

\subsection{Proof of \cref{lem:dominant_process}}
\label{sec:proof-lem:dominant_process}

	We first prove \cref{eq:formula_polynomial}. Note that for any sequence $\{ b_t \}_{t \ge 0}$ satisfying $b_{t+1} = q b_t^{k-1}$ for some $q \in \R$, one has
	\begin{equation*}
		q^{1/(k-2)} b_{t+1} = \left( q^{1/(k-2)} b_t \right)^{k-1},
	\end{equation*}
	thus leading to
	\begin{equation*}
		q^{1/(k-2)} b_{t} = \left( q^{1/(k-2)} b_0 \right)^{(k-1)^t} \implies b_t = q^{((k-1)^t - 1) / (k-2)} b_0^{(k-1)^t}.
	\end{equation*}
	Specializing the above equation to $(b_0, q) = ( \bar{b}_{0, \Delta}, \gamma_n (1+\Delta)^{k})$ and $(b_0, q) = ( \underline{b}_{0, \Delta},  \gamma_n (1-\Delta)^{k})$ proves \cref{eq:formula_polynomial}.
	
	 Next, we prove \cref{eq:process_upper_bd}. The proof of \cref{eq:process_lower_bd} follows similarly and we omit it for the sake of simplicity. Without loss of generality we may assume $t=0$. The proof applies without change to positive $t$. 
	  To prove \cref{eq:process_upper_bd}, it suffices to prove the following claim:
	\begin{equation}
		\vert \bar \alpha_0 \vert \le \overline{b}_{0, \Delta}, \ \mbox{and} \ \vert Z_s \vert \le \frac{\Delta \overline{b}_{s, \Delta} - \delta}{1 + \delta},\,\, \forall s \in \NN \implies \vert \bar \alpha_{s} \vert \le \overline{b}_{s, \Delta}, \ \forall s \in \NN.
	\end{equation}
	We establish the above relationship via induction. For $s = 0$, it holds trivially. Now assume $\vert \bar \alpha_{s} \vert \le \overline{b}_{s, \Delta}$, then we know that
	\begin{align*}
		\vert \bar \alpha_{s+1} \vert = \, & \gamma_n \bar{\zeta}_s \cdot \left\vert \bar \alpha_s + \bar b_s + \bar c_s Z_s \right\vert^{k-1} \le \gamma_n(1 + \delta) \cdot \left( \vert \bar \alpha_s \vert + \delta + (1 + \delta) \vert Z_s \vert \right)^{k-1} \\
		\le \, & \gamma_n (1 + \Delta) \cdot \left( \overline{b}_{s, \Delta} + \Delta \overline{b}_{s, \Delta} \right)^{k-1} = \gamma_n (1 + \Delta)^k \cdot \overline{b}_{s, \Delta}^{k-1} = \overline{b}_{s+1, \Delta}.
	\end{align*}
	This completes the induction. As a consequence, we obtain that
	\begin{align*}
		& \P \left( \vert \bar \alpha_{t+s} \vert \le \overline{b}_{s, \Delta}, \ \forall s \in \NN \ \big\vert \ \vert \bar \alpha_t \vert \le \overline{b}_{0, \Delta} \right) \\
		\ge \, & \P \left( \vert Z_s \vert \le \frac{\Delta \overline{b}_{s, \Delta} - \delta}{1 + \delta}, \forall s \in \NN \right) = \prod_{s=0}^{\infty} \P \left( \vert Z_s \vert \le \frac{\Delta \overline{b}_{s, \Delta} - \delta}{1 + \delta} \right) \\
		= \, & \prod_{s=0}^{\infty} \left( 1 - 2 \Phi \left( - \frac{\Delta \overline{b}_{s, \Delta} - \delta}{1 + \delta} \right) \right) \ge 1 - 2 \sum_{s=0}^{\infty} \Phi \left( - \frac{\Delta \overline{b}_{s, \Delta} - \delta}{1 + \delta} \right),
	\end{align*}
	where the last line follows from the inequality $\prod_{s = 0}^{\infty} (1-x_s) \ge 1 - \sum x_s$ for $x_s \in [0, 1]$. This completes the proof of the lemma.

\subsection{Proof of \cref{prop:first_lower_bound}}
\label{sec:proof-prop:first_lower_bound}

	Note that by definition of $\{ \bar \alpha_t \}_{t \ge 0}$, we have
	\begin{equation*}
		\left\vert \bar \alpha_{t+1} \right\vert \le \gamma_n(1 + \delta) \left( \left\vert \bar \alpha_t \right\vert + \delta + (1 + \delta) \left\vert Z_t \right\vert \right)^{k-1}.
	\end{equation*}
	Recall that $C_k = (k-2)^{k-2}/(k-1)^{k-1}$, we next show that as long as
	\begin{equation*}
		\left\vert Z_t \right\vert \le \frac{1}{1 + \delta} \left( \frac{C_k}{\gamma_n(1 + \delta)} \right)^{1/(k-2)} - \frac{\delta}{1 + \delta} \ \text{for all} \ 0 \le t \le T-1,
	\end{equation*}
	then we also have
	\begin{equation*}
		\left\vert \bar \alpha_t \right\vert \le \frac{1}{k-2} \left( \frac{C_k}{\gamma_n(1 + \delta)} \right)^{1/(k-2)} \ \text{for all} \ 0 \le t \le T.
	\end{equation*}
	To this end, we use induction. For $t = 0$, we already have $\alpha_0 = 0$, so the above inequality holds automatically. Assume that it is true for all $t \le T - 1$, then one has
	\begin{align*}
		\left\vert \bar \alpha_{t+1} \right\vert \le \, & \gamma_n(1 + \delta) \left( \left\vert \bar \alpha_t \right\vert + \delta + (1 + \delta) \left\vert Z_t \right\vert \right)^{k-1} \\
		\le \, & \gamma_n(1 + \delta) \left( \left\vert \bar \alpha_t \right\vert + \left( \frac{C_k}{\gamma_n(1 + \delta)} \right)^{1/(k-2)} \right)^{k-1} \\
		\le \, & \gamma_n(1 + \delta) \times  \frac{(k-1)^{k-1}}{(k-2)^{k-1}} \times \left( \frac{C_k}{\gamma_n(1 + \delta)} \right)^{(k-1)/(k-2)} \\
		= \, & \frac{1}{k-2} \left( \frac{C_k}{\gamma_n(1 + \delta)} \right)^{1/(k-2)} = M(k, \gamma_n, \delta).
	\end{align*}
	This completes the induction. As a consequence, we deduce that
	\begin{align*}
		& \P \left( \left\vert \bar \alpha_t \right\vert \le \frac{1}{k-2} \left( \frac{C_k}{\gamma_n(1 + \delta)} \right)^{1/(k-2)} \ \text{for all} \ 0 \le t \le T \right) \\
		\ge \, & \P \left( \left\vert Z_t \right\vert \le \frac{1}{1 + \delta} \left( \frac{C_k}{\gamma_n(1 + \delta)} \right)^{1/(k-2)} - \frac{\delta}{1 + \delta} \ \text{for all} \ 0 \le t \le T-1 \right) \\
		= \, & \left( 1 - 2 \Phi \left( \frac{\delta}{1 + \delta} - \frac{1}{1 + \delta} \left( \frac{C_k}{\gamma_n(1 + \delta)} \right)^{1/(k-2)} \right) \right)^T \\
		\ge \, & 1 - 2 T \Phi \left( \frac{\delta}{1 + \delta} - \frac{1}{1 + \delta} \left( \frac{C_k}{\gamma_n(1 + \delta)} \right)^{1/(k-2)} \right) = 1 - 2 T \Phi \left( - N(k, \gamma_n, \delta) \right).
	\end{align*}
	This completes the proof.

\subsection{Proof of \cref{prop:process_lower_bd}}
\label{sec:proof-prop:process_lower_bd}
	Take $\Delta = 1$. Consider the sequence $\{ \overline{b}_{t, \Delta} \}_{t \ge 0}$ defined via $\overline{b}_{0, \Delta} = M_{k, n} = \log_{k-1} n > 0$, and
	\begin{equation*}
		\overline{b}_{t+1, \Delta} = \gamma_n 2^{k} \cdot \overline{b}_{t, \Delta}^{k-1}.
	\end{equation*}
	%For simplicity, in what follows we will use $M$ to denote $M_{k, n}$.
	Then, according to \cref{lem:dominant_process} we know that
	\begin{equation}
		\overline{b}_{t, \Delta} = \left( 2^{k} \gamma_n \right)^{((k-1)^t - 1) / (k-2)} \cdot M_{k, n}^{(k-1)^t},
	\end{equation}
	and that
	\begin{align*}
		\P \left( \vert \bar \alpha_t \vert \le \overline{b}_{t, \Delta}, \ \forall t \in \NN \right) \stackrel{(i)}{=} \, \P \left( \vert \bar \alpha_t \vert \le \overline{b}_{t, \Delta}, \ \forall t \in \NN \ \big\vert \ \vert \bar \alpha_0 \vert \le M_{k, n} \right) \ge\, 1 - 2 \sum_{t=0}^{\infty} \Phi \left( - \frac{\overline{b}_{t, \Delta} - \delta}{1 + \delta} \right),
	\end{align*}
	where $(i)$ is because $\bar \alpha_0 = 0$. Since $\gamma_n \gg (\log n)^{-(k - 2) / 2}$, we then see that $M_{k, n} \gg \max\{\gamma_n^{-1/(k-2)}, 1\}$, hence for large enough $n$:
	\begin{equation*}
		\frac{\overline{b}_{t, \Delta} - \delta}{1 + \delta} \ge \frac{\overline{b}_{t, \Delta}}{2} \ \text{for all} \ t \ge 0.
	\end{equation*}
	Therefore, for sufficiently large $n$ it holds that
	\begin{align*}
		1 - 2 \sum_{t=0}^{\infty} \Phi \left( - \frac{\overline{b}_{t, \Delta} - \delta}{1 + \delta} \right) \ge \, & 1 - 2 \sum_{t=0}^{\infty} \Phi \left( - \frac{\overline{b}_{t, \Delta}}{2} \right) \ge \, 1 - 2 \sum_{t=0}^{\infty} \Phi \left( - \left( 2^{k-1} \gamma_n M_{k, n}^{k-2} \right)^t \cdot M_{k, n} \right) \\
		\ge\, & 1 - C \sum_{t=0}^{\infty} \exp \left( - \frac{M_{k, n}^2}{2} \cdot \left( 2^{k-1} \gamma_n M_{k, n}^{k-2} \right)^{2 t} \right) \\
		\ge\, & 1 - C \sum_{t=0}^{\infty} \exp \left( - \frac{(t+1) M_{k, n}^2}{2} \right) \ge 1 - C \exp \left( - \frac{M_{k, n}^2}{2} \right),
	\end{align*}
	where $C$ is a positive numerical constant. Furthermore, we have
	\begin{equation*}
		M_{k, n} = \log_{k-1} n \implies 1 - C \exp \left( - \frac{M_{k, n}^2}{2} \right) \ge 1 - n^{-C}.
	\end{equation*}
	It then follows that for large enough $n$, with probability at least $1 - n^{-C}$,
	\begin{align*}
		& \max_{0 \le t \le t_n} \vert \bar \alpha_t \vert \le \max_{0 \le t \le t_n} \overline{b}_{t, \Delta} \le \max_{0 \le t \le t_n}\left( \max(2^{k} \gamma_n, 1) \cdot M_{k, n} \right)^{(k-1)^t} \\
		\le\, & \left( \max(2^{k} \gamma_n, 1) \cdot M_{k, n} \right)^{(k-1)^{t_n}} \le\, \left( \max(2^{k} \gamma_n, 1) \cdot M_{k, n} \right)^{\left(\frac{\log_{k-1} n}{\max\{\log_{k-1} \gamma_n, 1\}} \right)^{1 - \eta}} \\
		=\, & \exp \left( \left(\frac{\log_{k-1} n}{\max\{\log_{k-1} \gamma_n, 1\}} \right)^{1 - \eta} \cdot \left( \log M_{k, n} + \log \left( \max(2^{k} \gamma_n, 1) \right) \right) \right) \\
		\le\, & \exp \left( C_0 \left( \left( \log n \right)^{1 - \eta/2} + (\log n)^{1 - \eta} (\log \left( \max\{\gamma_n, 1\} \right))^{\eta} \right) \right) \stackrel{(i)}{=} \modif{n^{o(1)}} \le \frac{n^{\veps}}{2}
	\end{align*}
	for sufficiently large $n$, where $(i)$ is due to our assumption: $\gamma_n = \modif{n^{o(1)}}$. In the above display, $C_0$ is another positive numerical constant. This completes the proof of the proposition.
	
\subsection{Proof of \cref{prop:first_stage}}
\label{sec:proof-prop:first_stage}

	Note that for any $0 \le t \le T - 1$, $Z_t$ is independent of $\bar{\alpha}_t + \bar{b}_t$. Therefore, for any $x \ge 0$, with probability $ \Phi (-x)$, we have $\vert Z_t \vert \ge x$ and $\sign (Z_t) = \sign (\bar{\alpha}_t + \bar{b}_t)$. When this event occurs, we have
	\begin{equation*}
		\left\vert \bar{\alpha}_t + \bar{b}_t + \bar{c}_t Z_t \right\vert \ge \bar{c}_t \left\vert Z_t \right\vert \ge (1 - \delta) x \implies \left\vert \bar{\alpha}_{t+1} \right\vert \ge \gamma_n (1 - \delta)^k x^{k-1}.
	\end{equation*}
	Define $E_t := \{ \vert Z_t \vert < x \ \text{or} \ \sign(Z_t) \neq \sign(\bar{\alpha}_t + \bar{b}_t) \}$, then $\P (E_t) = \P \left( E_t \vert \cap_{s=0}^{t-1} E_s \right) = 1 - \Phi(-x)$ for all $0 \le t \le T - 1$. Note that on $(\cap_{t=0}^{T-1} E_t)^{c}$, we have
	\begin{equation}
		\max_{0 \le t \le T} \left\vert \bar{\alpha}_t \right\vert \ge \gamma_n(1 - \delta)^k x^{k-1}.
	\end{equation}
	Since $\{ \bar{\alpha}_t \}_{t \ge 0}$ is a Markov chain, we obtain that
	\begin{align*}
		\P \left( (\cap_{t=0}^{T-1} E_t)^{c} \right) = \, & 1 - \P \left( \cap_{t=0}^{T-1} E_t \right) = 1 - \prod_{t=0}^{T-1} \P \left( E_t \vert \cap_{s=0}^{t-1} E_s \right) \\
		= \, & 1 - (1 - \Phi(-x))^T \ge 1 - \exp (-T \Phi(-x)).
	\end{align*}
	Now, choosing $x = (m / \gamma_n (1 - \delta)^k)^{1/(k-1)}$, it follows that
	\begin{equation}
		\P \left( \max_{0 \le t \le T} \left\vert \bar{\alpha}_t \right\vert \ge m \right) \ge 1 - \exp \left( - T \Phi \left( - \left( \frac{m}{\gamma_n (1 - \delta)^k} \right)^{1/(k-1)} \right) \right).
	\end{equation}
	This completes the proof.

\subsection{Proof of \cref{prop:process_upper_bd}}
\label{sec:proof-prop:process_upper_bd}

Consider the sequence $\{ \underline{b}_{t, \Delta} \}_{t \ge 0}$ defined as per \cref{eq:lower_process} with $\underline{b}_{0, \Delta} = m(\Delta, L)$. Applying \cref{lem:dominant_process} to $\{ \bar \alpha_t \}$ and $\{ \underline{b}_{t, \Delta} \}$ yields 
\begin{equation}
\label{eq:lower-bound-b}
	\underline{b}_{t, \Delta} = \left( \gamma_n (1 - \Delta)^k \right)^{((k-1)^t - 1) / (k-2)} \cdot m(\Delta, L)^{(k-1)^t},
\end{equation}
and consequently,
\begin{align}
\label{eq:72}
	& \P \left( \vert \bar \alpha_{t+t_n} \vert \ge \underline{b}_{t_n, \Delta} \ \big\vert \ \vert \bar \alpha_t \vert \ge m(\Delta, L) \right) \nonumber \\
	\ge \, & \P \left( \vert \bar \alpha_{t+s} \vert \ge \underline{b}_{s, \Delta}, \ \forall s \in \NN \ \big\vert \ \vert \bar \alpha_t \vert \ge m(\Delta, L) \right) \nonumber \\
	\ge \, & 1 - 2 \sum_{s=0}^{\infty} \Phi \left( - \frac{ \Delta \underline{b}_{s, \Delta} - \delta}{1 + \delta} \right). 
	\end{align}
Note that by definition $\gamma_n (1 - \Delta)^k m(\Delta, L)^{k-2} \ge 1 + \Delta$. Also note that 
\begin{align*}
	\underline{b}_{t, \Delta} = \left( \gamma_n (1 - \Delta)^k \right)^{-1/(k - 2)} \cdot \left( m(\Delta, L) \cdot \gamma_n^{1/(k - 2)} (1 - \Delta)^{k / (k - 2)} \right)^{(k - 1)^t}, 
\end{align*}
hence $\{\underline{b}_{t, \Delta}\}_{t \geq 0}$ is an increasing sequence, lower bounded by $m(\Delta, L)$. Therefore, for sufficiently large $n$ the last line in \cref{eq:72} if further no smaller than $1 - 2 \sum_{s=0}^{\infty} \Phi ( - { \Delta \underline{b}_{s, \Delta}}/{2} )$, which by \cref{eq:lower-bound-b} is equal to 
	
	\begin{align*}
	& 1 - 2 \sum_{s=0}^{\infty} \Phi \left( - \frac{\Delta}{2} \left( \gamma_n (1 - \Delta)^k \right)^{((k-1)^s - 1) / (k-2)} \cdot m(\Delta, L)^{(k-1)^s} \right) \\
	=\, &  1 - 2 \sum_{s=0}^{\infty} \Phi \left( - \frac{\Delta}{2} \left( \gamma_n (1 - \Delta)^k m(\Delta, L)^{k-2} \right)^{((k-1)^s - 1) / (k-2)} \cdot m(\Delta, L) \right) \\
	\stackrel{(i)}{\ge}\, & 1 - 2 \sum_{s=0}^{\infty} \Phi \left( - \frac{\Delta}{2} (1 + \Delta)^{((k-1)^s - 1) / (k-2)} \cdot m(\Delta, L) \right) \stackrel{(ii)}{\ge} 1 - 2 \sum_{s=0}^{\infty} \Phi \left( - \frac{\Delta}{2} (1+\Delta)^{s} m(\Delta, L) \right) \\
	\stackrel{(iii)}{\ge}\, & 1 - C \sum_{s=0}^{\infty} \exp \left( - \frac{\Delta^2}{8} (1 + \Delta)^{2s} m(\Delta, L)^2 \right) \ge 1 - C \sum_{s=0}^{\infty} \exp \left( - \frac{1+2 s \Delta}{8} \Delta^2 m(\Delta, L)^2 \right) \\
	=\, & 1 - \frac{C \exp (-\Delta^2m(\Delta, L)^2/8 )}{1 - \exp (-\Delta^3 m(\Delta, L)^2/4)} \ge 1 - C \exp \left( - \frac{\Delta^2 m(\Delta, L)^2}{8} \right),
\end{align*}
where  $(i)$ follows from our choice of $m(\Delta, L)$: $\gamma_n (1 - \Delta)^k m(\Delta, L)^{k-2} \ge 1 + \Delta$, $(ii)$ is due to the inequality $(k-1)^s \ge 1 + (k-2) s$, and $(iii)$ follows from the well-known fact regarding Gaussian tail bound: $\Phi(-x) \le C \exp(-x^2/2)$ for $x \le 0$, where $C > 0$ is a numerical constant.

It then suffices to show that $\underline{b}_{t_n, \Delta} \ge n^{\eps}$. By direct calculation, we obtain that for a sufficiently large $L$ and $n$, it holds that
\begin{align}
\label{eq:74}
	\underline{b}_{t_n, \Delta} = \, & \left( \gamma_n (1 - \Delta)^k \right)^{((k-1)^{t_n} - 1) / (k-2)} \cdot m(\Delta, L)^{(k-1)^{t_n}} \nonumber \\
	\ge \, & \left( \gamma_n (1 - \Delta)^k m(\Delta, L)^{k-2} \right)^{((k-1)^{t_n} - 1) / (k-2)} \nonumber \\
	\ge \, &  (\gamma_n (1 - \Delta)^k m(\Delta, L)^{k-2})^{(k-1)^{t_n - 1}} \nonumber \\
	\ge \, & (\gamma_n (1 - \Delta)^k m(\Delta, L)^{k-2})^{(k-1)^{(1 + \eta/2) \log_{k-1} (\log_{k-1} n  /  \max\{\log_{k - 1} \gamma_n, 1\})}} \nonumber \\
	\ge \, & (\gamma_n (1 - \Delta)^k m(\Delta, L)^{k-2})^{(\log_{k-1} n  /  \max\{\log_{k - 1} \gamma_n, 1\})^{(1 + \eta / 2)}}. %\\
	%\ge \, & (\gamma_n (1 - \Delta)^k m(\Delta, L)^{k-2})^{\log_{1 + \Delta} (k-1) \times (\log_{k-1} n  / \max\{\log_{k - 1} \gamma_n, 1\})^{1 + \eta / 4}} \\
	%= \, & (k-1)^{(\log_{k-1} n  / \max\{\log_{k - 1} \gamma_n, 1\})^{1 + \eta / 4}} = n^{1 / \max\{\log_{k - 1} \gamma_n, 1\}}.
\end{align}
%
%We separately discuss two cases:
%
%\subsubsection*{If $\gamma_n \gg 1$}
By definition of $m(\Delta, L)$, we see that $\gamma_n (1 - \Delta)^k m(\Delta, L)^{k-2} \geq C_{\Delta, k, L} \max\{\gamma_n, e\}$, where $C_{\Delta, k, L} > 1$ is a constant that depends only on $(\Delta, k, L)$ (this is true if we choose $L$ large enough). For simplicity, define $\omega = \max\{\gamma_n, e\}$. 
To show that the last line of \cref{eq:74} is no smaller than $n^{\eps}$, it suffices to prove the following:
\begin{align*}
	& \left( \frac{\log_{k - 1} n}{ \log_{k - 1} \omega } \right)^{1 + \eta / 2} \cdot \log_{k - 1} (C_{\Delta,k, L} \omega)
 \geq \eps \log_{k - 1} n \\
 \Longleftrightarrow & \left( \log_{k-1} n \right)^{\eta / 2} \cdot \log_{k - 1} (C_{\Delta,k, L} \omega) \ge \veps \left( \log_{k - 1} \omega \right)^{1 + \eta / 2}.
 \end{align*}
This is true for large enough $n$ since $\omega \ll n$ and $C_{\Delta, k, L} > 1$. The proof is complete. 

%\subsubsection*{If $\gamma_n \lesssim 1$}

%In this case, the last line of \cref{eq:74} is no smaller than
%
%\begin{align*}
%	& (1 + \Delta)^{(\log_{k-1} n  /  \max\{\log_{k - 1} \gamma_n, 1\})^{(1 + \eta / 2)}} \\
%	 \geq & (1 + \Delta)^{\log_{1 + %\Delta}(k - 1) \cdot (\log_{k-1} n  /  \max\{\log_{k - 1} \gamma_n, 1\})} \\
%	= & (k - 1)^{\log_{k - 1}n / \max\{\log_{k - 1} \gamma_n, 1\}}
%\end{align*}

%This completes the proof.

\section{Proof of \cref{thm:stopping}}\label{sec:proof_stop}

In this section, we follow the definitions and notations introduced in the proof of \cref{thm:conv_power_iter}. Recall from \cref{eq:T-eps} that $T_{\veps} = \min \{ t \in \NN_+: |\alpha_t| \geq n^{\varepsilon} \}$, where
\begin{equation*}
	\alpha_t = \lambda_n \langle \vv, \tilde{\vv}^{t-1} \rangle^{k - 1} = \gamma_n \left( \sqrt{n} \langle \vv, \tilde{\vv}^{t-1} \rangle \right)^{k-1}.
\end{equation*}

\subsubsection*{Part I: before $T_{\eps}$}

We first prove that with probability $1 - o_n(1)$, 
\begin{align*}
	\|\tilde{\vv}^t - \tilde{\vv}^{t - 1}\|_2 < 1 / 2
\end{align*}
simultaneously for all $t \in \{1, 2, \cdots, T_{\eps} - 1\}$.
Invoking \cref{lem:conv_power_iter}, we see that for $\eps = (6k - 11) / 12(k - 1)$, with probability $1 - o_n(1)$ we have $T_{\eps} + 10 < n^{1/10(k - 1)}$.
Recall that $\ww_{i_1, i_2, \cdots, i_{k - 1}} \overset{d}{=} \bg_t \overset{d}{=} \normal(\bzero, \id_n)$. 
Employing standard Gaussian concentration inequalities, we see that with probability at least $1 - o_n(1)$, it holds that 
\begin{align}
\label{eq:gaussian-bounds}
	\big\| \ww_{i_1, i_2, \cdots, i_{k - 1}} \big\|_2 \leq \sqrt{n \log n}, \qquad \left|\big\| \bg_t \big\|_2 - \sqrt{n} \right| \leq \log n, \qquad \big\| \bg_t - \bg_{t + 1} \big\|_2 \geq \sqrt{n}
\end{align}
for all $t \in \{0, 1, \cdots, \lceil n^{1 / 10(k - 1)} \rceil\}$ and $(i_1, \cdots, i_{k - 1}) \in \HH_t$.
In addition, \cref{lem:err_bound} implies that with probability $1 - o_n(1)$ we have 
\begin{align*}
	\left| \sqrt{1 - \|\tilde \vv_{\parallel}^{t - 1}\|_2^{2k - 2}} - 1 \right| = \left| c_t - 1 \right| \leq  C n^{-5 / 6}
\end{align*}
for an absolute constant $C > 0$ and all $t \in \{0, 1, \cdots, T_{\eps} - 1\}$. 
Observe that for $t \in \{1, \cdots, \lceil n^{1 / 10(k - 1)} \rceil\}$, 
\begin{align*}
	\sum_{(i_1, \cdots, i_{k - 1}) \in \HH_{t - 2}}|\beta_{i_1, i_2, \cdots, i_{k - 1}}|^2 + c_t^2 = 1, \qquad \sum_{(i_1, \cdots, i_{k - 1}) \in \HH_{t - 3}}|\beta_{i_1, i_2, \cdots, i_{k - 1}}|^2 + c_{t - 1}^2 = 1, 
\end{align*}
hence using Cauchy-Schwartz inequality we get
\begin{align}
\label{eq:sum-beta}
	\sum_{(i_1, \cdots, i_{k - 1}) \in \HH_{t - 2}}|\beta_{i_1, i_2, \cdots, i_{k - 1}}| \leq C n^{-5 / 12 + 1 / 20}, \qquad \sum_{(i_1, \cdots, i_{k - 1}) \in \HH_{t - 3}}|\beta_{i_1, i_2, \cdots, i_{k - 1}}| \leq C n^{-5 / 12 + 1 / 20}. 
\end{align}
 Plugging these bounds into \cref{eq:expression-vt+1} and applying the triangle inequality, we see that with probability $1 - o_n(1)$
\begin{align*}
	& \|\vv^t\|_2 \leq  n^{1/2} + \log n + n^{(6k - 11) / 12(k - 1)} + C n^{-11 / 30} \sqrt{n \log n} + C n^{-5 / 6} \cdot (n^{1/2} + \log n), \\
	& \|\vv^t\|_2 \geq n^{1/2} - \log n - n^{(6k - 11) / 12(k - 1)} - C n^{-11 / 30} \sqrt{n \log n} - C n^{-5 / 6} \cdot (n^{1/2} + \log n). 
\end{align*}
for all $t \in \{0, 1, \cdots, T_{\eps} - 1\}$. 
From the above upper bound, we see that with high probability 
\begin{align*}
	\left| \|\vv^t\|_2 - n^{1/2} \right| \leq 2n^{(6k - 11) / 12(k - 1)}
\end{align*}
for all $t \in \{0, 1, \cdots, T_{\eps} - 1\}$. 

We can also employ \cref{eq:gaussian-bounds} and \cref{eq:sum-beta} to lower bound $\|\vv^t - \vv^{t - 1}\|_2$. Invoking the triangle inequality, we get:  
\begin{align*}
	& \left\| \vv^t - \vv^{t - 1} \right\|_2 \\
	\geq & \, \|\bg_{t} - \bg_{t - 1}\|_2 - |\alpha_t| - |\alpha_{t - 1}| - |1 - c_t| \cdot \|\bg_t\|_2 - |1 - c_{t- 1}| \cdot \|\bg_{t - 1}\|_2 - \\
	& \sum_{(i_1, \cdots, i_{k - 1}) \in \HH_{t - 2}}|\beta_{i_1, i_2, \cdots, i_{k - 1}}| \cdot \|\ww_{i_1, i_2, \cdots, i_{k - 1}}\|_2 - \sum_{(i_1, \cdots, i_{k - 1}) \in \HH_{t - 3}}|\beta_{i_1, i_2, \cdots, i_{k - 1}}| \cdot \|\ww_{i_1, i_2, \cdots, i_{k - 1}}\|_2. % \\
	%\geq &\, \|\bg_{t} - \bg_{t - 1}\|_2 - 2n^{(6k - 11) / 12(k - 1)} - C n^{-4/5} \cdot \left( n^{1/2} + \log n \right) - \sqrt{n \log n} - C n^{-7 / 20} \cdot \sqrt{n \log n} \\
	%\geq &\, n^{1/2} - 3 n^{3/10},
\end{align*}
By definition of $T_{\eps}$ we have $\max\{|\alpha_t|, |\alpha_{t - 1}|\} \leq n^{\eps} = n^{(6k - 11) / 12(k - 1)}$. By \cref{lem:err_bound} we have $\max\{|c_t - 1|, |c_{t - 1} - 1|\} \leq C n^{-5 / 6}$. 
Putting together the above equations, we get 
\begin{align*}
	& \left\| \vv^t - \vv^{t - 1} \right\|_2 \\
	\geq &\, \|\bg_{t} - \bg_{t - 1}\|_2 - 2n^{(6k - 11) / 12(k - 1)} - C n^{-5/6} \cdot \left( n^{1/2} + \log n \right) -  C n^{-11 / 30} \sqrt{n \log n} \\
	\geq &\, n^{1/2} - 3 n^{(6k - 11) / 12(k - 1)},
\end{align*}
which holds with high probability for all $t \in \{0, 1, \cdots, T_{\eps} - 1\}$ for a sufficiently large $n$. 
We can then use this to give a high-probability lower bound for $\|\tilde\vv^{t} - \tilde\vv^{t - 1}\|_2$. Leveraging the triangle inequality, we obtain that for a large enough $n$, with probability $1 - o_n(1)$
\begin{align*}
	\left\|\tilde\vv^{t} - \tilde\vv^{t - 1} \right\|_2 = & \left\| \frac{\vv^{t}}{\|\vv^{t}\|_2} - \frac{\vv^{t - 1}}{\|\vv^{t - 1}\|_2} \right\|_2 \\
	\geq & \left\| \frac{\vv^{t}}{\sqrt{n}} - \frac{\vv^{t - 1}}{\sqrt{n}} \right\|_2 - \left| \frac{1}{\|\vv^{t}\|_2} - \frac{1}{\sqrt{n}} \right| \cdot \|\vv^{t}\|_2 - \left| \frac{1}{\|\vv^{t - 1}\|_2} - \frac{1}{\sqrt{n}} \right| \cdot \|\vv^{t - 1}\|_2 \\
	\geq & \, 1 - 20n^{-5 / 12(k - 1)} 
\end{align*}
for all $t \in \{0, 1, \cdots, T_{\eps} - 1\}$. 
Therefore, with probability $1 - o_n(1)$ we have $\|\tilde\vv^t - \tilde\vv^{t - 1}\|_2 < 1 / 2$ for all $t \in \{1, 2, \cdots, T_{\eps} - 1\}$. This completes the proof for Part I. The takeaway message is that with high probability $T_{\sto} \geq T_{\eps} + 1$. 

\subsubsection*{Part II: after $T_{\eps}$}

In the second part of the proof, we show with high probability $|\langle \tilde\vv^{T_{\eps} + 1}, \tilde\vv^{T_{\eps} + 2} \rangle| \geq 1 / 2$, hence $T_{\sto} \leq T_{\eps} + 4$. 

We first prove with probability $1 - o_n(1)$ we have $|\langle \vv, \tilde\vv^{T_{\eps} + i} \rangle| \geq 1 - \delta$ for all $i \in \{1, 2, 3, 4\}$. 
According to \cref{eq:25}, we see that with probability $1 - o_n(1)$ we have $|\langle \tilde\vv^{T_{\eps} + 1}, \vv\rangle| \geq 1 - C n^{-1/24}$. This already proves the desired result for $i = 1$. 

Once again we apply standard Gaussian concentration inequalities, and obtain that  with probability $1 - o_n(1)$
\begin{align}\label{eq:gaussian-bound-2}
	\left| \langle \ww_{i_1, i_2, \cdots, i_{k - 1}}, \vv \rangle \right| \leq \log n, \qquad \left| \langle \bg_t, \vv \rangle \right| \leq \log n
\end{align}
for all $t \in \{0, 1, \cdots, \lceil n^{1 / 10(k - 1)} \rceil\}$ and $(i_1, \cdots, i_{k - 1}) \in \HH_t$. 

Applying \cref{eq:gaussian-bounds} and the triangle inequality to \cref{eq:expression-vt+1}, we get
\begin{align*}
	\|\vv^{T_{\eps} + 2} \|_2 \geq &\, |\alpha_{T_{\eps} + 2}| - \sum\limits_{(i_1, \cdots, i_{k - 1}) \in \HH_{T_{\eps}}} |\beta_{i_1, i_2, \cdots, i_{k - 1}}^{(T_{\eps} + 1)}| \cdot \|\ww_{i_1, i_2, \cdots, i_{k - 1}}\|_2 - c_{T_{\eps} + 2} \| \bg_{T_{\eps} + 2} \|_2 \\
	\geq & \, \lambda_n (1 - C n^{-1/24})^{k - 1} - n^{1/10} \cdot  \sqrt{n \log n}, 
\end{align*}
where to get the second lower bound we use the equality $|\alpha_{T_{\eps} + 2}| = \lambda_n \cdot |\langle \vv, \tilde{\vv}^{T_{\eps} + 1} \rangle^{k - 1}|$. 
Similarly, we obtain $\|\vv^{T_{\eps} + 2} \|_2 \leq \lambda_n + n^{1/10} \cdot  \sqrt{n \log n}$. Therefore, 
\begin{align*}
	& |\langle \tilde \vv^{T_{\eps} + 2}, \vv \rangle| =  \frac{|\langle \vv^{T_{\eps} + 2}, \vv \rangle|}{\|\vv^{T_{\eps} + 2}\|_2} \\
	 = & \frac{1}{\|\vv^{T_{\eps} + 2}\|_2}  \cdot \left| \alpha_{T_{\eps} + 2} + \sum_{(i_1, \cdots, i_{k - 1}) \in \HH_{T_{\eps}}} \beta_{i_1, i_2, \cdots, i_{k - 1}}^{(T_{\eps} + 1)} \langle  \ww_{i_1, i_2, \cdots, i_{k - 1}}, \vv \rangle + c_{T_{\eps} + 2} \langle \bg_{T_{\eps} + 2}, \vv \rangle \right| \\
	 \geq & \frac{1}{\lambda_n + n^{1/10} \cdot \sqrt{n \log n}} \cdot \left(\lambda_n (1 - C n^{-1/24})^{k - 1} - n^{1/10} \log n \right) \geq 1 - o_n(1)
\end{align*}
with probability $1 - o_n(1)$. Following the same route, we are able to conclude that with high probability $|\langle \tilde \vv^{T_{\eps} + 3}, \vv \rangle| = 1 - o_n(1)$ and $|\langle \tilde \vv^{T_{\eps} + 3}, \vv \rangle| = 1 - o_n(1)$. 
As a direct consequence, we see that with high probability $\min_{i \in \{1,2,3,4\}} |\langle \tilde \vv^{T_{\eps} + i}, \vv \rangle| \geq 1 - \delta$. 

Finally, we show $|\langle \tilde{\vv}^{T_{\eps} + 1}, \tilde{\vv}^{T_{\eps} + 2} \rangle| >  1 / 2$. Let $s = \sign (\langle \vv, \tilde\vv^{T_{\eps} + 1} \rangle)$, then 
\begin{align}
\label{eq:76}
	|\langle \tilde\vv^{T_{\eps} + 1}, \tilde\vv^{T_{\eps} + 2} \rangle| =   |\langle \tilde\vv^{T_{\eps} + 1}, \tilde\vv^{T_{\eps} + 2} - s\vv  \rangle + s \langle \vv, \tilde \vv^{T_{\eps} + 1} \rangle| \geq  |\langle \vv, \tilde\vv^{T_{\eps} + 1} \rangle| - \|\tilde\vv^{T_{\eps} + 2} - s\vv\|_2. 
\end{align}
With high probability 
\begin{align*}
	\|\tilde\vv^{T_{\eps} + 2} - s \vv\|_2^2 = 2 - 2s \langle \vv, \tilde\vv^{T_{\eps} + 2} \rangle = 2 - 2|\langle \vv, \tilde\vv^{T_{\eps} + 2} \rangle| = o_n(1).
\end{align*}
Plugging the above upper bound into \cref{eq:76}, we get 
\begin{align*}
	|\langle \tilde\vv^{T_{\eps} + 1}, \tilde\vv^{T_{\eps} + 2} \rangle| \geq 1 - o_n(1), 
\end{align*}
which further implies that with probability $1 - o_n(1)$ the inner product $|\langle \tilde\vv^{T_{\eps} + 1}, \tilde\vv^{T_{\eps} + 2} \rangle|$ is larger than $1 / 2$. Hence, $T_{\sto} \leq T_{\eps} + 4$. Recall that we have proved $\min_{i \in \{1,2,3,4\}} |\langle \tilde \vv^{T_{\eps} + i}, \vv \rangle| \geq 1 - \delta$ with high probability, which further implies that with high probability $|\langle \vv, \tilde\vv^{T_{\sto}} \rangle| \geq 1 - \delta$. The proof is complete.

\section{Additional experiments}
\label{sec:additional_exp}

\subsection{Comparing alignment and the polynomial recurrence process}

We collect in this section additional experiments for \cref{sec:exp-compare}. The basic setup is the same as that presented in the main text.  Throughout the experiment, we fix $\lambda_n = n^{(k - 1) / 2}$, and use different combinations of $(n, k)$. 
As before, we compare the marginal distributions of $\{\alpha_t\}_{t \geq 0}$ and $\{X_t\}_{t \geq 0}$ by comparing the histograms of their empirical marginal distributions, generated from 1000 independent experiments. Observing the simulation outcomes, we see that they all match well. 

\subsubsection*{Setting I: $\mathbf{n = 100, k = 3}$} 

Simulation results are plotted as Figure \ref{fig:comparison1}.

\begin{figure}[ht]
  \begin{minipage}[ht]{.247\linewidth}
    \includegraphics[width=\linewidth]{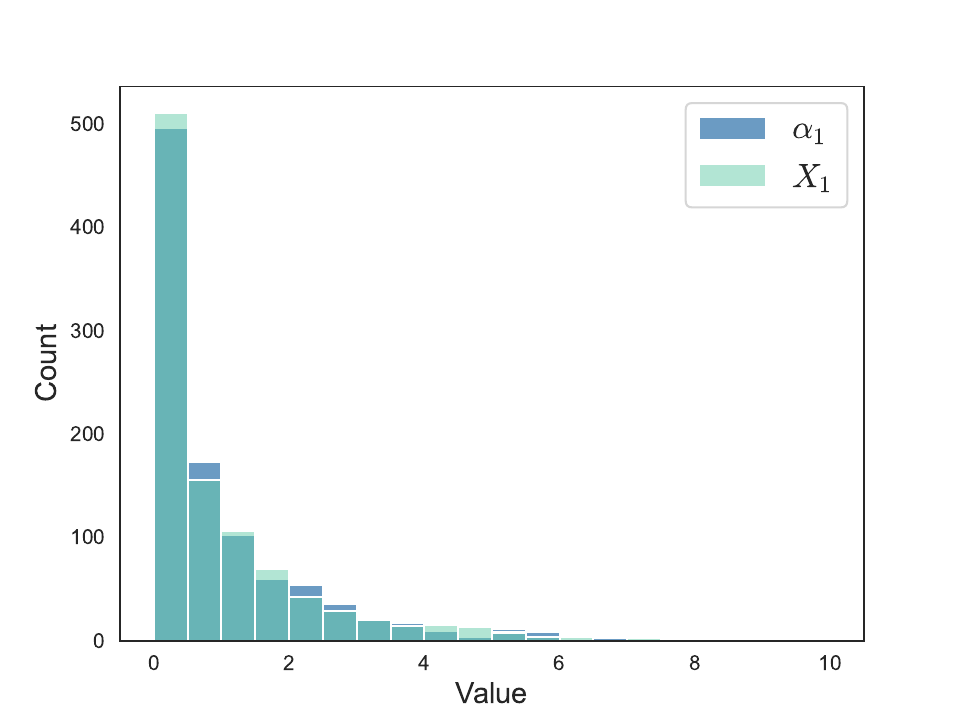}%
    %\caption{This is a duck}%
  \end{minipage}\hfil
  \begin{minipage}[ht]{.247\linewidth}
    \includegraphics[width=\linewidth]{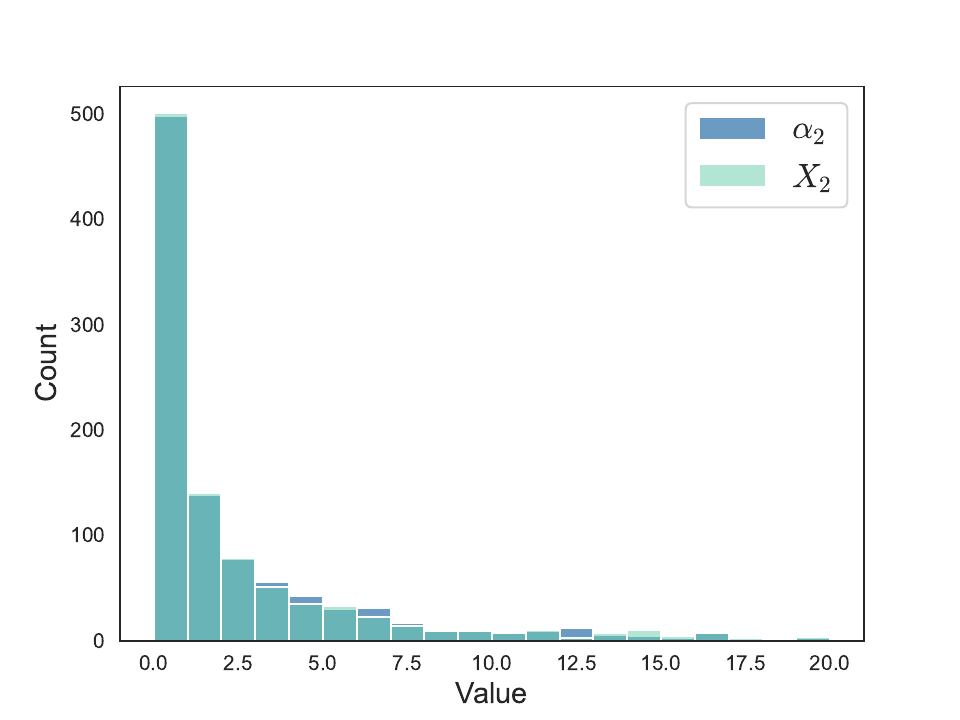}%
    %\caption{This is another duck}%
  \end{minipage}\hfil 
  \begin{minipage}[ht]{.247\linewidth}
    \includegraphics[width=\linewidth]{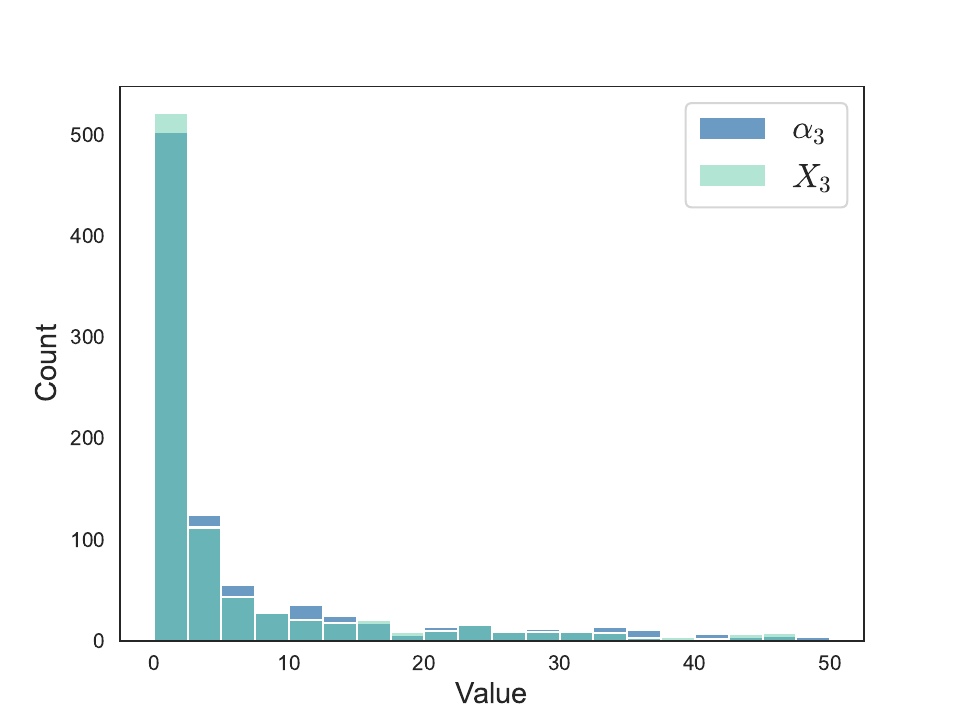}%
    %\caption{But then the ducks arrived}%
  \end{minipage}
  \begin{minipage}[ht]{.247\linewidth}
    \includegraphics[width=\linewidth]{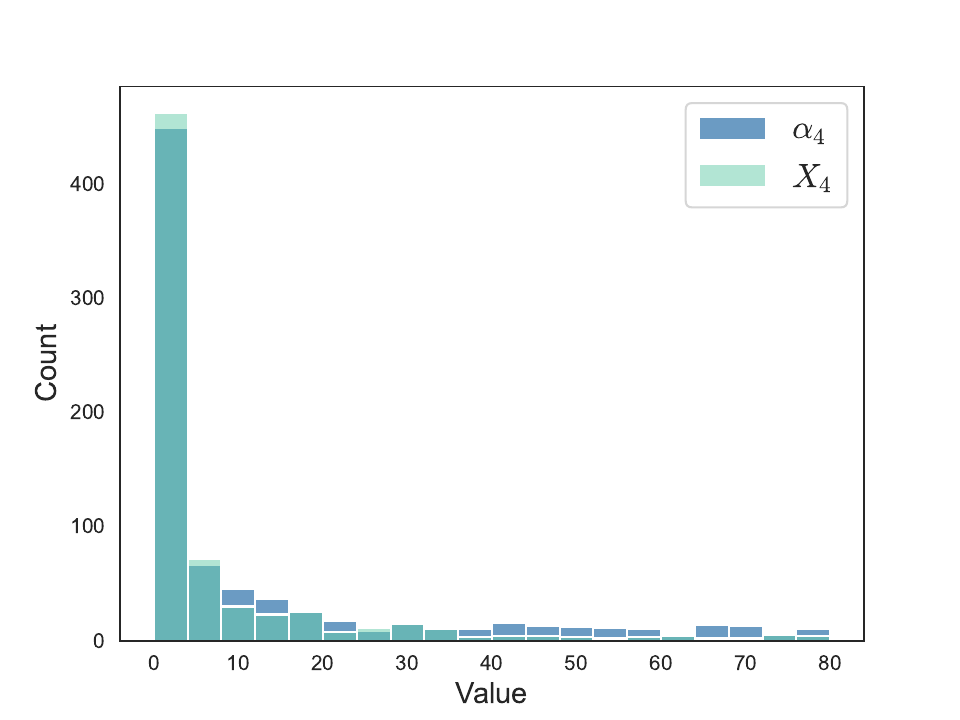}%
    %\caption{This is another duck}%
  \end{minipage}
  \caption{Comparison of the marginal distributions between $\alpha_t$ and $X_t$, for $t \in \{1, 2, 3, 4\}$ from left to right. Here, we set $n = 100$, $k = 3$, $\lambda_n = n^{(k - 1) / 2}$, and run tensor power iteration from random initialization on independent datasets for 1000 times. }
\label{fig:comparison1}
\end{figure}

\subsubsection*{Setting II: $\mathbf{n = 500, k = 3}$}  

Simulation results are plotted as Figure \ref{fig:comparison2}. 

\begin{figure}[ht]
  \begin{minipage}[ht]{.247\linewidth}
    \includegraphics[width=\linewidth]{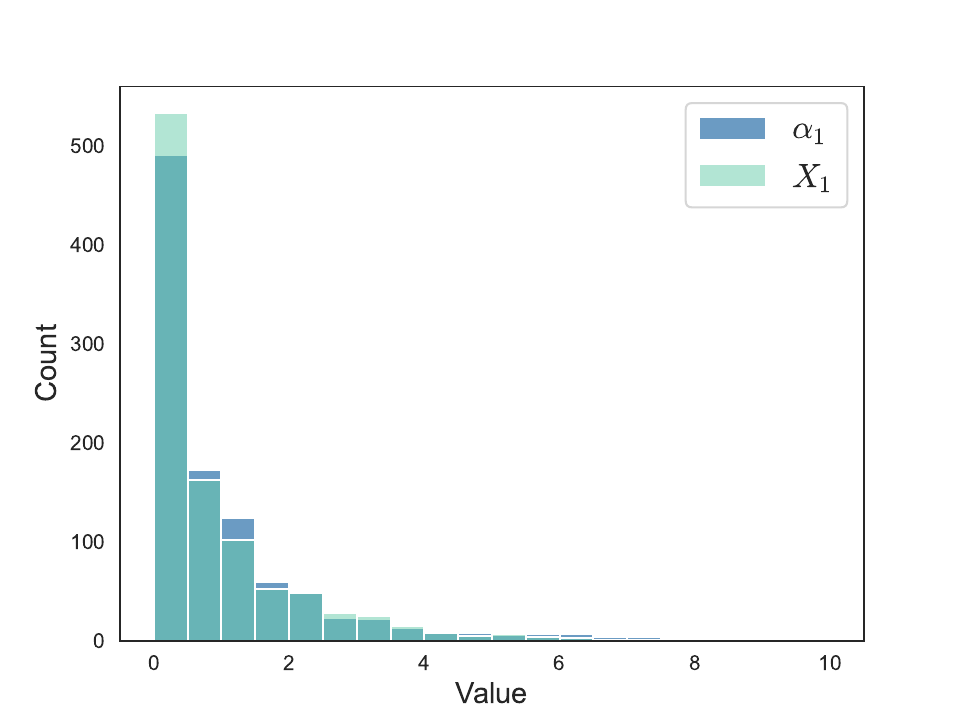}%
    %\caption{This is a duck}%
  \end{minipage}\hfil
  \begin{minipage}[ht]{.247\linewidth}
    \includegraphics[width=\linewidth]{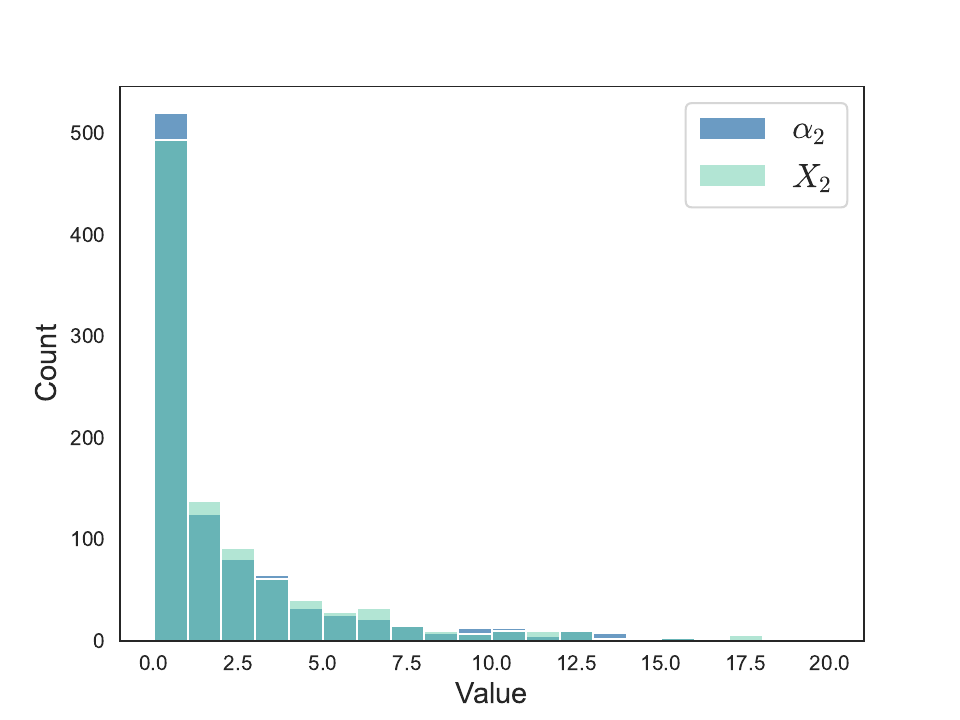}%
    %\caption{This is another duck}%
  \end{minipage}\hfil 
  \begin{minipage}[ht]{.247\linewidth}
    \includegraphics[width=\linewidth]{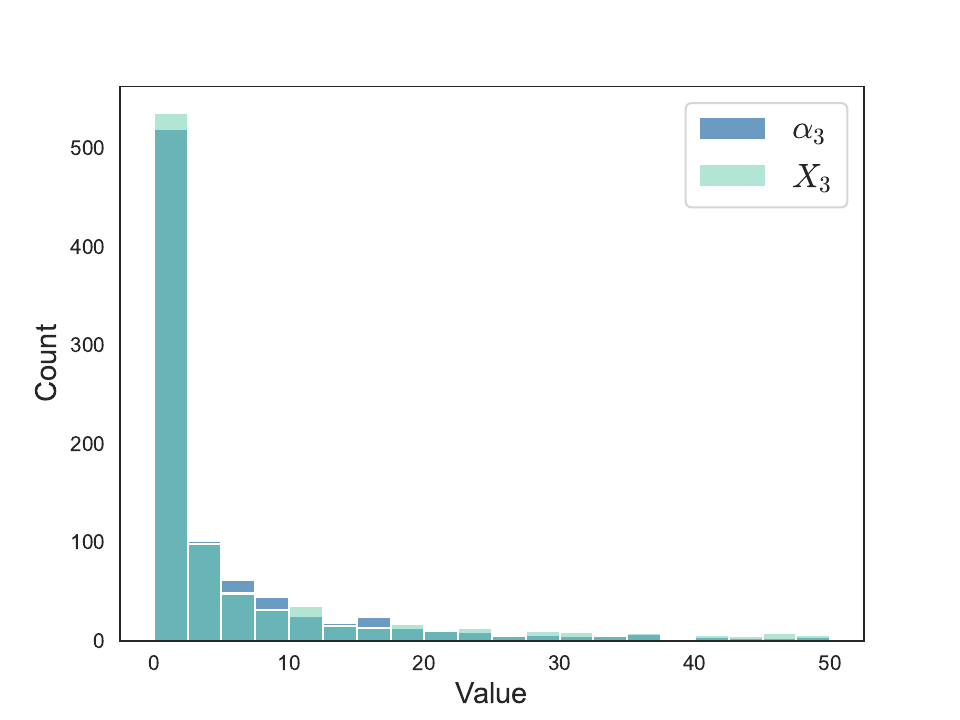}%
    %\caption{But then the ducks arrived}%
  \end{minipage}
  \begin{minipage}[ht]{.247\linewidth}
    \includegraphics[width=\linewidth]{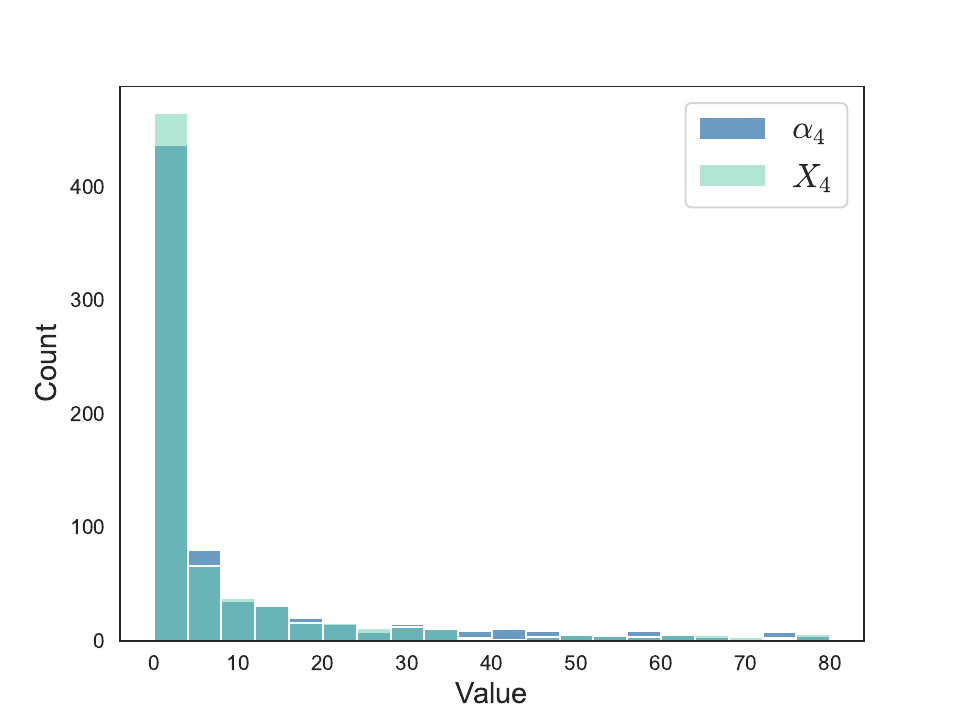}%
    %\caption{This is another duck}%
  \end{minipage}
  \caption{Comparison of the marginal distributions between $\alpha_t$ and $X_t$, for $t \in \{1, 2, 3, 4\}$ from left to right. Here, we set $n = 500$, $k = 3$, $\lambda_n = n^{(k - 1) / 2}$, and run tensor power iteration from random initialization on independent datasets for 1000 times. }
\label{fig:comparison2}
\end{figure}

\subsubsection*{Setting III: $\mathbf{n = 1000, k = 3}$}

Simulation results are plotted as Figure \ref{fig:comparison3}. 

\begin{figure}[ht]
  \begin{minipage}[ht]{.247\linewidth}
    \includegraphics[width=\linewidth]{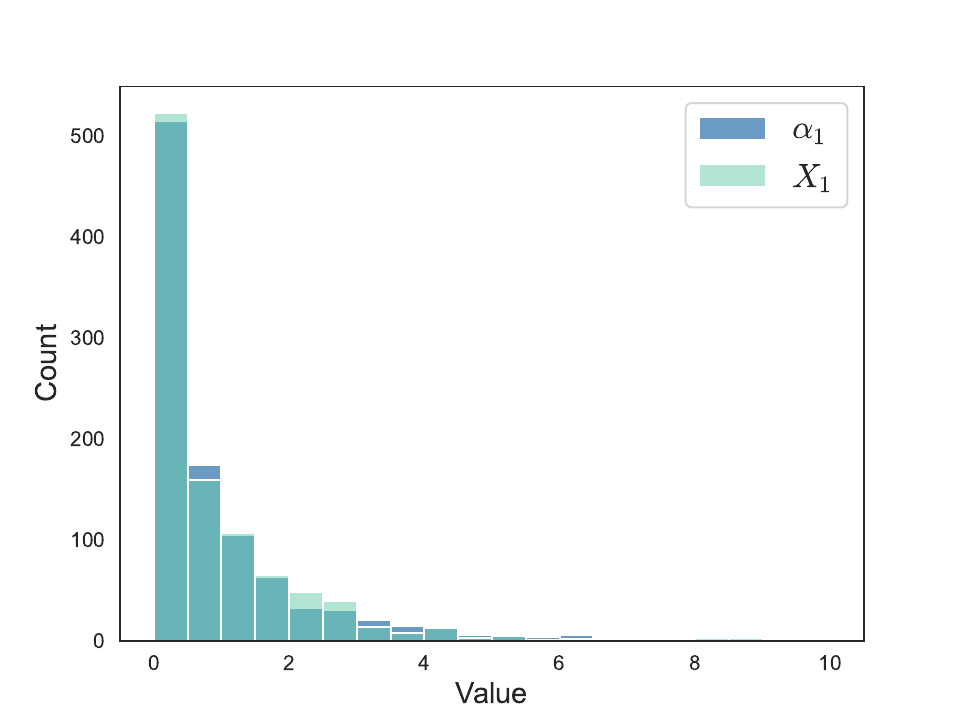}%
    %\caption{This is a duck}%
  \end{minipage}\hfil
  \begin{minipage}[ht]{.247\linewidth}
    \includegraphics[width=\linewidth]{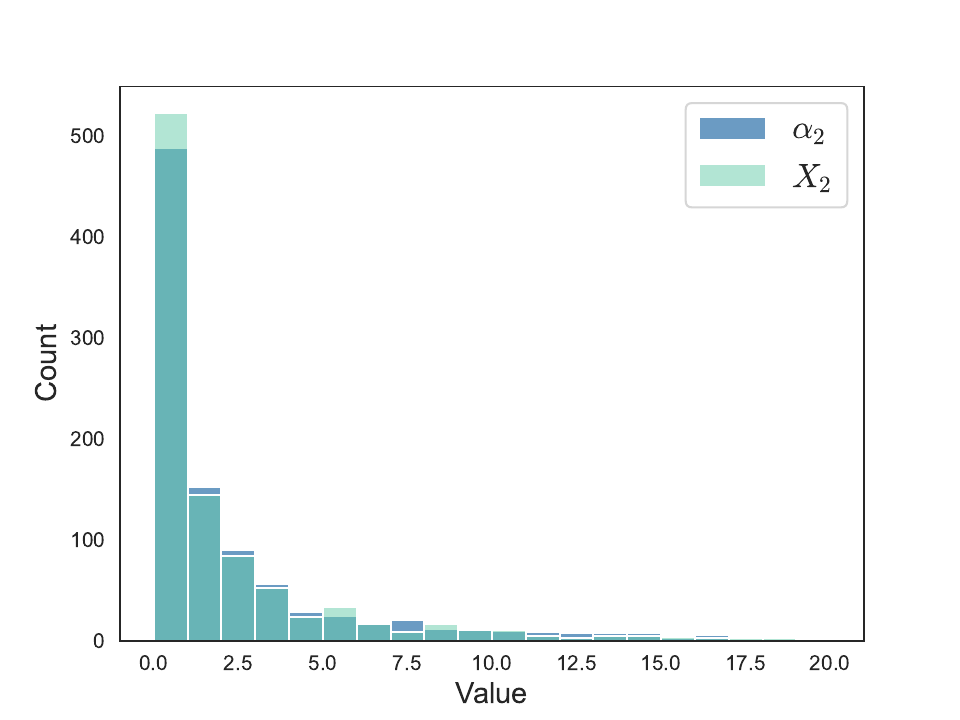}%
    %\caption{This is another duck}%
  \end{minipage}\hfil 
  \begin{minipage}[ht]{.247\linewidth}
    \includegraphics[width=\linewidth]{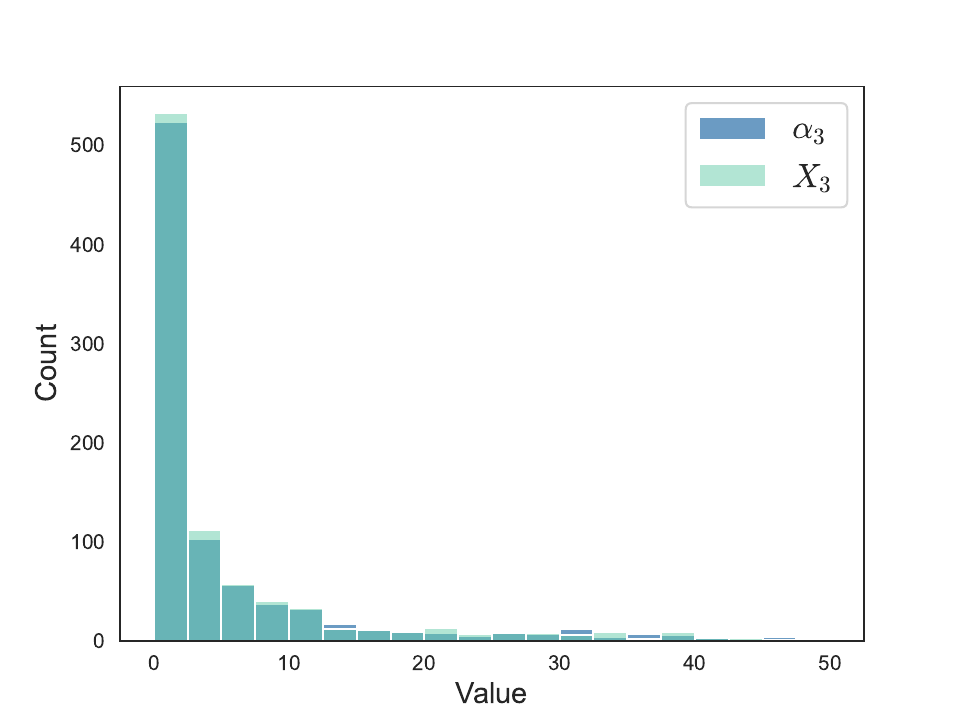}%
    %\caption{But then the ducks arrived}%
  \end{minipage}
  \begin{minipage}[ht]{.247\linewidth}
    \includegraphics[width=\linewidth]{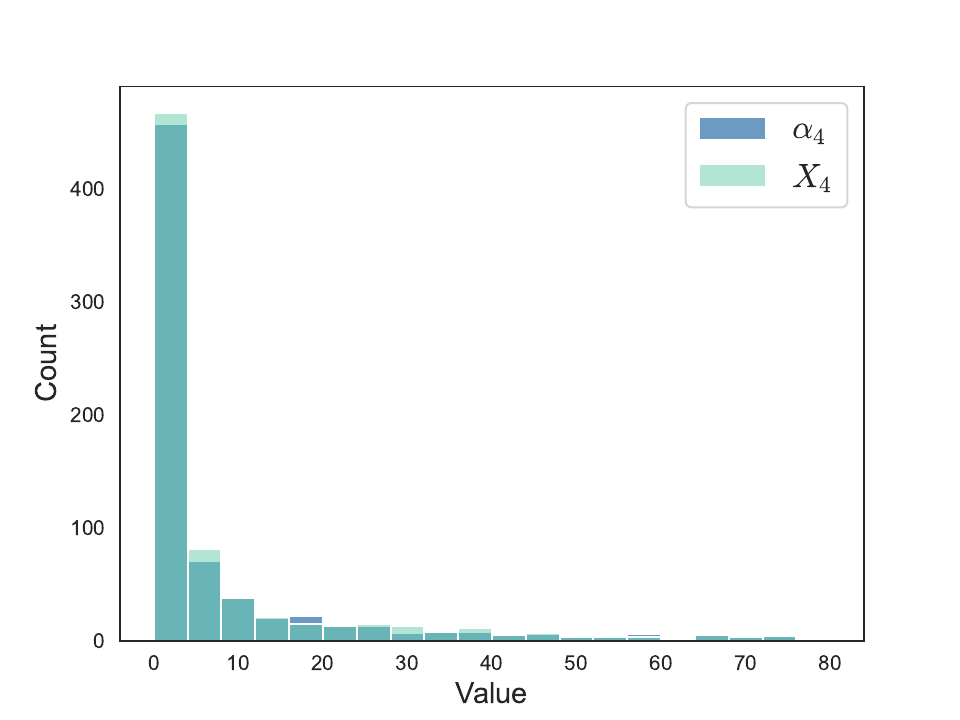}%
    %\caption{This is another duck}%
  \end{minipage}
  \caption{Comparison of the marginal distributions between $\alpha_t$ and $X_t$, for $t \in \{1, 2, 3, 4\}$ from left to right. Here, we set $n = 1000$, $k = 3$, $\lambda_n = n^{(k - 1) / 2}$, and run tensor power iteration from random initialization on independent datasets for 1000 times. }
\label{fig:comparison3}
\end{figure}

\subsubsection*{Setting IV: $\mathbf{n = 100, k = 4}$}  

Simulation results are plotted as Figure \ref{fig:comparison4}. 

\begin{figure}[ht]
  \begin{minipage}[ht]{.247\linewidth}
    \includegraphics[width=\linewidth]{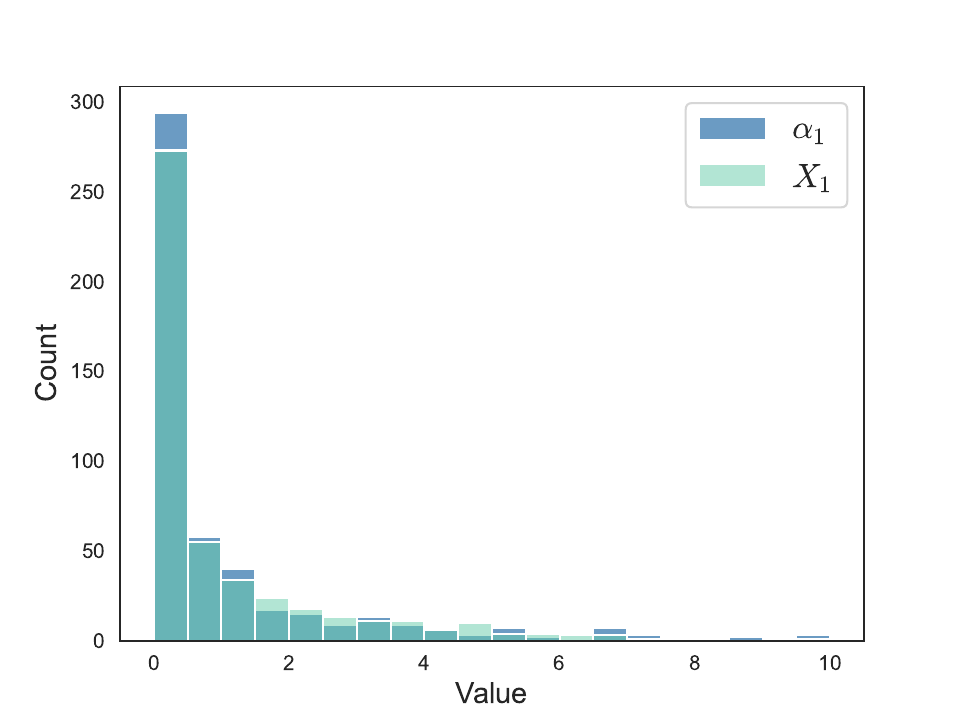}%
    %\caption{This is a duck}%
  \end{minipage}\hfil
  \begin{minipage}[ht]{.247\linewidth}
    \includegraphics[width=\linewidth]{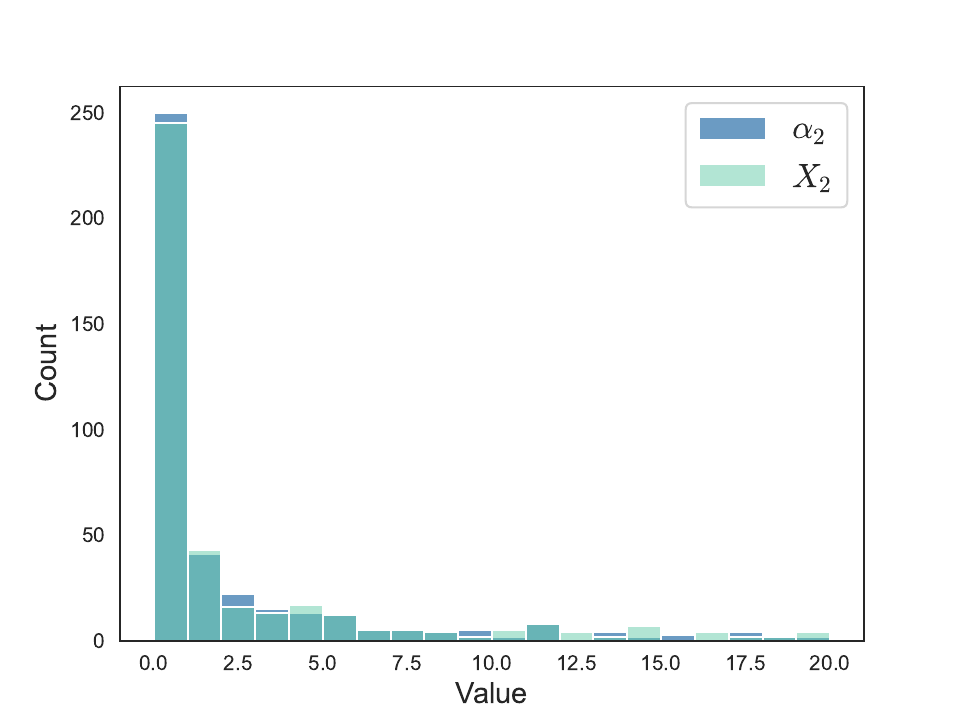}%
    %\caption{This is another duck}%
  \end{minipage}\hfil 
  \begin{minipage}[ht]{.247\linewidth}
    \includegraphics[width=\linewidth]{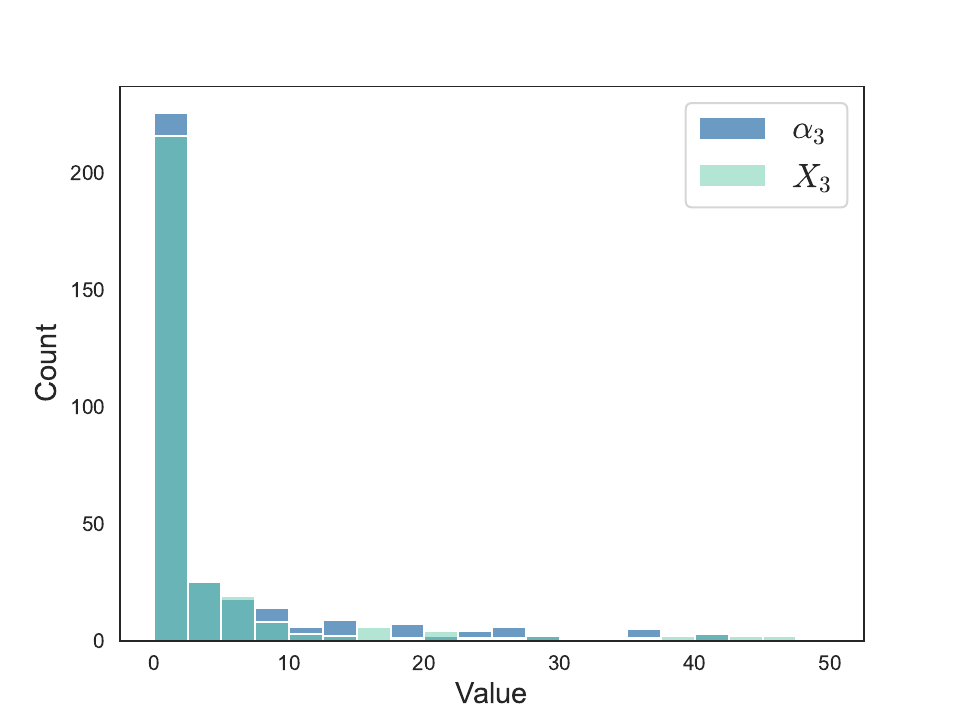}%
    %\caption{But then the ducks arrived}%
  \end{minipage}
  \begin{minipage}[ht]{.247\linewidth}
    \includegraphics[width=\linewidth]{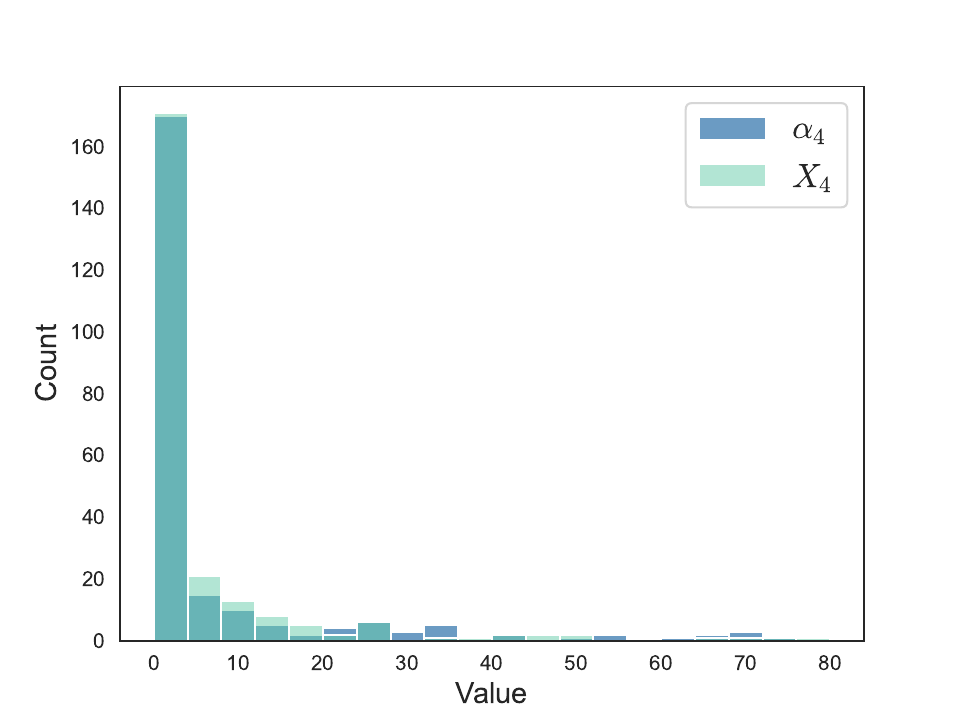}%
    %\caption{This is another duck}%
  \end{minipage}
  \caption{Comparison of the marginal distributions between $\alpha_t$ and $X_t$, for $t \in \{1, 2, 3, 4\}$ from left to right. Here, we set $n = 100$, $k = 4$, $\lambda_n = n^{(k - 1) / 2}$, and run tensor power iteration from random initialization on independent datasets for 1000 times. }
\label{fig:comparison4}
\end{figure}

\subsubsection*{Setting IV: $\mathbf{n = 200, k = 4}$} 

Simulation results are plotted as Figure \ref{fig:comparison5}. 

\begin{figure}[ht]
  \begin{minipage}[ht]{.247\linewidth}
    \includegraphics[width=\linewidth]{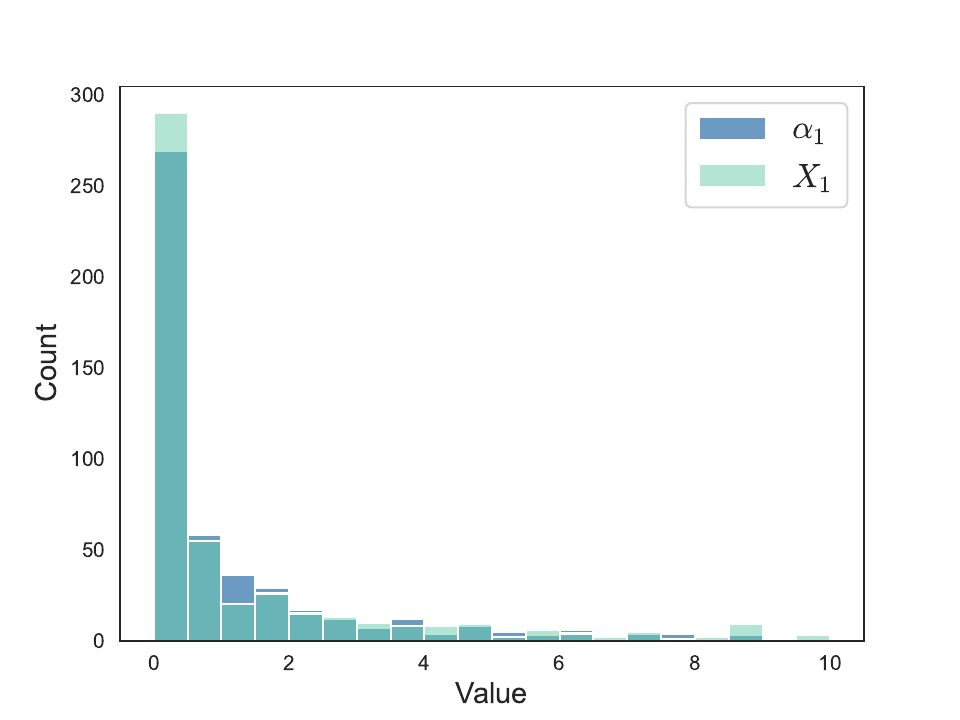}%
    %\caption{This is a duck}%
  \end{minipage}\hfil
  \begin{minipage}[ht]{.247\linewidth}
    \includegraphics[width=\linewidth]{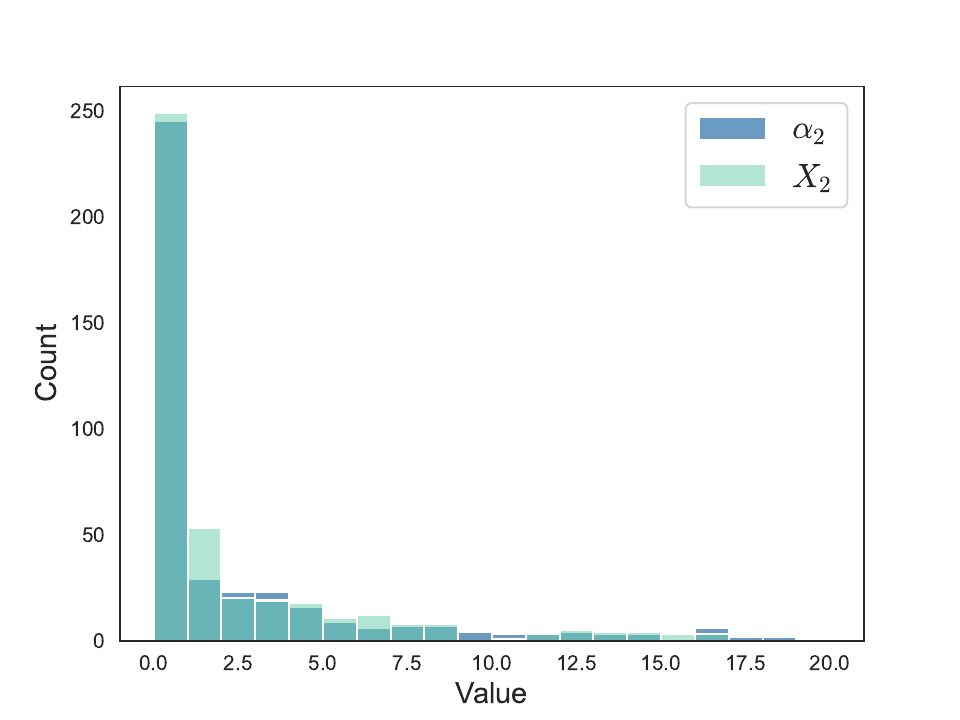}%
    %\caption{This is another duck}%
  \end{minipage}\hfil 
  \begin{minipage}[ht]{.247\linewidth}
    \includegraphics[width=\linewidth]{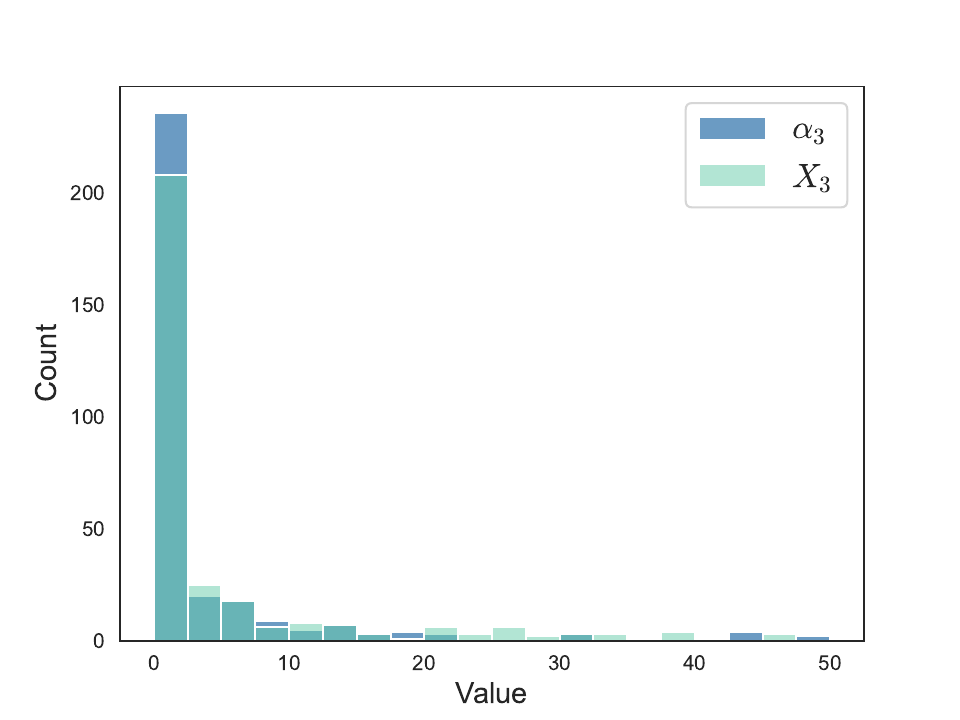}%
    %\caption{But then the ducks arrived}%
  \end{minipage}
  \begin{minipage}[ht]{.247\linewidth}
    \includegraphics[width=\linewidth]{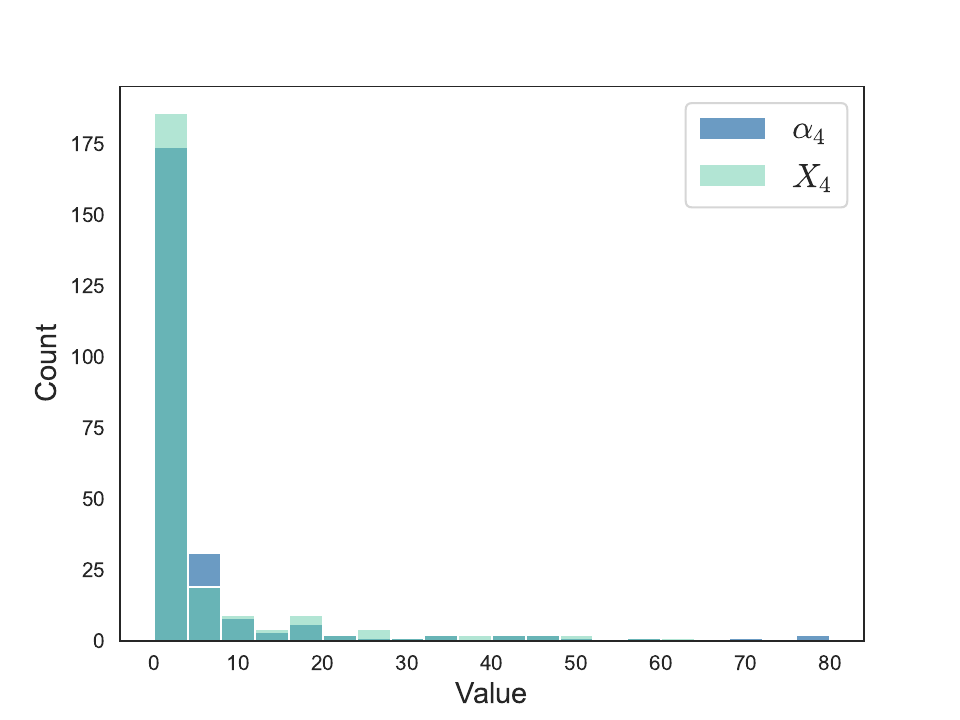}%
    %\caption{This is another duck}%
  \end{minipage}
  \caption{Comparison of the marginal distributions between $\alpha_t$ and $X_t$, for $t \in \{1, 2, 3, 4\}$ from left to right. Here, we set $n = 200$, $k = 4$, $\lambda_n = n^{(k - 1) / 2}$, and run tensor power iteration from random initialization on independent datasets for 1000 times. }
\label{fig:comparison5}
\end{figure}

\modif{\subsection{Experiments with different stopping thresholds}
\label{sec:stopping-rule-extra}
In this section, we adjust the stopping threshold value, and test the proposed stopping criterion under these values. 
To be specific, we consider here thresholds $0.3$ and $0.7$, and conduct the corresponding experiments. 
Counterparts of Figure \ref{fig:stopping-rule} are presented as Figures \ref{fig:stopping-rule-0.3} and \ref{fig:stopping-rule-0.7}. These figures suggest that the effectiveness of the stopping rule is not sensitive to the choice of the stopping threshold.} 

\begin{figure}[ht]
	\centering
	\includegraphics[width=0.9\linewidth]{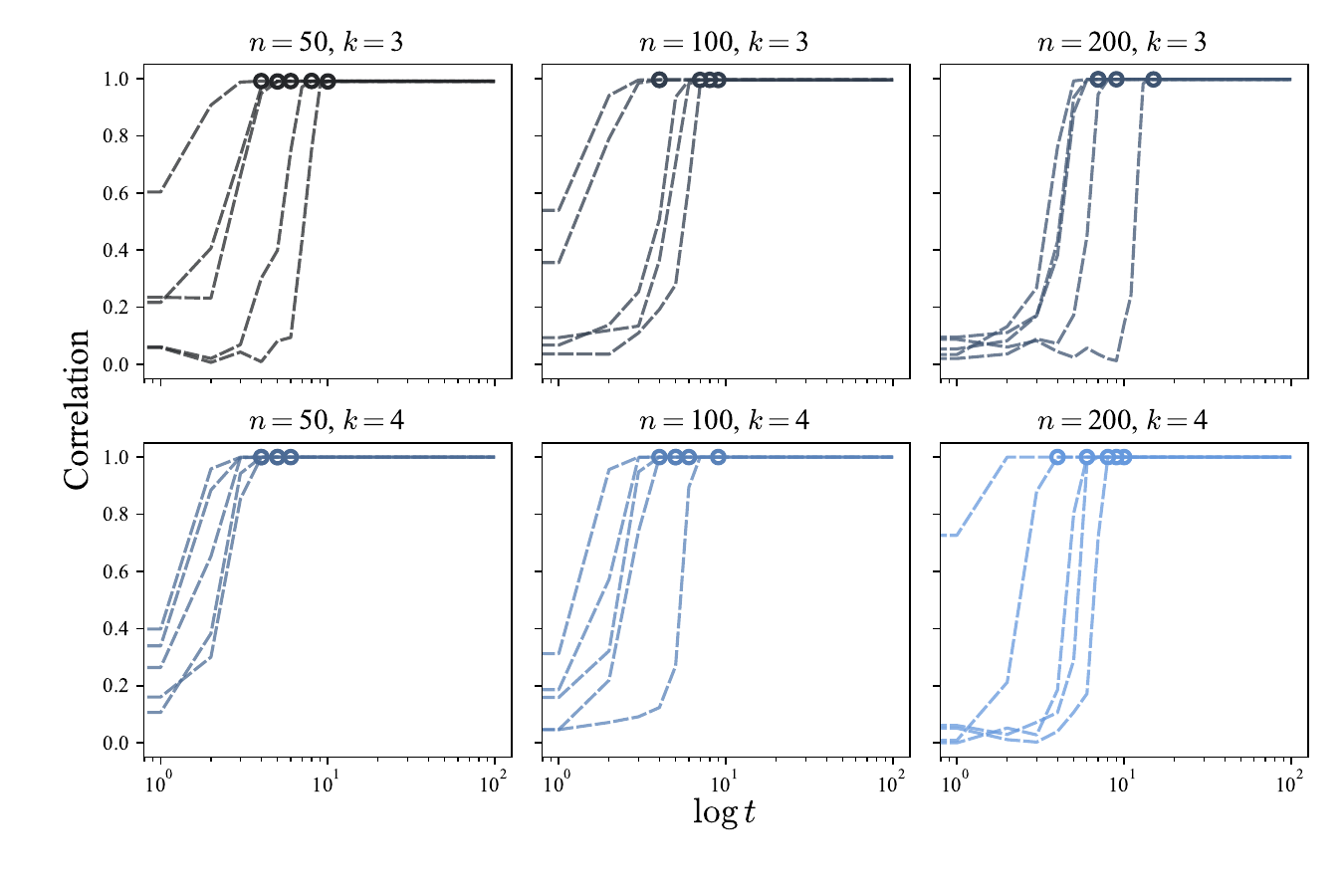}
	\caption{Illustration of the effectiveness of the stopping rule with stopping threshold $0.3$. }
	\label{fig:stopping-rule-0.3}
\end{figure}

\begin{figure}[ht]
	\centering
	\includegraphics[width=0.9\linewidth]{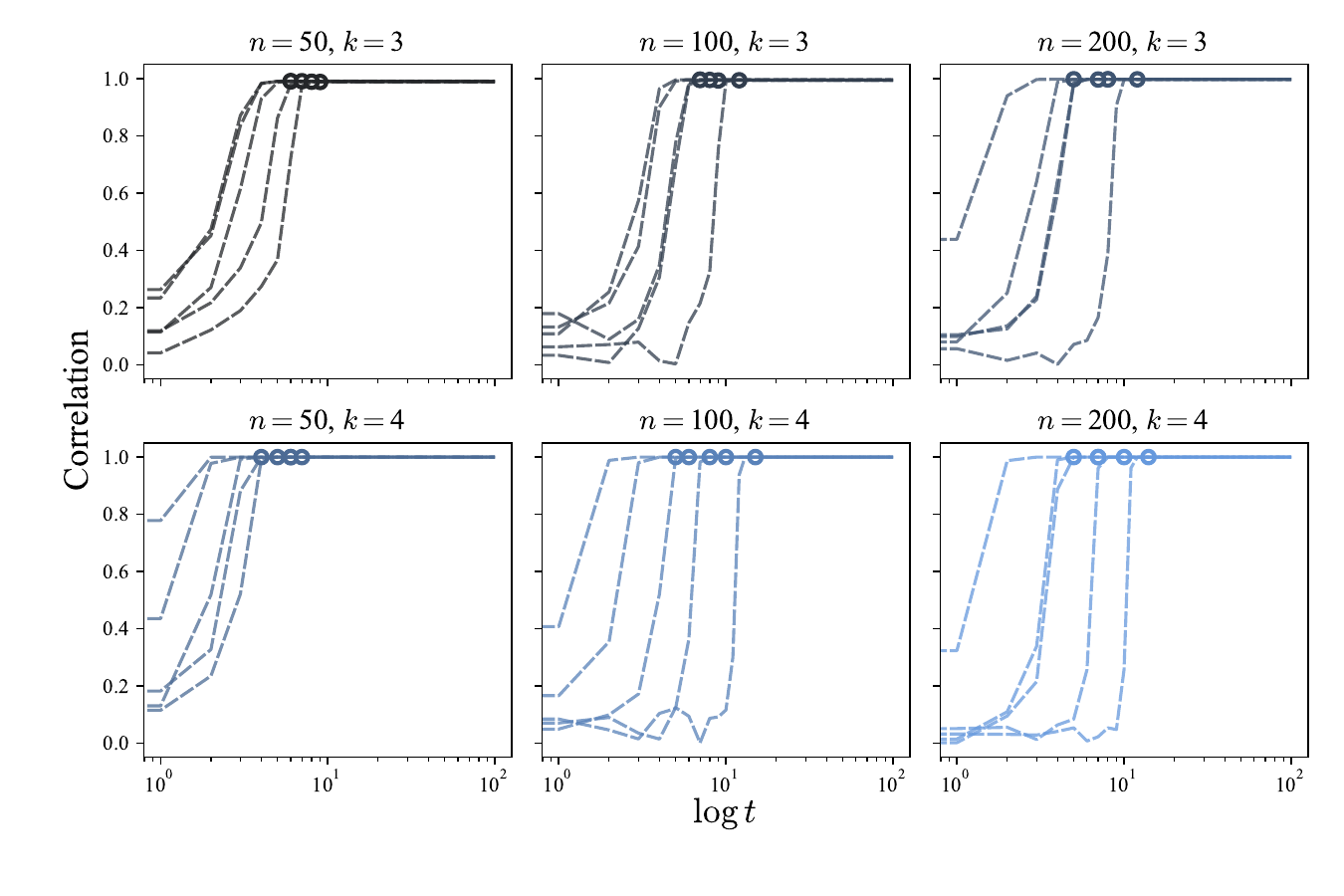}
	\caption{Illustration of the effectiveness of the stopping rule with stopping threshold $0.7$. }
	\label{fig:stopping-rule-0.7}
\end{figure}

\end{document}